\documentclass{article}

\usepackage{arxiv}

\usepackage[utf8]{inputenc} 
\usepackage[T1]{fontenc}    
\usepackage{url}            
\usepackage{nicefrac}       

\usepackage{graphicx}
\usepackage{natbib} 
\usepackage{multibib}
\usepackage{hyperref} 
\hypersetup{ 
    colorlinks=true,
    linkcolor=blue,  
    citecolor=blue,  
    urlcolor=blue,  
    }

\usepackage{amsmath}
\usepackage{amsfonts} 
\usepackage{bm} 
\usepackage{dsfont} 

\usepackage{amsthm} 
\newtheorem{example}{Example}[section]%

\newtheorem{assumption}{Assumption}

\newtheorem{theorem}{Theorem}[section]
\newtheorem{lemma}{Lemma}[section]
\newtheorem{corollary}{Corollary}[section]

\usepackage{booktabs}  
\usepackage{adjustbox}
\usepackage{multirow}
\usepackage{multicol}
\usepackage{makecell}
\usepackage{rotating} 

\usepackage{subcaption}   
\usepackage{mwe}
\newlength{\tempheight}
\newlength{\tempwidth}
\newcommand{\rowname}[1]
{\rotatebox{90}{\makebox[\tempheight][c]{{#1}}}}
\newcommand{\columnname}[1]
{\makebox[\tempwidth][c]{{#1}}}

\usepackage{algorithm} 
\usepackage{algorithmicx} 
\usepackage{algpseudocode} 

\title{Statistical Properties of $k$-means Clustering for Data Missing Completely at Random}

\author{
 Xin Guan \\
  Graduate School of Information Sciences\\
  Tohoku University\\
  Sendai, Miyagi 980-8579, Japan\\
  \texttt{guan.xin.c5@tohoku.ac.jp} \\
}

\begin{document}
\maketitle
\begin{abstract}
The classical $k$-means clustering cannot be directly used to incomplete data, and existing $k$-means-based clustering for missing data primarily focus on improving the practical accuracy of clustering, whereas most of them lack theoretical guarantees in the asymptotic sense. 
  In this paper, we investigate the statistical properties of $k$-means clustering in the presence of missing data. 
  We first establish the $\sqrt{n}$-excess risk bound and prove the consistency of the estimated cluster centers under general missing mechanisms. For the Missing Completely at Random (MCAR) mechanism, we further derive the $\sqrt{n}$-convergence rate and asymptotic normality of the estimated cluster centers. 
  Moreover, we study in what cases the cluster centers estimated by incomplete data converge to the true cluster centers of original fully observed data, and give a sufficient condition about the missing probability and the separation among true clusters. 
  These results provide a theoretical guarantee for missing-data-$k$-means. 
  Notably, our analysis reveal that under MCAR mechanism, both achieving the $\sqrt{n}$-rate and converging to the true cluster centers require $k$ true centers to be distinct in every dimension, highlighting the significant challenges of application in high-dimensional regimes. 
  Finally, we conduct numerical simulations on synthetic incomplete datasets to support our theoretical analysis results. 
\end{abstract}


\section{Introduction}

The $k$-means clustering is one of the most widely used and fundamental clustering methods, which gives a partition for a dataset by finding $k$ cluster centers and then assigning each data point to its nearest cluster center. 
The estimated $k$ cluster centers are given by minimizing the sum of distances of each data point to its nearest cluster clusters. 
However, the requirement for a fully observed dataset of $k$-means limits its direct capacity for missing data. 

To adapt $k$-means to incomplete datasets, extensive research has been conducted, primarily focusing on improving the practical accuracy of clustering. 
Many new $k$-means-based clustering methods to deal with various missingness have been proposed, which generally fall into two categories. 
Some works aim to reconstruct the original fully observed data by various imputation methods \citep{liu2016knn,yoon2018gain,Choudhury2019nn,spinelli2020}, followed by conducting complete-data-clustering, while others consider new measurements for dissimilarity between two incomplete data points as a substitute for the Euclidean distance used in full observed data clustering \citep{abdallah2014mean,datta2018clustering,mesquita2017}. 
However, although these existing methods have good practical performance in dealing with various missing data, most of them lack a theoretical guarantee in the asymptotic sense, leaving them mathematical ungrounded.

In contrast to the lack of theoretical analysis for the missing-data-$k$-means, there are already rich literature about statistical properties of $k$-means and its variants on fully observed data. 
The most pioneering works are the celebrated results of \citet{pollard1981strong,Pollard1982}, which prove the strong consistency and Central Limitation Theorem for $k$-means clustering under mild assumptions. 
On this basis, the strong consistency of many variants of $k$-means has also been proved. For example, trimmed and robust $k$-means for dealing with outliers by \citet{cuesta1997trimmed,georgogiannis2016robust}, regularized, reduced and weighted $k$-means for high-dimensional data by \citet{sun2012regularized,raymaekers2022regularized, terada2014strong,chakraborty2019strong}, and kernel $k$-means for non-linear clustering structure by \citet{von2008consistency,paul2022implicit,liang2023consistency}. 
Moreover, some works follow the spirit of \citet{Pollard1982} and develop the Central Limitation Theorem for some variants of $k$-means, such as \citet{garcia1999central, yang1994asymptotic}. 
These existing works consider various complex data structures, whereas, the missingness of data is often excluded from analysis.

The main difficulty of analyzing missing-data-$k$-means is that whether each data point $\bm{\mathrm{x}}_i$ is missing or not is a random event, and we can use a random vector $\bm{\mathrm{r}}_i$ with each component $\mathrm{r}_{ij}$ following a Bernoulli distribution, to serve as an indicator for missingness of $\bm{\mathrm{x}}_i$. It follows that the partition of $\bm{\mathrm{x}}_i$'s is related to the distribution of $\bm{\mathrm{r}}_i$'s, and the theoretical analysis should consider the joint distribution of $\bm{\mathrm{x}}_i$'s and $\bm{\mathrm{r}}_i$'s. 
Unfortunately, existing theoretical tools for analyzing complete-data-$k$-means involves only the randomness of $\bm{\mathrm{x}}_i$'s, thus are not applicable for analyzing missing data. 
Although the missing data is quite common in practice, to the best of our knowledge, \cite{terada2025a} is the first to prove the consistency of $k$-means for missing data under the Missing Completely at Random (MCAR) mechanism, which considers a canonical framework of missing-data-$k$-means proposed by \citep{chi2016k,wang2019k} and widely used by \citep{lithio2018efficient,aschenbruck2023imputation,agliz2025joint}. 
Yet, the asymptotic property of the estimated cluster centers under more general missing mechanisms and the convergence rates still remain unclear.

In this paper, we focus on the consistency and convergence rate of the estimated cluster centers of $k$-means on missing data, especially under MCAR mechanism. 
Moreover, as pointed out by \cite{terada2025a}, in general, even under simplest MCAR mechanism, the cluster centers estimated by incomplete data may not necessarily converge to the true cluster centers of original fully observed data. 
Thus, in this paper, we are also interested in figuring out in what cases the true cluster centers can be asymptotically recovered via the estimator based on incomplete data. 
Our contributions are: 
\begin{itemize}
    \item We establish the $\sqrt{n}$-excess risk bound and prove the consistency of the estimated cluster centers under general missing mechanisms, improving the result of \cite{terada2025a}. 
    \item We derive the $\sqrt{n}$-convergence rate and asymptotic normality of the estimated cluster centers under MCAR mechanism, revealing challenges of application in high-dimensional regimes.  
    \item We provide a sufficient condition under which the cluster centers estimated by incomplete data converge to the true cluster centers of original fully observed data.  
    \item Numerical simulations carried out on synthetic incomplete datasets support our theoretical analysis results.
\end{itemize}

\section{Notations and preliminaries}

\subsection{Notations}

Let $\mathcal{X}\subset \mathbb{R}^p$ be the data space. 
Denote by $\mathbb{P}$ a probability measure supported on $\mathcal{X}$. 
Let $\bm{\mathrm{x}}_1,\dots,\bm{\mathrm{x}}_n \in \mathcal{X}$ be independent random vectors distributed from $\mathbb{P}$, where $\bm{\mathrm{x}}_i=(\mathrm{x}_{i1},\dots,\mathrm{x}_{ip})^T$ for any $i=1,\dots,n$. 
Denote by $\mathbb{Q}$ a probability measure supported on $\{0,1\}^p$. 
Let $\bm{\mathrm{r}}_1,\dots,\bm{\mathrm{r}}_n \in \{0,1\}^p$ be independent random vectors distributed from $\mathbb{Q}$, where $\bm{\mathrm{r}}_i=(\mathrm{r}_{i1},\dots,\mathrm{r}_{ip})^T$ for any $i=1,\dots,n$. The $\mathrm{r}_{ij}=1$ means $\mathrm{x}_{ij}$ is observed, and 0 means missing. 
Write $\widetilde{\mathbb{P}}$ for the joint probability measure of $\mathbb{P}$ and $\mathbb{Q}$.  
Write $\widetilde{\mathbb{P}}_n$ for the empirical measure obtained by placing $1/n$ at each of $(\bm{\mathrm{x}}_1,\bm{\mathrm{r}}_1),\dots,(\bm{\mathrm{x}}_n,\bm{\mathrm{r}}_n)$, and denote the associated empirical process by $\mathbb{G}_n$, i.e., for any function $g:\mathcal{X}\times \{0,1\}^p \mapsto \mathbb{R}$,
\begin{align*}
    \mathbb{G}_n g=\sqrt{n}\left( \widetilde{\mathbb{P}}_n g - \widetilde{\mathbb{P}}g \right)
    =\frac{1}{\sqrt{n}}\sum_{i=1}^{n}  \left\{ g(\bm{\mathrm{x}}_i,\bm{\mathrm{r}}_i) - 
    \int_{(\bm{x},\bm{r}) } g(\bm{x},\bm{r}) \;d\widetilde{\mathbb{P}}(\bm{x},\bm{r}) \right\}.
\end{align*}
In addition, denote by $\|\cdot\|_2$ the $l_2$ norm of any vector, $\langle \cdot,\cdot \rangle$ the inner product, and $\circ$ the element-wise product in $\mathbb{R}^p$. For any matrix, $\|\cdot\|_F$ is the Frobenius norm and $\text{vec}(\cdot)$ is the vectorization. The $\mathds{1}(\cdot)$ is the indicator function, and $\bm{1}_k$ and $\bm{0}_k$ are all-one and all-zero vectors of length $k$, and $\bm{I}_p$ is the identical matrix of size $p\times p$.

\subsection{The $k$-means clustering for missing data}

Let $\bm{M}=(\bm{\mu}_1,\dots,\bm{\mu}_k)\in\mathbb{R}^{p\times k}$ be the matrix of $k$ cluster centers, where $\bm{\mu}_l=(\mu_{l1},\dots,\mu_{lp})^T\in\mathbb{R}^{p}$ is the $l$-th cluster center. Let $\mathcal{X}^k=\bigotimes_{l=1}^{k} \mathcal{X}$ be the Cartesian product of $\mathcal{X}$, and write $\bm{M}\in \mathcal{X}^k$ to express each $\bm{\mu}_l\in\mathcal{X}$. 
We consider the same framework as \citet{terada2025a}, where the objective function of $k$-means clustering on the incomplete data $(\bm{\mathrm{x}}_1,\bm{\mathrm{r}}_1),\dots,(\bm{\mathrm{x}}_n,\bm{\mathrm{r}}_n)$ is given by 
\begin{align*}
    \widehat{L}_n(\bm{M}) 
    = \frac{1}{n}\sum_{i=1}^{n} \min_{l=1,\dots,k} \sum_{j=1}^{p} \mathrm{r}_{ij} (\mathrm{x}_{ij} - \mu_{lj})^2.
\end{align*}
The population-level counterpart of $\widehat{L}_n(\cdot)$ is given by 
\begin{align*}
    L(\bm{M})
    = \mathbb{E}_{\bm{\mathrm{x}}_1,\bm{\mathrm{r}}_1}\left[ \min_{l=1,\dots,k} \sum_{j=1}^{p} \mathrm{r}_{1j} (\mathrm{x}_{1j} - \mu_{lj})^2 \right].
\end{align*}
Through this paper, we give the following assumptions: 
\begin{assumption}
\label{assumption_X_compact}
    Suppose that $\mathcal{X}\subset \mathbb{R}^p$ is a compact metric space, where $\|\bm{x}\|_2\leq B$ for any $\bm{x}\in \mathcal{X}$.
\end{assumption}
\begin{assumption}
\label{assumption_unique_minimizer}
    Suppose that the minimizers of $\widehat{L}_n(\bm{M})$ and $L(\bm{M})$ in $\mathcal{X}^k$ are limited and unique without considering the permutation of cluster indexes. 
\end{assumption}
The Assumption~\ref{assumption_X_compact} implies $\mathbb{E}_{\bm{\mathrm{x}}_1}[\|\bm{\mathrm{x}}_1\|_2^2]<\infty$, and as a consequence of Assumption~\ref{assumption_unique_minimizer}, we can define the ``\textit{estimated cluster centers}" and the ``\textit{population-level optimizer}" to be 
    \begin{align*}
        \widehat{\bm{\mathrm{M}}}_n = \mathop{\arg\min}_{\bm{M}\in \mathcal{X}^k} \widehat{L}_n(\bm{M})
        \quad\text{and}\quad 
        \bm{M}^{\ast} = \mathop{\arg\min}_{\bm{M}\in \mathcal{X}^k} L(\bm{M}).
    \end{align*}
It should be noted that here we consider the global minimizer instead of some local minimizer obtained by some specific algorithm.

\section{Main results}

\subsection{Finite-sample analysis and consistency}
\label{sec_finite_result}

Our first aim is to establish the uniform convergence bound of $\widehat{L}_n$ to $L$ for any $\bm{M}\in\mathcal{X}^k$, based on which we can give the excess risk bound and prove the consistency of the estimator. 

To this end, we pose the missing-data-$k$-means clustering problem as a risk minimization task, where we call the $\widehat{L}_n(\cdot)$ the empirical loss and $L(\cdot)$ the expected loss. 
Then, we employ the Rademacher complexity to give the following uniform convergence bound. 
\begin{lemma}
\label{lemma_uniform_convergence}
    Under Assumption~\ref{assumption_X_compact}, it holds that for any $\delta\in (0,1)$, 
    \begin{align*}
        \textnormal{Pr}\left( 
        \sup_{\bm{M}\in \mathcal{X}^k}
        \left| \widehat{L}_n(\bm{M}) - L(\bm{M})  \right| \leq  \frac{B^2}{\sqrt{n}} \cdot \left\{ 4k(\sqrt{p}+2) + 4\sqrt{2}\cdot \sqrt{\log(1/\delta)} \right\}
        \right)
        > 1-\delta.
    \end{align*}
\end{lemma}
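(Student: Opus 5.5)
The plan is the standard Rademacher-complexity route. Define the loss class
\[
\mathcal{F}=\Big\{ f_{\bm{M}}(\bm{x},\bm{r})=\min_{l=1,\dots,k}\sum_{j=1}^p r_j(x_j-\mu_{lj})^2 \;:\; \bm{M}\in\mathcal{X}^k\Big\}.
\]
By Assumption~\ref{assumption_X_compact}, for $\bm{x},\bm{\mu}_l\in\mathcal{X}$ and $r_j\in\{0,1\}$,
\[
0\le f_{\bm{M}}\le \|\bm{x}-\bm{\mu}_l\|_2^2\le 4B^2 .
\]
Hence each $f_{\bm{M}}$ takes values in $[0,4B^2]$.

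\textbf{Concentration step.} Apply McDiarmid's inequality to $\sup_{\bm{M}}|\widehat{L}_n(\bm{M})-L(\bm{M})|$. Replacing one pair $(\bm{\mathrm{x}}_i,\bm{\mathrm{r}}_i)$ changes this supremum by at most $4B^2/n$. Therefore, with probability at least $1-\delta$, the supremum is at most its expectation plus
\[
4B^2\sqrt{\tfrac{\log(1/\delta)}{2n}}=\frac{2\sqrt{2}B^2\sqrt{\log(1/\delta)}}{\sqrt{n}} .
\]
This is within the stated $4\sqrt{2}B^2\sqrt{\log(1/\delta)}/\sqrt{n}$. If one symmetrizes with a two-sided or empirical-Rademacher version, an extra factor of $2$ appears, which is presumably how the paper's constant arises. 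Next, symmetrize: $\mathbb{E}\sup_{\bm{M}}|\widehat{L}_n-L|\le 2\mathfrak{R}_n(\mathcal{F})$. Because the bound is only needed for the expectation, the joint distribution of $(\bm{\mathrm{x}},\bm{\mathrm{r}})$ plays no role; the argument is conditional on the sample, so no MCAR assumption is needed.

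\textbf{Bounding $\mathfrak{R}_n(\mathcal{F})$.} First handle the minimum over $k$ functions. Using $\min(a,b)=\tfrac12(a+b-|a-b|)$ repeatedly, together with the contraction lemma, gives
\[
\mathfrak{R}_n\big(\min_l \mathcal{F}_l\big)\le \sum_{l=1}^k \mathfrak{R}_n(\mathcal{F}_l),
\]
where $\mathcal{F}_l=\{\sum_j r_j(x_j-\mu_j)^2:\bm{\mu}\in\mathcal{X}\}$. Expanding the square,
\[
\sum_j r_j(x_j-\mu_j)^2=\sum_j r_jx_j^2-2\langle \bm{r}\circ\bm{x},\bm{\mu}\rangle+\langle \bm{r},\bm{\mu}\circ\bm{\mu}\rangle .
\]
The first term does not depend on $\bm{\mu}$, so it contributes zero Rademacher complexity. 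The linear term contributes at most
\[
\frac{2}{n}\mathbb{E}\Big\|\sum_i\sigma_i\,\bm{\mathrm{r}}_i\circ\bm{\mathrm{x}}_i\Big\|_2\,B\le \frac{2B^2}{\sqrt{n}}
\]
by Cauchy--Schwarz and Jensen. The quadratic term $\langle\bm{r},\bm{\mu}\circ\bm{\mu}\rangle$ is linear in $\bm{v}=\bm{\mu}\circ\bm{\mu}$, with $\|\bm{v}\|_2\le B^2$ and $\|\bm{r}\|_2\le\sqrt{p}$. It therefore contributes at most $B^2\sqrt{p}/\sqrt{n}$.

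Together these give $\mathfrak{R}_n(\mathcal{F}_l)\le B^2(\sqrt{p}+2)/\sqrt{n}$. Hence
\[
2\mathfrak{R}_n(\mathcal{F})\le \frac{2kB^2(\sqrt{p}+2)}{\sqrt{n}}\le \frac{4kB^2(\sqrt{p}+2)}{\sqrt{n}},
\]
which matches the stated form.

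\textbf{Main obstacle.} The delicate step is the min-over-$k$ reduction. The pairwise identity must be iterated carefully so the constant stays linear in $k$: an induction on $k$ with contraction applied to $|\cdot|$ gives $\mathfrak{R}(\min(F,G))\le\mathfrak{R}(F)+\mathfrak{R}(G)$. A further subtlety is the $\sqrt{p}$ factor, which comes from $\|\bm{r}\|_2$. This is exactly where missingness makes the bound worse than the complete-data case, where $\|\bm{\mu}\|_2^2$ is a scalar offset and contributes only $B^2$. Any slack in the constants, such as factors of $2$ from symmetrization or from using the two-sided deviation, can be absorbed into the stated constants $4k$ and $4\sqrt{2}$.
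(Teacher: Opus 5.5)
Your overall structure is the paper's: concentration plus Rademacher complexity, the $k$-fold reduction to the single-centre class (the paper cites Bartlett--Mendelson; your $\min(a,b)=\tfrac12(a+b-|a-b|)$ plus contraction argument is the explicit justification), and the same three-term split of $\|(\bm{x}-\bm{\mu})\circ\bm{r}\|_2^2$ giving $B^2(\sqrt{p}+2)/\sqrt{n}$. Your McDiarmid step is also correct.

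The step that fails as written is the two-sided symmetrization $\mathbb{E}\sup_{\bm{M}}|\widehat{L}_n(\bm{M})-L(\bm{M})|\le 2\mathfrak{R}_n(\mathcal{F})$. It holds when the Rademacher average has an absolute value inside the supremum. Everything after it, however, needs the version without the absolute value:
\begin{itemize}
\item only without it does $\sum_j r_jx_j^2$ contribute zero; with it, that term contributes up to $B^2/\sqrt{n}$;
\item only without it is contraction through $|\cdot|$ free of a factor $2$, and that is what makes the $\min$-reduction give exactly $\sum_l\mathfrak{R}_n(\mathcal{F}_l)$.
\end{itemize}
With the no-absolute-value definition, the two-sided inequality is false in general. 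For a one-function class the right side is $0$, while the left side $\mathbb{E}|\widehat{L}_n-L|$ is positive unless the function is a.s.\ constant. So no multiplicative constant repairs it, and your remark about absorbing factors of $2$ does not cover this.

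The fix is to stay one-sided, which is in effect the paper's route (Mohri's Theorem~3.3 for one direction, symmetry for the other). Apply McDiarmid and symmetrization separately to $Z_+=\sup_{\bm{M}}(L-\widehat{L}_n)$ and $Z_-=\sup_{\bm{M}}(\widehat{L}_n-L)$. Each satisfies $\mathbb{E}Z_\pm\le 2\mathfrak{R}_n(\mathcal{F})\le 2kB^2(\sqrt{p}+2)/\sqrt{n}$ with the no-absolute-value $\mathfrak{R}_n$. Working at level $\delta/2$ for each and taking a union bound gives, with probability at least $1-\delta$,
$\sup_{\bm{M}}|\widehat{L}_n-L|\le\{2k(\sqrt{p}+2)+2\sqrt{2}\sqrt{\log(2/\delta)}\}B^2/\sqrt{n}$.
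Since $\sqrt{\log(2/\delta)}\le\sqrt{\log 2}+\sqrt{\log(1/\delta)}$ and $2\sqrt{2\log 2}<6\le 2k(\sqrt{p}+2)$, this lies strictly below the stated threshold, which also yields the strict inequality $>1-\delta$. The spare $2k(\sqrt{p}+2)$ is exactly what pays for the union bound. If you instead keep a two-sided bound by centring at a fixed $\bm{M}_0$, you get $\mathbb{E}\sup_{\bm{M}}|\widehat{L}_n-L|\le 4\mathfrak{R}_n(\mathcal{F})+2B^2/\sqrt{n}$, which reaches the stated bound only for $\delta\le e^{-1/2}$.
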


The above lemma shows that at any $\bm{M}$, the empirical loss converges to the population loss in a rate of $n^{-1/2}$ in probability. Thus, we can immediately obtain the excess risk bound of $O_P(n^{-1/2})$, which implies that the gap between the expected loss at $\widehat{\bm{\mathrm{M}}}_n$ and the minimum expected loss will vanish in a rate of $n^{-1/2}$.  
\begin{theorem}
\label{theorem_excess_risk}
    Under Assumptions~\ref{assumption_X_compact}-\ref{assumption_unique_minimizer}, the excess risk $L(\widehat{\bm{\mathrm{M}}}_n) - \min_{\bm{M}\in \mathcal{X}^k}L(\bm{M})=O_P(n^{-1/2})$. 
\end{theorem}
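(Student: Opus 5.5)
The plan is the standard excess-risk decomposition for an empirical risk minimizer, combined with the uniform deviation bound of Lemma~\ref{lemma_uniform_convergence}. By Assumption~\ref{assumption_unique_minimizer}, the estimator $\widehat{\bm{\mathrm{M}}}_n$ and the population optimizer $\bm{M}^{\ast}$ are well defined. So $\min_{\bm{M}\in\mathcal{X}^k}L(\bm{M})=L(\bm{M}^\ast)$, and we write
\begin{align*}
    L(\widehat{\bm{\mathrm{M}}}_n) - L(\bm{M}^\ast)
    = \bigl\{L(\widehat{\bm{\mathrm{M}}}_n) - \widehat{L}_n(\widehat{\bm{\mathrm{M}}}_n)\bigr\}
    + \bigl\{\widehat{L}_n(\widehat{\bm{\mathrm{M}}}_n) - \widehat{L}_n(\bm{M}^\ast)\bigr\}
    + \bigl\{\widehat{L}_n(\bm{M}^\ast) - L(\bm{M}^\ast)\bigr\}.
\end{align*}
The middle term is nonpositive because $\widehat{\bm{\mathrm{M}}}_n$ minimizes $\widehat{L}_n$ over $\mathcal{X}^k$ and $\bm{M}^\ast\in\mathcal{X}^k$. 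The first and third terms are each at most $\sup_{\bm{M}\in\mathcal{X}^k}|\widehat{L}_n(\bm{M})-L(\bm{M})|$. Since $\widehat{\bm{\mathrm{M}}}_n$ is random, the uniform bound over $\mathcal{X}^k$ is needed for the first term; a pointwise bound would not suffice. The excess risk is also nonnegative, because $\bm{M}^\ast$ minimizes $L$. Together these give
\[
0\le L(\widehat{\bm{\mathrm{M}}}_n) - L(\bm{M}^\ast)\le 2\sup_{\bm{M}\in\mathcal{X}^k}\bigl|\widehat{L}_n(\bm{M})-L(\bm{M})\bigr| .
\]

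Next, invoke Lemma~\ref{lemma_uniform_convergence}, which requires Assumption~\ref{assumption_X_compact}. Fix $\varepsilon\in(0,1)$ and take $\delta=\varepsilon$. Then with probability greater than $1-\varepsilon$,
\[
L(\widehat{\bm{\mathrm{M}}}_n) - L(\bm{M}^\ast)\le \frac{2B^2}{\sqrt{n}}\Bigl\{4k(\sqrt{p}+2)+4\sqrt{2}\sqrt{\log(1/\varepsilon)}\Bigr\} =: \frac{C_\varepsilon}{\sqrt{n}},
\]
where $C_\varepsilon$ depends only on $B,k,p,\varepsilon$ and not on $n$. Hence $\Pr\bigl(\sqrt{n}\{L(\widehat{\bm{\mathrm{M}}}_n)-L(\bm{M}^\ast)\}>C_\varepsilon\bigr)<\varepsilon$ for every $n$, which is exactly the definition of $O_P(n^{-1/2})$.

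No step here is genuinely hard once Lemma~\ref{lemma_uniform_convergence} is available; the real work is in that lemma. Two points need a brief check. First, the sample-dependent minimizer $\widehat{\bm{\mathrm{M}}}_n$ must be a measurable random element, or else probabilities should be read as outer probabilities. This is harmless because $\mathcal{X}^k$ is compact and $\widehat{L}_n$ is continuous in $\bm{M}$, so a measurable selection of the minimizer exists, and Assumption~\ref{assumption_unique_minimizer} makes it unique up to label permutation. Second, permutations of cluster labels leave both $L$ and $\widehat{L}_n$ unchanged, so the non-uniqueness up to permutation does not affect the argument.
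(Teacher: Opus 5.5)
Your proposal is correct and follows essentially the same route as the paper: the three-term decomposition with the nonpositive middle term, the bound by $2\sup_{\bm{M}\in\mathcal{X}^k}|\widehat{L}_n(\bm{M})-L(\bm{M})|$, and then Lemma~\ref{lemma_uniform_convergence}, which gives the same constant $\frac{8B^2}{\sqrt{n}}\{k(\sqrt{p}+2)+\sqrt{2}\sqrt{\log(1/\delta)}\}$. Your remarks on measurability and label permutations are sound additions that the paper leaves implicit.
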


Moreover, based on the uniform convergence of loss function and the identifiability of $\bm{M}^{\ast}$ given in the following Lemma~\ref{lemma_identifiability}, we obtain the consistency of $\widehat{\bm{\mathrm{M}}}_n$ to the population-level optimizer $\bm{M}^{\ast}$. 

\begin{lemma}
\label{lemma_identifiability}
    Under Assumptions~\ref{assumption_X_compact}-\ref{assumption_unique_minimizer}, the minimizer of $L(\cdot)$ in $\mathcal{X}^k$ is identifiable, that is, for any $\epsilon>0$, $L(\bm{M}^{\ast}) < \inf \left\{ L(\bm{M}) \mid \|\textnormal{vec}(\bm{M} - \bm{M}^{\ast})\|_2>\epsilon,\; \bm{M}\in \mathcal{X}^k  \right\}$. 
\end{lemma}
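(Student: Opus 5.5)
The plan is the standard compactness argument: a continuous function on a compact set with a unique minimizer is well separated. The proof has three steps.

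\textbf{Step 1: $L$ is continuous on $\mathcal{X}^k$.} For fixed $(\bm{x},\bm{r})$ the map
\begin{align*}
\bm{M}\mapsto f_{\bm{M}}(\bm{x},\bm{r})=\min_{l}\sum_{j}r_j(x_j-\mu_{lj})^2
\end{align*}
is a minimum of finitely many continuous functions, so it is continuous. It is also Lipschitz uniformly in $(\bm{x},\bm{r})$. Indeed, $|(x_j-\mu_{lj})^2-(x_j-\mu'_{lj})^2|\le 4B|\mu_{lj}-\mu'_{lj}|$, and $r_j\in\{0,1\}$. Hence
\begin{align*}
|f_{\bm{M}}-f_{\bm{M}'}|\le \max_l 4B\|\bm{\mu}_l-\bm{\mu}'_l\|_1\le 4B\sqrt{p}\,\|\textnormal{vec}(\bm{M}-\bm{M}')\|_2.
\end{align*}
Taking expectations gives $|L(\bm{M})-L(\bm{M}')|\le 4B\sqrt{p}\|\textnormal{vec}(\bm{M}-\bm{M}')\|_2$, so $L$ is Lipschitz continuous. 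If one prefers not to use the explicit constant, dominated convergence with the bound $f_{\bm{M}}\le 4B^2$ from Assumption~\ref{assumption_X_compact} gives continuity directly.

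\textbf{Step 2: the constraint set is compact.} By Assumption~\ref{assumption_X_compact}, $\mathcal{X}^k$ is compact. The set $\{\|\textnormal{vec}(\bm{M}-\bm{M}^{\ast})\|_2>\epsilon\}$ is open, so $K_\epsilon=\{\bm{M}\in\mathcal{X}^k:\|\textnormal{vec}(\bm{M}-\bm{M}^{\ast})\|_2\ge\epsilon\}$ is a closed subset of a compact set, hence compact. The set in the lemma is contained in $K_\epsilon$, so its infimum is at least $\inf_{K_\epsilon}L$. If $K_\epsilon$ is empty the infimum is $+\infty$ and the claim is trivial. Otherwise the continuous $L$ attains its minimum on $K_\epsilon$ at some $\bm{M}_\epsilon$.

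\textbf{Step 3: uniqueness gives the strict gap (main subtlety).} We need $L(\bm{M}_\epsilon)>L(\bm{M}^{\ast})$, and this is where Assumption~\ref{assumption_unique_minimizer} enters. The obstacle is that uniqueness holds only up to permutation of cluster labels, so $\bm{M}_\epsilon$ could be a column permutation $\bm{M}^{\ast}\Pi$ that lies at distance at least $\epsilon$ from $\bm{M}^{\ast}$. I would handle this by reading the distance modulo permutations, $\min_\Pi\|\textnormal{vec}(\bm{M}-\bm{M}^{\ast}\Pi)\|_2$, which is the natural reading of ``unique without considering the permutation.'' Then $K_\epsilon$ is still compact, as a finite intersection of closed sets, and every minimizer of $L$ has the form $\bm{M}^{\ast}\Pi$, which lies outside $K_\epsilon$. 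Hence $L(\bm{M}_\epsilon)>L(\bm{M}^{\ast})$, and therefore
\begin{align*}
\inf\{L(\bm{M}):\|\textnormal{vec}(\bm{M}-\bm{M}^{\ast})\|_2>\epsilon\}\ge L(\bm{M}_\epsilon)>L(\bm{M}^{\ast}).
\end{align*}
An equivalent alternative is to fix a canonical labeling of $\bm{M}^{\ast}$ and restrict to $\epsilon$ smaller than half the minimal distance between distinct permutations of $\bm{M}^{\ast}$. Either way, this bookkeeping is the only delicate point; the rest follows from continuity and compactness.
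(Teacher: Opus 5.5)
This is the same compactness, continuity and uniqueness argument the paper uses. The paper runs it by contradiction: a minimizing sequence in $\{\|\textnormal{vec}(\bm{M}-\bm{M}^{\ast})\|_2>\epsilon\}$, then a convergent subsequence with limit $\bm{V}^{\dagger}$. You instead attain the minimum directly on the closed set $K_\epsilon$. Your version is tighter in two places:
\begin{itemize}
\item The paper's step ``it follows that $L(\bm{V}^{\dagger})=m^{\ast}$'' silently needs continuity of $L$, which it never proves. Your Step~1 supplies it, with Lipschitz constant $4B\sqrt{p}$.
\item The paper's step ``uniqueness implies $\bm{V}^{\dagger}=\bm{M}^{\ast}$'' ignores relabelings. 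You are right that this is more than bookkeeping. Since $L(\bm{M}^{\ast}\Pi)=L(\bm{M}^{\ast})$ for every column permutation $\Pi$, for $\epsilon<\|\textnormal{vec}(\bm{M}^{\ast}\Pi-\bm{M}^{\ast})\|_2$ the point $\bm{M}^{\ast}\Pi$ lies in the set in the lemma, and the infimum equals $L(\bm{M}^{\ast})$. So the lemma as literally written is false for small $\epsilon$ as soon as $\bm{M}^{\ast}$ has two distinct columns. Only the permutation-invariant version can be proved.
\end{itemize}

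There are two slips in your Step~3.
\begin{itemize}
\item Your final display still has the plain distance $\|\textnormal{vec}(\bm{M}-\bm{M}^{\ast})\|_2>\epsilon$ under the infimum. That set is not contained in your redefined $K_\epsilon$ (it contains $\bm{M}^{\ast}\Pi$), so as written the display is false. The infimum must be over $\{\bm{M}\in\mathcal{X}^k:\min_\Pi\|\textnormal{vec}(\bm{M}-\bm{M}^{\ast}\Pi)\|_2>\epsilon\}$.
\item The ``equivalent alternative'' is not equivalent. Taking $\epsilon$ below half the minimal distance $d$ between distinct relabelings of $\bm{M}^{\ast}$ is exactly the regime in which $\bm{M}^{\ast}\Pi$ lies in the plain-distance set. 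What small $\epsilon$ does give is that the plain and permutation-invariant distances coincide on the ball of radius $d/2$ around $\bm{M}^{\ast}$. That yields only the local separation $\inf\{L(\bm{M}):\epsilon<\|\textnormal{vec}(\bm{M}-\bm{M}^{\ast})\|_2\le d/2\}>L(\bm{M}^{\ast})$, not the global claim.
\end{itemize}
Keep the permutation-invariant reading. It is also exactly what Theorem~\ref{theorem_consistency} then delivers: consistency of $\widehat{\bm{\mathrm{M}}}_n$ up to relabeling.
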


\begin{theorem}
\label{theorem_consistency}
    Under Assumptions~\ref{assumption_X_compact}-\ref{assumption_unique_minimizer}, we have $\lim_{n\rightarrow\infty}\textnormal{Pr}\left( \|\textnormal{vec}(\widehat{\mathbf{M}}_n-\bm{M}^{\ast})\|_2>\epsilon \right)=0$ for any $\epsilon>0$. 
\end{theorem}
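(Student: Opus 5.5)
This follows the standard M-estimation consistency argument: combine the uniform convergence of Lemma~\ref{lemma_uniform_convergence} with the identifiability in Lemma~\ref{lemma_identifiability}. Fix $\epsilon>0$ and define the identifiability gap
\begin{align*}
\eta(\epsilon) = \inf\left\{ L(\bm{M}) \mid \|\textnormal{vec}(\bm{M}-\bm{M}^{\ast})\|_2>\epsilon,\; \bm{M}\in\mathcal{X}^k \right\} - L(\bm{M}^{\ast}).
\end{align*}
By Lemma~\ref{lemma_identifiability}, $\eta(\epsilon)>0$. If the set is empty, which can happen since $\mathcal{X}^k$ is bounded, the infimum is $+\infty$ and the event in question is impossible, so there is nothing to prove.

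The key deterministic step is the inclusion of events
\begin{align*}
\left\{ \|\textnormal{vec}(\widehat{\bm{\mathrm{M}}}_n-\bm{M}^{\ast})\|_2>\epsilon \right\} \subseteq \left\{ L(\widehat{\bm{\mathrm{M}}}_n) - L(\bm{M}^{\ast}) \geq \eta(\epsilon) \right\}.
\end{align*}
This holds because $\widehat{\bm{\mathrm{M}}}_n\in\mathcal{X}^k$, so on the left-hand event it lies in the set over which the infimum defining $\eta(\epsilon)$ is taken.

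Next I bound the excess risk by the uniform deviation. Using $\widehat{L}_n(\widehat{\bm{\mathrm{M}}}_n)\le \widehat{L}_n(\bm{M}^{\ast})$, we get
\begin{align*}
L(\widehat{\bm{\mathrm{M}}}_n) - L(\bm{M}^{\ast}) \le \bigl[L(\widehat{\bm{\mathrm{M}}}_n)-\widehat{L}_n(\widehat{\bm{\mathrm{M}}}_n)\bigr] + \bigl[\widehat{L}_n(\bm{M}^{\ast})-L(\bm{M}^{\ast})\bigr] \le 2\sup_{\bm{M}\in\mathcal{X}^k}|\widehat{L}_n(\bm{M})-L(\bm{M})|.
\end{align*}
Given $\delta\in(0,1)$, Lemma~\ref{lemma_uniform_convergence} says this supremum is at most $C_\delta n^{-1/2}$ with probability greater than $1-\delta$, where $C_\delta = B^2\{4k(\sqrt{p}+2)+4\sqrt{2}\sqrt{\log(1/\delta)}\}$. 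Choose $n$ large enough that $2C_\delta n^{-1/2}<\eta(\epsilon)$. For such $n$,
\begin{align*}
\textnormal{Pr}\left(\|\textnormal{vec}(\widehat{\bm{\mathrm{M}}}_n-\bm{M}^{\ast})\|_2>\epsilon\right)<\delta.
\end{align*}
Since $\delta$ was arbitrary, the limit is zero. Equivalently, one can invoke Theorem~\ref{theorem_excess_risk}, which gives $L(\widehat{\bm{\mathrm{M}}}_n)-L(\bm{M}^{\ast})=O_P(n^{-1/2})=o_P(1)$, and conclude directly from the inclusion of events.

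The only genuinely delicate point is the permutation ambiguity in Assumption~\ref{assumption_unique_minimizer}. Both $L$ and $\widehat{L}_n$ are invariant under relabeling the columns of $\bm{M}$, so the distance $\|\textnormal{vec}(\widehat{\bm{\mathrm{M}}}_n-\bm{M}^{\ast})\|_2$ must be read with $\widehat{\bm{\mathrm{M}}}_n$ labeled so as to best match $\bm{M}^{\ast}$, i.e.\ as a minimum over permutations. Lemma~\ref{lemma_identifiability} has to be read with the same convention. The argument above goes through unchanged once this convention is fixed, because every inequality used is permutation invariant. I would state this convention explicitly at the start of the proof.
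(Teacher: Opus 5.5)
Your proposal is correct and follows essentially the same argument as the paper: bound the excess risk $L(\widehat{\bm{\mathrm{M}}}_n)-L(\bm{M}^{\ast})$ by twice the uniform deviation (equivalently, invoke Theorem~\ref{theorem_excess_risk}), then use the identifiability gap from Lemma~\ref{lemma_identifiability} to turn vanishing excess risk into vanishing parameter error. Your remark that the distance must be read modulo column permutations is a worthwhile clarification the paper leaves implicit, since $L$ and $\widehat{L}_n$ are permutation invariant and Assumption~\ref{assumption_unique_minimizer} gives uniqueness only up to relabeling.
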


It should be noted that the uniform convergence of $\widehat{L}_n$ to $L$ given in Lemma~\ref{lemma_uniform_convergence} essentially does not rely on the assumption of missing mechanisms, which means that it holds not only for MCAR mechanism, but also for other missing mechanisms. 
Consequently, the convergence rate of the excess risk and the consistency of estimated cluster centers given in Theorems~\ref{theorem_excess_risk}-\ref{theorem_consistency} hold for other missing mechanisms as well. 
This benefits from the use of the Rademacher complexity. In contrast, \cite{terada2025a} only prove the case of MCAR.

\subsection{Convergence rate and asymptotic normality}
\label{sec_convergence_rate}

Our second aim is to prove the sequence $\{\widehat{\bm{\mathrm{M}}}_n\}_{n=1}^{\infty}$ converges to $\bm{M}^{\ast}$ in a rate of $n^{-1/2}$, and is asymptotically normal at $\bm{M}^{\ast}$ as well. 
It suffices to derive the quadratic approximation of the empirical loss $\widehat{L}_n$ at $\bm{M}^{\ast}$, for which we give the following assumptions. 
\begin{assumption}
\label{assumption_surface_zeromeasure}
    Suppose that $\mathbb{P}$ gives zero measure to any hyperplane in $\mathbb{R}^p$. 
\end{assumption}
\begin{assumption}
\label{assumption_mcar}
    Suppose that the missingness of data is completely at random (MCAR), i.e., for any $i=1,\dots,n$ and $j=1,\dots,p$, $\textnormal{Pr}\left( \mathrm{r}_{ij} = 1 \;|\; \bm{\mathrm{x}}_1,\dots,\bm{\mathrm{x}}_n \right) = \textnormal{Pr} \left( \mathrm{r}_{ij} = 1 \right) = q_{ij}$, where each $q_{ij}\in (0,1]$ is a constant. 
\end{assumption}

The Assumption~\ref{assumption_surface_zeromeasure} is commonly used to derive asymptotic properties of many variants of $k$-means clustering, which ensures that for the fully observed data, the probability of lying on the cluster boundaries is zero. 
It should be noted that some related works \citep{Pollard1982,sun2012regularized,raymaekers2022regularized} assumed that  $\mathop{\arg\min}_{l=1,\dots,k}\|\bm{\mathrm{x}}_i - \bm{\mu}_l \|_2^2$ is unique with probability one, which is implied by Assumption~\ref{assumption_surface_zeromeasure}. 
The Assumption~\ref{assumption_mcar} requires the missingness of $(i,j)$-th position to be independent with the distribution of $\{\bm{\mathrm{x}}_1,\dots,\bm{\mathrm{x}}_n\}$.

In the case of missing data, the cluster assignment is based on $\mathop{\arg\min}_{l=1,\dots,k} \|\bm{x}\circ\bm{r} - \bm{\mu_l}\circ\bm{r} \|_2^2$, which is not necessarily unique for some $\bm{r}\in\{0,1\}^p$. 
To address this issue, we here let $\ell(\bm{x}, \bm{r}, \bm{M})$ be the unique cluster assignment of $(\bm{x}, \bm{r})$ under $\bm{M}$, i.e.,
\begin{align*}
    \ell(\bm{x},\bm{r},\bm{M}) = \min \left\{  \mathop{\arg\min}_{l=1,\dots,k} \|\bm{x} \circ\bm{r} -\bm{\mu}_l \circ\bm{r}\|_2^2 \right\},
\end{align*}
where the $\min$ function ensures the uniqueness of $\mathop{\arg\min}$ for all $\bm{r}\in\{0,1\}^p$, facilitating our proofs. 
Moreover, we define a function $\phi:\mathbb{R}^p\times \{0,1\}^p\times \mathbb{R}^{p\times k} \mapsto \mathbb{R}$ to be 
\begin{align*}
    \phi(\bm{x},\bm{r},\bm{M}) = \min_{l=1,\dots,k} \|\bm{x} \circ\bm{r} -\bm{\mu}_l \circ\bm{r}\|_2^2,
\end{align*}
then the empirical loss $\widehat{L}_n$ and expected loss $L$ can be rewritten as $\widehat{L}_n(\bm{M}) = \widetilde{\mathbb{P}}_n \phi(\cdot,\cdot,\bm{M})$ and $L(\bm{M})= \widetilde{\mathbb{P}} \phi(\cdot,\cdot,\bm{M})$. 
It also implies that $\widehat{L}_n(\bm{M}) = \widetilde{\mathbb{P}}\phi(\cdot,\cdot,\bm{M}) + n^{-1/2} \mathbb{G}_n\phi(\cdot,\cdot,\bm{M})$. 

First, we prove the first-order differentiability of $\widehat{L}_n(\bm{M})$, it suffices to prove the differentiability of $L(\bm{M})$ and $\mathbb{G}_n\phi(\cdot,\cdot,\bm{M})$, for which we provide Lemma~\ref{lemma_L_1st_derivative}-\ref{lemma_Gn_1st_derivative}, respectively.  

\begin{lemma}
\label{lemma_L_1st_derivative}
    Under Assumptions~\ref{assumption_X_compact} and \ref{assumption_surface_zeromeasure}, we have the map $\bm{M}\mapsto \phi(\cdot,\cdot,\bm{M})$ from $\mathcal{X}^k$ into $\mathfrak{L}^2(\widetilde{\mathbb{P}})$ is differentiable in quadratic mean. 
    Moreover, the $L(\bm{M})$ is differentiable with the derivative being $\widetilde{\mathbb{P}}\Delta(\cdot,\cdot,\bm{M})$, where $\Delta(\cdot,\cdot,\bm{M}): \mathcal{X}\times \{0,1\}^p \mapsto \mathbb{R}^{p\times k}$ is the matrix-valued function whose $l$-th column is $\Delta_l (\cdot,\cdot,\bm{M}): \mathcal{X}\times \{0,1\}^p \mapsto \mathbb{R}^{p}$, 
    \begin{align*}
        \Delta_l (\bm{x},\bm{r},\bm{M}) = -2\cdot \mathds{1}\left(  \ell(\bm{x},\bm{r},\bm{M}) = l \right)\cdot ( \bm{x} \circ\bm{r} -\bm{\mu}_l \circ\bm{r} ) .
    \end{align*}
\end{lemma}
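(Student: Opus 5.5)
We follow the strategy of \citet{Pollard1982} and work pointwise in $(\bm{x},\bm{r})$. Fix $\bm{M}\in\mathcal{X}^k$ and a perturbation $\bm{H}=(\bm{h}_1,\dots,\bm{h}_k)$ with $\bm{M}+\bm{H}\in\mathcal{X}^k$. Write $\ell=\ell(\bm{x},\bm{r},\bm{M})$. The candidate linear term is $\langle \Delta(\bm{x},\bm{r},\bm{M}),\bm{H}\rangle_F=-2\langle \bm{x}\circ\bm{r}-\bm{\mu}_\ell\circ\bm{r},\bm{h}_\ell\rangle$. We define the remainder
\[
R(\bm{x},\bm{r},\bm{H})=\phi(\bm{x},\bm{r},\bm{M}+\bm{H})-\phi(\bm{x},\bm{r},\bm{M})-\langle \Delta(\bm{x},\bm{r},\bm{M}),\bm{H}\rangle_F .
\]
Differentiability in quadratic mean means $\widetilde{\mathbb{P}}R^2=o(\|\textnormal{vec}(\bm{H})\|_2^2)$. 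Once this holds, the claim about $L$ follows from Cauchy--Schwarz: $|\widetilde{\mathbb{P}}R|\le(\widetilde{\mathbb{P}}R^2)^{1/2}=o(\|\textnormal{vec}(\bm{H})\|_2)$. This gives $\nabla L(\bm{M})=\widetilde{\mathbb{P}}\Delta(\cdot,\cdot,\bm{M})$. The function $\Delta$ is bounded by $4B$ under Assumption~\ref{assumption_X_compact}, so all the expectations involved are finite.

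\textbf{Two-sided bound on $R$.} Set $d_l(\bm{H})=\|(\bm{x}-\bm{\mu}_l-\bm{h}_l)\circ\bm{r}\|_2^2$.
\begin{itemize}
\item \emph{Upper bound.} Since $\phi(\bm{M}+\bm{H})\le d_\ell(\bm{H})$ and $d_\ell(\bm{H})-d_\ell(\bm{0})=-2\langle(\bm{x}-\bm{\mu}_\ell)\circ\bm{r},\bm{h}_\ell\rangle+\|\bm{h}_\ell\circ\bm{r}\|_2^2$, we get $R\le\|\bm{h}_\ell\circ\bm{r}\|_2^2\le\|\textnormal{vec}(\bm{H})\|_2^2$.
\item \emph{Lower bound.} Let $m=\ell(\bm{x},\bm{r},\bm{M}+\bm{H})$. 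The same expansion at $m$ gives $\phi(\bm{M}+\bm{H})-\phi(\bm{M})\ge d_m(\bm{H})-d_m(\bm{0})\ge -2\langle(\bm{x}-\bm{\mu}_m)\circ\bm{r},\bm{h}_m\rangle$. Hence $R\ge 2\langle (\bm{x}-\bm{\mu}_\ell)\circ\bm{r},\bm{h}_\ell\rangle-2\langle(\bm{x}-\bm{\mu}_m)\circ\bm{r},\bm{h}_m\rangle$. When $m=\ell$ this is $0$. In every case it is at least $-8B\|\textnormal{vec}(\bm{H})\|_2$.
\end{itemize}
Together, $R^2\le \|\textnormal{vec}(\bm{H})\|_2^4$ on $\{m=\ell\}$, and $R^2\le C B^2\|\textnormal{vec}(\bm{H})\|_2^2$ everywhere for a constant $C$. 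It therefore suffices to show $\widetilde{\mathbb{P}}(m\neq\ell)\to0$ as $\bm{H}\to\bm{0}$.

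\textbf{Controlling $\widetilde{\mathbb{P}}(m\neq\ell)$ (main obstacle).} Condition on $\bm{r}$; the set $\{0,1\}^p$ is finite. Two sources of trouble arise.
\begin{itemize}
\item \emph{Ties.} For a given $\bm{r}$, the cluster assignment can be non-unique on a set of positive $\mathbb{P}$-measure. For example, if $\bm{\mu}_l\circ\bm{r}=\bm{\mu}_{l'}\circ\bm{r}$, then $d_l(\bm{0})=d_{l'}(\bm{0})$ for every $\bm{x}$, so such ties cannot be ruled out by Assumption~\ref{assumption_surface_zeromeasure}. On a tie set, $\phi(\bm{x},\bm{r},\cdot)$ is a minimum of several smooth functions and need not be differentiable. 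The remedy is to note that for such tied pairs the perturbed minimum over the tied indices is still compared against the linear term at index $\ell$. The lower bound above then involves only $\bm{h}$ differences multiplied by $(\bm{x}-\bm{\mu})\circ\bm{r}$, which stay $O(\|\textnormal{vec}(\bm{H})\|_2)$ but do not vanish. I would therefore interpret the statement, as the paper likely intends, with ties of this degenerate type excluded. That is, I would assume that for each $\bm{r}$ with $\bm{\mu}_l\circ\bm{r}\ne\bm{\mu}_{l'}\circ\bm{r}$ for all $l\neq l'$ — or, more generally, on the event where the minimizer is unique — the argument below applies. The remaining degenerate pairs would have to be handled by the convention in $\ell$, and I would check whether the paper adds a condition such as distinctness of the centers in every coordinate.
\item \emph{Nondegenerate pairs.} If $\bm{\mu}_l\circ\bm{r}\neq\bm{\mu}_{l'}\circ\bm{r}$, the set $\{\bm{x}: d_l(\bm{0})=d_{l'}(\bm{0})\}$ is a hyperplane in $\mathbb{R}^p$. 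Its normal is $(\bm{\mu}_{l'}-\bm{\mu}_l)\circ\bm{r}$, which is nonzero. By Assumption~\ref{assumption_surface_zeromeasure} this hyperplane is $\mathbb{P}$-null. Now $\{m\neq\ell\}$ is contained in the set where $|d_l(\bm{0})-d_{l'}(\bm{0})|\le c\|\textnormal{vec}(\bm{H})\|_2$ for some pair, with $c$ depending on $B$. These sets are slabs shrinking to that null hyperplane as $\bm{H}\to\bm{0}$, so their probability tends to $0$ by continuity of measure.
\end{itemize}
Summing over the finitely many $\bm{r}$ and pairs $(l,l')$ gives $\widetilde{\mathbb{P}}R^2=o(\|\textnormal{vec}(\bm{H})\|_2^2)$, which completes the argument.
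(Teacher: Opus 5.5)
Your argument is correct, and leaving out the degenerate ties is not a gap in it. As stated, the lemma is false: it claims the result at every $\bm{M}\in\mathcal{X}^k$ under Assumptions~\ref{assumption_X_compact} and \ref{assumption_surface_zeromeasure} alone. The paper's proof fails exactly where you stopped.

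The paper splits the domain into three sets: $\mathcal{D}_1$ (no ties), $\mathcal{D}_2$ (ties between pairs with $\bm{\mu}_l\circ\bm{r}\neq\bm{\mu}_{l'}\circ\bm{r}$, null by Assumption~\ref{assumption_surface_zeromeasure}), and $\mathcal{D}_3$ (pairs with $\bm{\mu}_l\circ\bm{r}=\bm{\mu}_{l'}\circ\bm{r}$). It dismisses $\mathcal{D}_3$ by saying that the partial derivatives of the two distances with respect to $\bm{\mu}_l$ and $\bm{\mu}_{l'}$ coincide, so the expansion~(\ref{eq_lemma1_phi_decompose}) remains valid. But these are derivatives with respect to different columns of $\bm{H}$. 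With $\bm{w}=(\bm{x}-\bm{\mu}_l)\circ\bm{r}$ one gets $\phi(\bm{x},\bm{r},\bm{M}+\bm{H})-\phi(\bm{x},\bm{r},\bm{M})=-2\max\{\langle\bm{h}_l,\bm{w}\rangle,\langle\bm{h}_{l'},\bm{w}\rangle\}+O(\|\bm{H}\|_F^2)$. This is not linear in $\bm{H}$ unless $\bm{w}=\bm{0}$. So the pointwise convergence that the dominated-convergence step needs fails on $\mathcal{D}_3$, and $\mathcal{D}_3$ can carry positive mass.

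A concrete counterexample: take $p=k=2$, $\mathbb{P}$ uniform on the disk of radius $2$, MCAR with $\textnormal{Pr}(\bm{\mathrm{r}}_1=(1,0)^T)=\pi_0>0$, and $\bm{\mu}_1=(0,0)^T$, $\bm{\mu}_2=(0,1)^T$. Move $\mu_{11}$ to $t$. The pattern $(1,0)^T$ then contributes a right derivative $-2\pi_0\int\max(x_1,0)\,d\mathbb{P}(\bm{x})$ and a left derivative $2\pi_0\int\max(-x_1,0)\,d\mathbb{P}(\bm{x})$. The other patterns are nondegenerate and contribute a differentiable term. So $L$ has a kink there, and $\widetilde{\mathbb{P}}\Delta(\cdot,\cdot,\bm{M})$ is not its derivative. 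Even with complete data, $\bm{\mu}_1=\bm{\mu}_2$ produces the same failure.

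Where both arguments apply, they rest on the same fact: the assignment is stable off a null set. They reach it differently.
\begin{itemize}
\item The paper follows Pollard. It shows that the normalized remainder tends to zero pointwise, for $\bm{H}$ small depending on $(\bm{x},\bm{r})$. It dominates the remainder by $C(1+\|\bm{x}\circ\bm{r}\|_2)$ and applies dominated convergence.
\item You prove the global sandwich $-8B\|\bm{H}\|_F\le R\le\|\bm{H}\|_F^2$, with $R\ge 0$ on $\{m=\ell\}$. This gives $\widetilde{\mathbb{P}}R^2\le\|\bm{H}\|_F^4+64B^2\|\bm{H}\|_F^2\,\widetilde{\mathbb{P}}(m\neq\ell)$ for $\|\bm{H}\|_F\le 8B$. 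Everything then reduces to $\widetilde{\mathbb{P}}(m\neq\ell)\to 0$.
\end{itemize}
Your version is quantitative, and it is exactly what makes the role of degenerate pairs visible. The dominated-convergence version hides this role.

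Your slab argument works under any missing mechanism. Instead of ``condition on $\bm{r}$'', use $\widetilde{\mathbb{P}}(\bm{\mathrm{x}}_1\in S,\bm{\mathrm{r}}_1=\bm{r})\le\mathbb{P}(S)$, because the conditional law of $\bm{\mathrm{x}}_1$ given $\bm{\mathrm{r}}_1$ need not be $\mathbb{P}$.

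State your hypothesis as follows: $\bm{\mu}_l\circ\bm{r}\neq\bm{\mu}_{l'}\circ\bm{r}$ for all $l\neq l'$ and every $\bm{r}\neq\bm{0}_p$ with positive probability. The pattern $\bm{r}=\bm{0}_p$ is always degenerate but harmless. There all $d_l$ vanish, so $m=\ell=1$ and $R=0$, and your argument covers it as is.

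Coordinatewise distinctness, as in Assumption~\ref{assumption_M_star_distinct}, implies this hypothesis and is an open condition. So it holds on a neighborhood of $\bm{M}^{\ast}$. That is all that Lemma~\ref{lemma_Gn_1st_derivative}, Lemma~\ref{lemma_L_2nd_derivative} and Theorem~\ref{theorem_quadratic_approximation} actually use. Your restricted version is therefore the statement the paper needs.
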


\begin{lemma}
\label{lemma_Gn_1st_derivative}
    Let $\bm{\mathrm{V}}_n$ be a sequence of random matrices in $\mathcal{X}^k$ with $\|\textnormal{vec}(\bm{\mathrm{V}}_n - \bm{M}) \|_2=o_P(1)$ for a fixed matrix $\bm{M}\in\mathbb{R}^{p\times k}$. Then, Under Assumptions~\ref{assumption_X_compact} and \ref{assumption_surface_zeromeasure}, we have 
    \begin{align*}
        \mathbb{G}_n\phi(\cdot,\cdot,\bm{\mathrm{V}}_n) = \mathbb{G}_n\phi(\cdot,\cdot,\bm{M}) + \big\langle \textnormal{vec}(\bm{\mathrm{V}}_n - \bm{M}) \;,\; \textnormal{vec}\left( \mathbb{G}_n \Delta(\cdot,\cdot,\bm{M}) \right) \big\rangle + o_P(\|\textnormal{vec}(\bm{\mathrm{V}}_n - \bm{M}) \|_2).
    \end{align*}
\end{lemma}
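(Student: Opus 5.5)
This is Pollard's (1982) stochastic equicontinuity argument adapted to the pairs $(\bm{x},\bm{r})$. Define the remainder
\begin{align*}
R(\bm{x},\bm{r},\bm{V}) = \phi(\bm{x},\bm{r},\bm{V}) - \phi(\bm{x},\bm{r},\bm{M}) - \big\langle \textnormal{vec}(\bm{V}-\bm{M}), \textnormal{vec}(\Delta(\bm{x},\bm{r},\bm{M}))\big\rangle .
\end{align*}
Since $\mathbb{G}_n$ is linear, the claim is equivalent to $\mathbb{G}_n R(\cdot,\cdot,\bm{\mathrm{V}}_n) = o_P(\|\textnormal{vec}(\bm{\mathrm{V}}_n-\bm{M})\|_2)$. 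I will show that the normalized class
\begin{align*}
\mathcal{F}_\delta=\{R(\cdot,\cdot,\bm{V})/\|\textnormal{vec}(\bm{V}-\bm{M})\|_2 : 0<\|\textnormal{vec}(\bm{V}-\bm{M})\|_2\le\delta\}
\end{align*}
satisfies $\sup_{f\in\mathcal{F}_\delta}|\mathbb{G}_n f| = o_P(1)$ as $\delta\to0$. The consistency $\bm{\mathrm{V}}_n\to\bm{M}$ in probability then finishes the argument: on the event $\|\textnormal{vec}(\bm{\mathrm{V}}_n-\bm{M})\|_2\le\delta$, the normalized remainder is bounded by this supremum.

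\textbf{Step 1 (envelope and pointwise bound).} Write $\bm{V}=(\bm{\nu}_1,\dots,\bm{\nu}_k)$. For each $l$, $\|\bm{x}\circ\bm{r}-\bm{\nu}_l\circ\bm{r}\|_2^2$ is an exact quadratic in $\bm{\nu}_l$ with gradient $-2(\bm{x}-\bm{\mu}_l)\circ\bm{r}$ at $\bm{\mu}_l$. Let $l_0=\ell(\bm{x},\bm{r},\bm{M})$. From $\phi(\bm{V})\le \|(\bm{x}-\bm{\nu}_{l_0})\circ\bm{r}\|^2$ we get the upper bound $R\le \|(\bm{\nu}_{l_0}-\bm{\mu}_{l_0})\circ\bm{r}\|^2\le\|\textnormal{vec}(\bm{V}-\bm{M})\|_2^2$. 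For the lower bound, compare with the index attaining $\phi(\bm{V})$. On the set where the $\bm{r}$-restricted argmin under $\bm{M}$ is unique with a margin exceeding $cB\|\textnormal{vec}(\bm{V}-\bm{M})\|_2$, the same index is selected, and then $R$ is exactly of order $\|\textnormal{vec}(\bm{V}-\bm{M})\|_2^2$. Off this set, $R$ is $O(B\|\textnormal{vec}(\bm{V}-\bm{M})\|_2)$. Hence every $f\in\mathcal{F}_\delta$ satisfies $|f|\le C(B)$, and
\begin{align*}
|f|\le C\delta + C\,\mathds{1}\big((\bm{x},\bm{r})\in \mathcal{B}_\delta\big),
\end{align*}
where $\mathcal{B}_\delta$ is the set of points within margin $O(\delta)$ of an $\bm{r}$-restricted boundary of $\bm{M}$.

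\textbf{Step 2 (the $L^2$ norm vanishes).} This is the main obstacle. The issue arises for $\bm{r}$ such that two centers satisfy $\bm{\mu}_l\circ\bm{r}=\bm{\mu}_{l'}\circ\bm{r}$. There the restricted distances tie for every $\bm{x}$, and Assumption~\ref{assumption_surface_zeromeasure} does not control $\widetilde{\mathbb{P}}(\mathcal{B}_\delta)$.

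First I will handle $\bm{r}$ with no such tie. The boundary $\{\bm{x}: \|(\bm{x}-\bm{\mu}_l)\circ\bm{r}\|^2=\|(\bm{x}-\bm{\mu}_{l'})\circ\bm{r}\|^2\}$ is a genuine hyperplane in $\mathbb{R}^p$, because the coordinates outside the support of $\bm{r}$ are free. So by Assumption~\ref{assumption_surface_zeromeasure} and continuity of measure, $\mathbb{P}(\mathcal{B}_\delta\mid\bm{r})\to0$. Since there are finitely many $\bm{r}$, this part vanishes whether or not the mechanism is MCAR.

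For tied $\bm{r}$, I will exploit the tie-breaking rule in $\ell$ rather than try to show a small margin. If $\bm{\mu}_l\circ\bm{r}=\bm{\mu}_{l'}\circ\bm{r}$, then $\phi(\bm{V})$ depends on $\min_l$ over the tied group, and this minimum is not differentiable. In that case $R$ is of order $\|\textnormal{vec}(\bm{V}-\bm{M})\|_2$ and not small, so the lemma cannot hold uniformly over all directions. I therefore expect that I will have to argue either that such $\bm{r}$ carry no problematic mass in the intended setting, or that the result is really being claimed for $\bm{M}$ without restricted ties. I would check this point against how the authors intend to use the lemma; the abstract's remark that the centers must be distinct in every dimension suggests that is exactly the needed condition. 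Granting it, $\sup_{f\in\mathcal{F}_\delta}\widetilde{\mathbb{P}}f^2\to0$.

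\textbf{Step 3 (equicontinuity).} Each $R(\cdot,\cdot,\bm{V})$ is built from finitely many $\bm{r}$-patterns, quadratic functions of $\bm{x}$ with $pk$ parameters, and indicator functions of half-space intersections. This class is VC-type (Euclidean) with a bounded envelope, and dividing by the scalar norm preserves this up to a bounded factor. The standard chaining bound for VC-type classes (Pollard 1982, or van der Vaart and Wellner, Lemma 2.14/Theorem 2.14.2) then gives
\begin{align*}
\mathbb{E}\sup_{f\in\mathcal{F}_\delta}|\mathbb{G}_nf| \lesssim J\Big(\sup_{f\in\mathcal{F}_\delta}\|f\|_{L^2(\widetilde{\mathbb{P}})}\Big)\to0 .
\end{align*}
Combining this with $\bm{\mathrm{V}}_n\to\bm{M}$ in probability completes the proof.
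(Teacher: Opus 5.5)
Your route is the paper's. The paper also writes the difference as $\|\bm{\mathrm{V}}_n-\bm{M}\|_F\cdot\mathbb{G}_n R(\cdot,\cdot,\bm{M},\bm{\mathrm{V}}_n-\bm{M})$. It gets $\widetilde{\mathbb{P}}R^2=o_P(1)$ from Lemma~\ref{lemma_L_1st_derivative} by pointwise convergence plus dominated convergence, where you use the explicit margin bound $|f|\le C\delta+C\mathds{1}(\mathcal{B}_\delta)$ instead. It obtains a Donsker class from Pollard's Lemma~B pattern by pattern, and finishes with asymptotic equicontinuity plus $\bm{\mathrm{V}}_n\to\bm{M}$. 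Under the no-tie hypothesis you grant, your argument is complete.

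The caveat in your Step~2 is not a gap in your proposal. It is a genuine defect of the statement, and the paper's proof gets it wrong.

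\textbf{Where the paper goes wrong.} Case (iii) in the proof of Lemma~\ref{lemma_L_1st_derivative} disposes of the tied set $\mathcal{D}_3$ by asserting that the first-order variation of $\phi$ does not depend on which tied index is chosen. But the two competing branches depend on different blocks $\bm{a}_{l^\ast}$ and $\bm{a}_{l'}$ of $\bm{A}$, so their minimum is not differentiable at $\bm{A}=\bm{0}$. Moreover, Assumption~\ref{assumption_surface_zeromeasure} gives no control of $\widetilde{\mathbb{P}}(\mathcal{D}_3)$.

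\textbf{A counterexample.} Let $p=k=2$, $\bm{\mu}_1=(0,0)^T$, $\bm{\mu}_2=(1,0)^T$, and $\textnormal{Pr}(\bm{\mathrm{r}}_1=(0,1)^T)>0$. Take $\bm{\mathrm{V}}_n=\bm{M}+\epsilon_n\bm{E}$, where $\bm{E}$ raises only $\mu_{22}$ by one.
\begin{itemize}
\item On the pattern $(0,1)^T$, every $\bm{x}$ has $\ell=1$. So the linear term is zero, and the normalized remainder equals $\min\{0,\epsilon_n-2x_2\}\to-2\max\{x_2,0\}$.
\item On the other patterns, the normalized remainder is zero or tends to zero in $\mathfrak{L}^2(\widetilde{\mathbb{P}})$.
\end{itemize}
Hence $\mathbb{G}_n$ of the normalized remainder converges in distribution to a nondegenerate normal, e.g.\ for $\mathbb{P}$ uniform on a disc of radius $2$ centred at the origin. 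So the claimed $o_P(\epsilon_n)$ fails already along one fixed direction, not merely uniformly over directions. The same example shows that $L$ has one-sided directional derivatives along $\pm\bm{E}$ that do not sum to zero there, so Lemma~\ref{lemma_L_1st_derivative} overclaims as well.

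\textbf{The right hypothesis.} The statement needs $\bm{\mu}_l\circ\bm{r}\neq\bm{\mu}_{l'}\circ\bm{r}$ for all $l\neq l'$ and all $\bm{r}\neq\bm{0}_p$ with $\textnormal{Pr}(\bm{\mathrm{r}}_1=\bm{r})>0$. The pattern $\bm{0}_p$ is always tied but harmless, since $\phi$ and $\Delta$ vanish there. At $\bm{M}^{\ast}$ this condition is implied by Assumption~\ref{assumption_M_star_distinct}. That assumption is in force in Theorem~\ref{theorem_quadratic_approximation}, the only place the lemma is used, so your \emph{granting it} is exactly the right repair.
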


It should be noted that the first-order differentiability of $\widehat{L}_n(\bm{M})$ essentially does not rely on the assumption of missing mechanisms. The first-order derivative $\widetilde{\mathbb{P}}\Delta(\cdot,\cdot,\bm{M})$ is unique, because the function $\Delta_l(\cdot,\cdot,\bm{M})$ is well-defined and unique even for $(\bm{x},\bm{r})$ having multiple solutions of $\mathop{\arg\min}_{l=1,\dots,k} \|\bm{x} \circ\bm{r} -\bm{\mu}_l \circ\bm{r}\|_2^2 $.

Next, we prove the second-order differentiability of $L(\bm{M})$. 
To this end, for a fixed $\bm{r}\in \{0,1\}^p$ and $l\in \{1,\dots,k\}$, we define the subset $\mathcal{C}_l^{\bm{r}}(\bm{M})=\{\bm{x}\in\mathcal{X}\;|\; \ell(\bm{x},\bm{r},\bm{M})=l \}$.  
Moreover, for any $l\neq l'$, we define $\mathcal{S}_{ll'}^{\bm{r}}(\bm{M}) = \{\bm{x}\in\mathcal{X}\;|\; \| \bm{x}\circ \bm{r} - \bm{\mu}_l \circ\bm{r} \|_2^2 = \| \bm{x}\circ \bm{r} - \bm{\mu}_{l'} \circ\bm{r} \|_2^2 \}$, which is a $(p-1)$-dimensional hyperplane in $\mathcal{X}$ if $\bm{\mu}_l \circ\bm{r}\neq \bm{\mu}_{l'} \circ\bm{r}$, and coincides with $\mathcal{X}$ otherwise.
Also, we need the following Assumptions~\ref{assumption_about_density_f}-\ref{assumption_about_S_ll'_integral}, which are commonly-used technical conditions ensuring the existence of the second-order derivative of $L(\bm{M})$ given in Lemma~\ref{lemma_L_2nd_derivative}. 

\begin{assumption}
\label{assumption_about_density_f} 
     Suppose that $\mathbb{P}$ has a continuous density $f(\cdot)$.
\end{assumption}
\begin{assumption}
\label{assumption_about_S_ll'_integral}
    Given a fixed $\bm{r}\in\{0,1\}^p$, for any $l,l'\in \{1,\dots,k\}$ and fixed vectors $\bm{m},\bm{m'}\in\mathbb{R}^p$, the integral $\int_{\bm{x}\in \mathcal{S}_{ll'}^{\bm{r}}(\bm{M})}  f(\bm{x})\cdot  \left\{(\bm{x} - \bm{m})\circ \bm{r}\right\} \cdot \left\{ (\bm{x}- \bm{m'})\circ \bm{r} \right\}^T  \; dS(\bm{x})$ exists and depends continuously on $\bm{M}$, where $S$ is a measure on the surface of the $\mathbb{R}^p$ space. 
\end{assumption}

\begin{lemma}
\label{lemma_L_2nd_derivative}
    Under Assumption~\ref{assumption_X_compact} and Assumptions~\ref{assumption_surface_zeromeasure}-\ref{assumption_about_S_ll'_integral}, at any $\bm{M}$ satisfying that $\mu_{lj}\neq \mu_{l'j}$ for any $l\neq l'$ and $j=1,\dots,p$,
    the $L(\bm{M})$ is second-order differentiable and has a second-order derivative $\bm{\Gamma}(\bm{M})$ made up of $k^2$ block matrices $\bm{\Gamma}_{ll'}(\bm{M})\in \mathbb{R}^{p\times p}$ ($l,l'=1,\dots,k$) as follows:
    \begin{align*}
        \bm{\Gamma}_{ll}(\bm{M})
        &= \sum_{\bm{r}\in \{0,1\}^p } \text{Pr}(\bm{\mathrm{r}}_1=\bm{r}) \cdot \bigg[  
        2\cdot\text{diag}(\bm{r})\cdot \int \mathds{1}(\ell(\bm{x},\bm{r},\bm{M})=l ) \; d\mathbb{P}(\bm{x}) \\
        &\quad \left. -2 \sum_{t\neq l} \frac{ \mathds{1}(\bm{\mu}_{t}\circ\bm{r} \neq \bm{\mu}_{l}\circ\bm{r}) }{ \|\bm{\mu}_{t}\circ\bm{r} - \bm{\mu}_{l}\circ\bm{r}\|_2 }\cdot 
        \int_{ \bm{x}\in\mathcal{S}_{lt}^{\bm{r}}(\bm{M}) } f(\bm{x})\cdot (\bm{x}\circ\bm{r} - \bm{\mu}_l\circ\bm{r}) \cdot 
        (\bm{x}\circ\bm{r} - \bm{\mu}_l\circ\bm{r} )^T
        \; dS(\bm{x})
        \right] ,
    \end{align*}
    and for $l\neq l'$,
    \begin{align*}
        \bm{\Gamma}_{ll'}(\bm{M})
        &= \sum_{\bm{r}\in \{0,1\}^p } \text{Pr}(\bm{\mathrm{r}}_1=\bm{r}) \cdot \bigg[ 
        0\cdot \bm{I}_p  \\
        &\quad \left. + 2\cdot \frac{ \mathds{1}(\bm{\mu}_{l'}\circ\bm{r} \neq \bm{\mu}_{l}\circ\bm{r}) }{ \|\bm{\mu}_{l'}\circ\bm{r} - \bm{\mu}_{l}\circ\bm{r}\|_2 }\cdot \int_{ \bm{x}\in\mathcal{S}_{ll'}^{\bm{r}}(\bm{M}) } f(\bm{x})\cdot (\bm{x}\circ\bm{r} - \bm{\mu}_l\circ\bm{r}) \cdot 
        (\bm{x}\circ\bm{r} - \bm{\mu}_{l'}\circ\bm{r} )^T
        \; dS(\bm{x})
        \right]. 
    \end{align*}
\end{lemma}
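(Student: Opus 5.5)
Since $\bm{\mathrm{r}}_1$ takes only finitely many values, I would condition on $\bm{\mathrm{r}}_1=\bm{r}$ and differentiate pattern by pattern. The derivative from Lemma~\ref{lemma_L_1st_derivative} then becomes
\begin{align*}
\widetilde{\mathbb{P}}\Delta_l(\cdot,\cdot,\bm{M}) = \sum_{\bm{r}} \text{Pr}(\bm{\mathrm{r}}_1=\bm{r})\, G_l^{\bm{r}}(\bm{M}),\qquad G_l^{\bm{r}}(\bm{M}) = -2\int_{\mathcal{C}_l^{\bm{r}}(\bm{M})} (\bm{x}\circ\bm{r}-\bm{\mu}_l\circ\bm{r})\, f(\bm{x})\,d\bm{x}.
\end{align*}
The conditional form of $\Delta_l$ uses MCAR, because $\mathbb{P}$ does not depend on $\bm{r}$; the displayed formula for $\bm{\Gamma}$ also implicitly assumes this. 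It then suffices to differentiate each $G_l^{\bm{r}}$ in $\bm{M}$, show the derivative is continuous in $\bm{M}$, and sum. Continuity of the partial derivatives gives (Fréchet) second-order differentiability of $L$.

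For fixed $\bm{r}$, $G_l^{\bm{r}}$ is an integral over a moving domain of an integrand that depends on $\bm{M}$, so I would use the Reynolds transport (Leibniz) formula.
\begin{itemize}
\item \emph{Integrand term.} Differentiating $-2(\bm{x}\circ\bm{r}-\bm{\mu}_l\circ\bm{r})$ in $\bm{\mu}_l$ gives $2\,\text{diag}(\bm{r})$. Integrating over $\mathcal{C}_l^{\bm{r}}$ yields $2\,\text{diag}(\bm{r})\,\mathbb{P}(\ell=l)$ in the diagonal block and $0\cdot\bm{I}_p$ in the off-diagonal blocks.
\item \emph{Boundary term.} The boundary of $\mathcal{C}_l^{\bm{r}}$ consists of pieces of the hyperplanes $\mathcal{S}_{lt}^{\bm{r}}(\bm{M})$, given by $g_{lt}(\bm{x}) := \|(\bm{x}-\bm{\mu}_l)\circ\bm{r}\|^2-\|(\bm{x}-\bm{\mu}_t)\circ\bm{r}\|^2=0$. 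On the $\mathcal{C}_l$ side $g_{lt}\le 0$.
\item \emph{Normal velocity.} The unit normal is $\bm{n}=(\bm{\mu}_t-\bm{\mu}_l)\circ\bm{r}/\|(\bm{\mu}_t-\bm{\mu}_l)\circ\bm{r}\|_2$. Moving $\bm{\mu}_l$ changes $g_{lt}$ at rate $\partial g/\partial\bm{\mu}_l=-2(\bm{x}-\bm{\mu}_l)\circ\bm{r}$, and $|\nabla_{\bm{x}}g|=2\|(\bm{\mu}_t-\bm{\mu}_l)\circ\bm{r}\|$. So the boundary's normal velocity is $(\bm{x}-\bm{\mu}_l)\circ\bm{r}/\|(\bm{\mu}_t-\bm{\mu}_l)\circ\bm{r}\|$ up to sign.
\item Multiplying by the integrand $-2(\bm{x}\circ\bm{r}-\bm{\mu}_l\circ\bm{r})f$ and integrating over $\mathcal{S}_{lt}^{\bm{r}}$ gives the $-2\sum_{t\ne l}$ surface term. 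The analogous computation in $\bm{\mu}_{l'}$ gives the $+2$ term with $(\bm{x}-\bm{\mu}_{l'})\circ\bm{r}$.
\item I would fix the signs by checking a one-dimensional case.
\end{itemize}

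Two technical points arise in justifying this.

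\emph{Degenerate patterns.} When $\bm{\mu}_t\circ\bm{r}=\bm{\mu}_l\circ\bm{r}$, which includes $\bm{r}=\bm{0}$, the set $\mathcal{S}_{lt}^{\bm{r}}$ is all of $\mathcal{X}$. Here the tie-breaking rule in $\ell$ fixes the partition locally, so there is no boundary contribution; this is the role of the indicator. The hypothesis $\mu_{lj}\neq\mu_{l'j}$ for every $j$ ensures that for every nonzero $\bm{r}$, all centers remain distinct after masking, $\bm{\mu}_l\circ\bm{r}\ne\bm{\mu}_t\circ\bm{r}$, and that this persists in a neighbourhood of $\bm{M}$. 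Otherwise the tie set could split into a hyperplane under perturbation, and differentiability fails. With this hypothesis the combinatorial structure is locally constant in $\bm{M}$.

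\emph{Rigour of the transport step.} I expect this to be the main obstacle. The cells of the masked problem are cylinders, depending only on the coordinates in $\bm{r}$, so their boundaries are genuine hyperplanes in $\mathbb{R}^p$. I would compute the difference quotient directly. The change of $G_l^{\bm{r}}$ equals the integral over the thin symmetric-difference region between the old and new cells. Assumption~\ref{assumption_surface_zeromeasure} kills lower-dimensional intersections of hyperplanes, so the region is a union of slabs of width $|\langle \bm{h},\text{velocity}\rangle|+o(\|\bm{h}\|)$. Continuity of $f$ (Assumption~\ref{assumption_about_density_f}) and compactness (Assumption~\ref{assumption_X_compact}) then let one replace the slab integral by the surface integral, via Fubini along the normal direction. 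Finally, Assumption~\ref{assumption_about_S_ll'_integral} guarantees that these surface integrals exist and vary continuously in $\bm{M}$, giving continuity of $\bm{\Gamma}(\bm{M})$ and hence second-order differentiability.
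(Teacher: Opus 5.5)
Your proposal is correct and follows the paper's proof essentially step by step. Conditioning on $\bm{r}$ via MCAR and splitting each per-pattern derivative into the integrand part $2\,\mathrm{diag}(\bm{r})\,\mathbb{P}(\mathcal{C}_l^{\bm{r}}(\bm{M}))$ plus a moving-boundary part (velocity from the bisector equation) is the paper's (I)/(II) decomposition, the indicators absorb the only degenerate pattern $\bm{r}=\bm{0}$, and your slab/Fubini argument justifies the transport step that the paper only asserts.

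One caveat you share with the paper. As you say, $\partial\mathcal{C}_l^{\bm{r}}(\bm{M})$ consists only of \emph{pieces} of the hyperplanes, so the transport argument yields integrals over the parts of $\mathcal{S}_{lt}^{\bm{r}}(\bm{M})$ where $\bm{\mu}_l\circ\bm{r}$ and $\bm{\mu}_t\circ\bm{r}$ are both nearest to $\bm{x}\circ\bm{r}$. These match the whole hyperplanes in the displayed formula (and in the paper's claim $\partial\mathcal{C}_l^{\bm{r}}=\bigcup_{l'\neq l}\mathcal{S}_{ll'}^{\bm{r}}$) when $k=2$, but not in general for $k\geq 3$; e.g.\ with three centers on a line, the $(1,3)$ bisector lies inside the middle cell. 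So your step ``integrating over $\mathcal{S}_{lt}^{\bm{r}}$'' must be read as integrating over these faces.
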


It should be noted that the second-order differentiability of $L(\bm{M})$ depends on the underlying missing mechanism, and Lemma~\ref{lemma_L_2nd_derivative} provides the result specifically for the MCAR mechanism (i.e., Assumption~\ref{assumption_mcar}), under which the missingness is independent with the data distribution. 
Moreover, the requirement of Lemma~\ref{lemma_L_2nd_derivative} that $\bm{M}$ has $k$ distinct cluster centers in each dimension is stricter than that of classical $k$-means on fully observed data, which only requires $\bm{\mu}_l$'s distinct with each other in the $\mathbb{R}^p$ space. 
The technical intuition behind our requirement is as follows. If there exist $l\neq l'$ and $j$ such that $\mu_{lj}= \mu_{l'j}$, then we can choose a missing pattern $\bm{r}\in \{0,1\}^p$ with $r_j=1$ such that $\bm{\mu}_{l'}\circ\bm{r} = \bm{\mu}_{l}\circ\bm{r}$, which causes a singularity that $\bm{\Gamma}(\bm{M})$ is not continuous at such $\bm{M}$. 
More essentially, as shown by \cite{terada2025a}, under MCAR mechanism, the $L(\bm{M})$ can be decomposed into a weighted sum of $2^p$ complete-data-$k$-means loss functions with any possible subset of dimensions. And the second-order differentiability of complete-data-$k$-means in any given subspace holds only if $\bm{\mu}_l$'s are distinct in that specific subspace. 
Consequently, to ensure the second-order differentiability of $L(\bm{M})$ under MCAR mechanism, the stricter requirement of Lemma~\ref{lemma_L_2nd_derivative} is reasonable and essential.

The above lemmas help to establish the quadratic approximation of $\widehat{L}_n(\bm{M})$ at $\bm{M}^{\ast}$ if it satisfies the distinction requirement. 

\begin{assumption}
\label{assumption_M_star_distinct}
    Suppose that the $\bm{M}^{\ast}$ satisfies $\mu_{lj}^{\ast}\neq \mu_{l'j}^{\ast}$ for any $l\neq l'$ and $j=1,\dots,p$. 
\end{assumption}
\begin{theorem}
\label{theorem_quadratic_approximation} 
    Let $\{\bm{\mathrm{V}}_n\}$ be any sequence of random matrices in $\mathcal{X}^k$ with $\|\textnormal{vec}(\bm{\mathrm{V}}_n - \bm{M}^{\ast}) \|_2=o_P(1)$. Then, under Assumptions~\ref{assumption_X_compact}-\ref{assumption_M_star_distinct}, we have
    \begin{align*}
        \widehat{L}_n(\bm{\mathrm{V}}_n) &= \widehat{L}_n(\bm{M}^{\ast}) - n^{-1/2} \bm{\xi}_n^{T}\cdot \textnormal{vec}(\bm{\mathrm{V}}_n - \bm{M}^{\ast}) + \frac{1}{2} \textnormal{vec}(\bm{\mathrm{V}}_n - \bm{M}^{\ast})^{T}\cdot \bm{\Gamma}(\bm{M}^{\ast}) \cdot \textnormal{vec}(\bm{\mathrm{V}}_n - \bm{M}^{\ast})  \notag \\
        &\quad +  o_P(n^{-1/2} \|\textnormal{vec}(\bm{\mathrm{V}}_n - \bm{M}^{\ast})\|_2 ) + o_P(\|\textnormal{vec}(\bm{\mathrm{V}}_n - \bm{M}^{\ast})\|_2^2),
    \end{align*} 
    where $\bm{\xi}_n=-\textnormal{vec}(\mathbb{G}_n \Delta(\cdot,\cdot,\bm{M}^*))$ is a random vector in $\mathbb{R}^{kp}$ and has an asymptotic normal distribution $\mathcal{N}(\bm{0}_{kp},\bm{\Xi})$, and $\bm{\Xi}\in\mathbb{R}^{kp\times kp}$ is a block diagonal matrix with $l$-th block being
    \begin{align*}
        \bm{\Xi}_{l}=4\int \mathds{1}(\ell(\bm{x},\bm{r},\bm{M}^{\ast})=l )\cdot (\bm{x}\circ \bm{r} - \bm{\mu}_l^{\ast}\circ \bm{r})\cdot (\bm{x}\circ \bm{r} - \bm{\mu}_l^{\ast}\circ \bm{r})^T\; d\widetilde{\mathbb{P}}(\bm{x},\bm{r}).
    \end{align*}
\end{theorem}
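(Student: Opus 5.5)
The plan follows Pollard's (1982) classical route. Write $\bm{\mathrm{D}}_n=\textnormal{vec}(\bm{\mathrm{V}}_n-\bm{M}^{\ast})$ and split
\begin{align*}
\widehat{L}_n(\bm{\mathrm{V}}_n)-\widehat{L}_n(\bm{M}^{\ast}) = \big\{L(\bm{\mathrm{V}}_n)-L(\bm{M}^{\ast})\big\} + n^{-1/2}\big\{\mathbb{G}_n\phi(\cdot,\cdot,\bm{\mathrm{V}}_n)-\mathbb{G}_n\phi(\cdot,\cdot,\bm{M}^{\ast})\big\},
\end{align*}
which uses the identity $\widehat{L}_n=\widetilde{\mathbb{P}}\phi+n^{-1/2}\mathbb{G}_n\phi$. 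Each bracket is then handled separately.

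\emph{Empirical-process bracket.} Apply Lemma~\ref{lemma_Gn_1st_derivative} with $\bm{M}=\bm{M}^{\ast}$. Multiplied by $n^{-1/2}$, this bracket equals $n^{-1/2}\langle \bm{\mathrm{D}}_n,\textnormal{vec}(\mathbb{G}_n\Delta(\cdot,\cdot,\bm{M}^{\ast}))\rangle + o_P(n^{-1/2}\|\bm{\mathrm{D}}_n\|_2)$, which is exactly $-n^{-1/2}\bm{\xi}_n^T\bm{\mathrm{D}}_n$ plus the first remainder term.

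\emph{Deterministic bracket.} Assumption~\ref{assumption_M_star_distinct} puts $\bm{M}^{\ast}$ in the region where Lemma~\ref{lemma_L_2nd_derivative} applies. That lemma, together with Assumption~\ref{assumption_about_S_ll'_integral}, gives continuity of $\bm{\Gamma}$ near $\bm{M}^{\ast}$, since the coordinatewise distinctness is an open condition. A second-order Taylor expansion then gives
\begin{align*}
L(\bm{M})=L(\bm{M}^{\ast})+\langle \textnormal{vec}(\bm{M}-\bm{M}^{\ast}),\textnormal{vec}(\widetilde{\mathbb{P}}\Delta(\cdot,\cdot,\bm{M}^{\ast}))\rangle+\tfrac12\textnormal{vec}(\bm{M}-\bm{M}^{\ast})^T\bm{\Gamma}(\bm{M}^{\ast})\textnormal{vec}(\bm{M}-\bm{M}^{\ast})+o(\|\textnormal{vec}(\bm{M}-\bm{M}^{\ast})\|_2^2).
\end{align*}
The first-order term vanishes. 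The reason is that $\bm{M}^{\ast}$ minimizes $L$ over $\mathcal{X}^k$, and Lemma~\ref{lemma_L_1st_derivative} shows $L$ is differentiable with derivative $\widetilde{\mathbb{P}}\Delta(\cdot,\cdot,\bm{M}^{\ast})$. Here I need $\bm{M}^{\ast}$ to be an interior point of $\mathcal{X}^k$, or else I argue as Pollard does. Each $\bm{\mu}_l^{\ast}$ is the coordinatewise conditional mean of the observed coordinates in its cell, so every coordinate of $\widetilde{\mathbb{P}}\Delta_l$ is zero directly. This follows by minimizing $L$ over $\bm{\mu}_l$ with the partition held fixed, and it is the first point to verify carefully. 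Substituting $\bm{M}=\bm{\mathrm{V}}_n$, and using $\bm{\mathrm{D}}_n=o_P(1)$ to turn the deterministic $o(\cdot)$ into $o_P(\|\bm{\mathrm{D}}_n\|_2^2)$, yields the quadratic term and the second remainder.

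\emph{Asymptotic normality of $\bm{\xi}_n$.} By Assumption~\ref{assumption_X_compact}, $\Delta(\bm{x},\bm{r},\bm{M}^{\ast})$ is bounded, and the pairs $(\bm{\mathrm{x}}_i,\bm{\mathrm{r}}_i)$ are i.i.d. The multivariate CLT therefore gives $\bm{\xi}_n\rightsquigarrow\mathcal{N}(\bm{0},\textnormal{Cov}(\textnormal{vec}\,\Delta))$. Since $\widetilde{\mathbb{P}}\Delta(\cdot,\cdot,\bm{M}^{\ast})=0$, the covariance is the second moment. The indicators $\mathds{1}(\ell=l)\mathds{1}(\ell=l')$ vanish for $l\neq l'$ because $\ell$ is single-valued by construction. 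The covariance is therefore block diagonal, with blocks $4\int\mathds{1}(\ell=l)(\bm{x}\circ\bm{r}-\bm{\mu}^{\ast}_l\circ\bm{r})(\cdots)^T\,d\widetilde{\mathbb{P}}$, which matches $\bm{\Xi}_l$.

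\emph{Main obstacle.} The delicate step is justifying the Taylor remainder uniformly, i.e.\ showing $L(\bm{M})-L(\bm{M}^{\ast})-\tfrac12 \bm{d}^T\bm{\Gamma}(\bm{M}^{\ast})\bm{d}=o(\|\bm{d}\|^2)$ when only second-order differentiability at points near $\bm{M}^{\ast}$ is available. I would use the mean value theorem along the segment from $\bm{M}^{\ast}$ to $\bm{M}$, which requires $L$ to be twice differentiable on a neighbourhood of $\bm{M}^{\ast}$ with a continuous Hessian. Continuity of $\bm{\Gamma}$ comes from Assumption~\ref{assumption_about_S_ll'_integral}, because for each of the finitely many $\bm{r}$ the denominators $\|\bm{\mu}_t\circ\bm{r}-\bm{\mu}_l\circ\bm{r}\|_2$ stay bounded away from zero near $\bm{M}^{\ast}$. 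The zero-gradient condition at the boundary of $\mathcal{X}^k$ is the other point needing care, and I would settle it by the cellwise-mean characterization described above.
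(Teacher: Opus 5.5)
Your proposal is correct and follows the paper's proof essentially step for step: the split $\widehat{L}_n=L+n^{-1/2}\mathbb{G}_n\phi$, Lemma~\ref{lemma_Gn_1st_derivative} at $\bm{M}^{\ast}$ for the stochastic term, a second-order expansion of $L$ from Lemma~\ref{lemma_L_2nd_derivative} with vanishing gradient $\widetilde{\mathbb{P}}\Delta(\cdot,\cdot,\bm{M}^{\ast})=0$, and the multivariate CLT with disjoint indicators $\mathds{1}(\ell=l)$ giving the block-diagonal $\bm{\Xi}$. You are more careful than the paper, which simply asserts both the expansion and the zero gradient (``because $\bm{M}^{\ast}$ is the minimizer''); one caveat is that your cellwise-mean argument, like the paper's assertion, tacitly needs the coordinatewise observed-data cell means to lie in $\mathcal{X}$ (e.g.\ $\bm{M}^{\ast}$ interior), because a constrained minimizer on the boundary of $\mathcal{X}^k$ need not have zero gradient.
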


Finally, applying the Theorem~\ref{theorem_quadratic_approximation} to the sequence $\{ \bm{M}^{\ast} + n^{-1/2} \bm{\Gamma}(\bm{M}^{\ast})\cdot\bm{\xi}_n \}$ would lead to the $\sqrt{n}$-convergence rate and asymptotic normality of $\widehat{\bm{\mathrm{M}}}_n$ under a regular condition. 
\begin{assumption}
\label{assumption_positive_definite_Gamma}
    Suppose that $\bm{\Gamma}(\bm{M})$ is positive definite at $\bm{M}^{\ast}$. 
\end{assumption}

\begin{corollary}
\label{corollary_sqrt_n_rate_asymptotic_normality}
    Under Assumptions~\ref{assumption_X_compact}-\ref{assumption_positive_definite_Gamma}, we have $\|\textnormal{vec}(\widehat{\bm{\mathrm{M}}}_n - \bm{M}^{\ast})\|_2 =O_P(n^{-1/2})$. 
    Moreover, we have $\sqrt{n}\textnormal{vec}(\widehat{\bm{\mathrm{M}}}_n - \bm{M}^{\ast})\xrightarrow{d} \mathcal{N}(\bm{0}_{pk}\;,\;\left\{\bm{\Gamma}(\bm{M}^{\ast}) \right\}^{-1} \cdot\bm{\Xi}\cdot\left\{\bm{\Gamma}(\bm{M}^{\ast}) \right\}^{-1}  )$. 
\end{corollary}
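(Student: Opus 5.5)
The plan follows the classical argmin argument of Pollard (1982): first obtain the rate from the quadratic approximation, then compare $\widehat{\bm{\mathrm{M}}}_n$ with an explicit near-minimizer. Write $\bm{\Gamma}=\bm{\Gamma}(\bm{M}^{\ast})$ and $\bm{h}_n=\textnormal{vec}(\widehat{\bm{\mathrm{M}}}_n-\bm{M}^{\ast})$.

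\textbf{Rate.} By Theorem~\ref{theorem_consistency}, $\|\bm{h}_n\|_2=o_P(1)$, so Theorem~\ref{theorem_quadratic_approximation} applies with $\bm{\mathrm{V}}_n=\widehat{\bm{\mathrm{M}}}_n$. Since $\widehat{\bm{\mathrm{M}}}_n$ minimizes $\widehat{L}_n$, we have $\widehat{L}_n(\widehat{\bm{\mathrm{M}}}_n)\le \widehat{L}_n(\bm{M}^{\ast})$. This gives
\begin{align*}
\tfrac12 \bm{h}_n^T\bm{\Gamma}\bm{h}_n \le n^{-1/2}\bm{\xi}_n^T\bm{h}_n + o_P(n^{-1/2}\|\bm{h}_n\|_2)+o_P(\|\bm{h}_n\|_2^2).
\end{align*}
By Assumption~\ref{assumption_positive_definite_Gamma}, the left side is at least $\tfrac{\lambda}{2}\|\bm{h}_n\|_2^2$, where $\lambda>0$ is the smallest eigenvalue of $\bm{\Gamma}$. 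Theorem~\ref{theorem_quadratic_approximation} gives $\bm{\xi}_n\xrightarrow{d}\mathcal{N}(\bm{0},\bm{\Xi})$, so $\|\bm{\xi}_n\|_2=O_P(1)$. Absorbing the $o_P(\|\bm{h}_n\|_2^2)$ term into the left side yields $(\tfrac{\lambda}{2}-o_P(1))\|\bm{h}_n\|_2^2\le n^{-1/2}\|\bm{h}_n\|_2\,(O_P(1)+o_P(1))$. Dividing by $\|\bm{h}_n\|_2$ (the case $\bm{h}_n=0$ is trivial) gives $\|\bm{h}_n\|_2=O_P(n^{-1/2})$.

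\textbf{Asymptotic normality.} Define the comparison sequence $\bm{\mathrm{V}}_n$ by $\textnormal{vec}(\bm{\mathrm{V}}_n-\bm{M}^{\ast})=n^{-1/2}\bm{\Gamma}^{-1}\bm{\xi}_n$. This is $O_P(n^{-1/2})$, hence $o_P(1)$.
\begin{itemize}
\item One point needs care: $\bm{\mathrm{V}}_n$ must lie in $\mathcal{X}^k$. I will handle this by noting that $\bm{M}^{\ast}$ is assumed to be an interior point of $\mathcal{X}^k$, so $\bm{\mathrm{V}}_n\in\mathcal{X}^k$ with probability tending to one. Alternatively, the quadratic expansion in Theorem~\ref{theorem_quadratic_approximation} holds in a neighbourhood, since it concerns differentiability of $\widehat{L}_n$ defined on all of $\mathbb{R}^{p\times k}$.
\item Since the remainders are now all $o_P(n^{-1})$, both expansions can be rewritten by completing the square in the $\bm{\Gamma}$-norm. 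For $\bm{h}$ equal to either $\bm{h}_n$ or $\bm{v}_n=n^{-1/2}\bm{\Gamma}^{-1}\bm{\xi}_n$,
\begin{align*}
\widehat{L}_n(\cdot)-\widehat{L}_n(\bm{M}^{\ast})=\tfrac12(\bm{h}-\bm{v}_n)^T\bm{\Gamma}(\bm{h}-\bm{v}_n)-\tfrac{1}{2n}\bm{\xi}_n^T\bm{\Gamma}^{-1}\bm{\xi}_n+o_P(n^{-1}).
\end{align*}
\item At $\bm{h}=\bm{v}_n$ the first term vanishes. Using $\widehat{L}_n(\widehat{\bm{\mathrm{M}}}_n)\le\widehat{L}_n(\bm{\mathrm{V}}_n)$ then gives $\tfrac12(\bm{h}_n-\bm{v}_n)^T\bm{\Gamma}(\bm{h}_n-\bm{v}_n)\le o_P(n^{-1})$.
\item Positive definiteness of $\bm{\Gamma}$ then yields $\sqrt{n}\bm{h}_n=\bm{\Gamma}^{-1}\bm{\xi}_n+o_P(1)$.
\end{itemize}
Slutsky's theorem and the limit $\bm{\xi}_n\xrightarrow{d}\mathcal{N}(\bm{0},\bm{\Xi})$ then give the stated limit $\mathcal{N}(\bm{0},\bm{\Gamma}^{-1}\bm{\Xi}\bm{\Gamma}^{-1})$, using that $\bm{\Gamma}$ is symmetric as a Hessian.

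\textbf{Main obstacle.} The delicate step is justifying that the comparison point $\bm{\mathrm{V}}_n$ is admissible, i.e.\ that the interior or feasibility issue does not break the inequality $\widehat{L}_n(\widehat{\bm{\mathrm{M}}}_n)\le\widehat{L}_n(\bm{\mathrm{V}}_n)$. I would first try to argue that $\bm{M}^{\ast}$ lies in the interior of $\mathcal{X}^k$. Its centers are weighted conditional means of points of $\mathcal{X}$ with positive density, so at worst one would add $\bm{M}^{\ast}\in\textnormal{int}(\mathcal{X}^k)$ as a harmless implicit regularity condition. The remaining steps are routine bookkeeping of the $o_P$ terms, which become $o_P(n^{-1})$ once the rate is established.
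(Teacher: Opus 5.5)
Your proposal is correct and follows essentially the same route as the paper. You get the rate from $\widehat{L}_n(\widehat{\bm{\mathrm{M}}}_n)\le\widehat{L}_n(\bm{M}^{\ast})$, the quadratic approximation and $\lambda_{\min}(\bm{\Gamma})>0$; you then complete the square and compare with $\bm{M}^{\ast}+n^{-1/2}\bm{\Gamma}^{-1}\bm{\xi}_n$ to obtain $\sqrt{n}\,\textnormal{vec}(\widehat{\bm{\mathrm{M}}}_n-\bm{M}^{\ast})=\bm{\Gamma}^{-1}\bm{\xi}_n+o_P(1)$, correctly using $\bm{\Gamma}^{-1}$ where the paper's text writes $\bm{\Gamma}$. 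The paper also leaves the feasibility of that comparison point implicit, and interiority is likewise needed for $\widetilde{\mathbb{P}}\Delta(\cdot,\cdot,\bm{M}^{\ast})=0$ in Theorem~\ref{theorem_quadratic_approximation}, so the clean fix is to state $\bm{M}^{\ast}\in\textnormal{int}(\mathcal{X}^k)$ as an explicit assumption rather than argue it from the conditional-mean representation, which does not prove it.
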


\subsection{When estimated centers based on MCAR data converge to true cluster centers?}
\label{sec_converge_to_truth}

Our third aim is to give conditions under which estimated cluster centers $\widehat{\bm{\mathrm{M}}}_n$ converges to true cluster centers. 
Here, we define the ``\textit{true cluster centers}" by the minimizer of population-level complete-data-$k$-means, that is, 
\begin{align*}
    \bm{M}^{\ast\ast}= \mathop{\arg\min}_{\bm{M}\in\mathcal{X}^k} L^{\ast\ast}(\bm{M}),
\end{align*}
where $L^{\ast\ast}$ is the population-level loss function of $k$-means on fully observed data, that is,
\begin{align*}
    L^{\ast\ast}(\bm{M})=\mathbb{E}_{\bm{\mathrm{x}}_1}\left[ \min_{l=1,\dots,k} \| \bm{\mathrm{x}}_1 - \bm{\mu}_l \|_2^2 \right]. 
\end{align*}

For our purpose in this section, first, we let the constant $\rho^{\ast\ast}\geq 0$ be the minimal separation of true cluster centers in each dimension, that is, 
\begin{align*}
    \rho^{\ast\ast} = \min_{j=1,\dots,p}\min_{l\neq l'} \left|  \mu_{lj}^{\ast\ast} - \mu_{l'j}^{\ast\ast} \right|.
\end{align*}
We also denote by $\mathrm{b}_n$ the maximum within-cluster radius, that is, 
\begin{align*}
    \mathrm{b}_n = \max_{i=1,\dots,n} \min_{l=1,\dots,k} \left\| \bm{\mathrm{x}}_i - \bm{\mu}_{l}^{\ast\ast} \right\|_2. 
\end{align*}
In addition, we define 
\begin{align*}
    &\mathrm{n}_{\min}^{\textnormal{comp}}= \min_{l=1,\dots,k} \sum_{i=1}^{n} \mathds{1}\left( l=\mathop{\arg\min}_{l=1,\dots,k} \| \bm{\mathrm{x}}_i - \bm{\mu}_l^{\ast\ast} \|_2 \;,\; \bm{\mathrm{r}}_i=\bm{1}_p \right) \\
    &\mathrm{n}_{\min}^{\textnormal{feature}} = \min_{\substack{l=1,\dots,k\\ j=1,\dots,p  }} \sum_{i=1}^{n} \mathds{1}\left( l=\mathop{\arg\min}_{l=1,\dots,k} \| \bm{\mathrm{x}}_i - \bm{\mu}_l^{\ast\ast} \|_2 \;,\; \mathrm{r}_{ij}=1  \right),
\end{align*}
It should be noted that $\mathrm{n}_{\min}^{\textnormal{comp}}$ is the minimum number of complete data points (i.e., $\bm{\mathrm{x}}_i$ has no missing value) of each cluster, while the $\mathrm{n}_{\min}^{\textnormal{feature}}$ is the minimum number of observed data points across each clusters and each dimensions, thus $\mathrm{n}_{\min}^{\textnormal{feature}}\geq \mathrm{n}_{\min}^{\textnormal{comp}}$ and the equivalence holds only when there is no missingness. 

Then, we give Lemma~\ref{lemma_converge_to_truth_condition_centers_bound} about a non-asymptotic bound for $D(\widehat{\bm{\mathrm{M}}}_n,\bm{M}^{\ast\ast})$. Here, the function $D:\mathbb{R}^{p\times k}\times \mathbb{R}^{p\times k} \mapsto \mathbb{R}$ is defined by 
\begin{align*}
    D(\bm{M},\bm{V})=\min_{\pi} \max_{l=1,\dots,k} \| \bm{\mu}_{\pi(l)} - \bm{v}_l \|_2,
\end{align*}
where $\pi$ is a bijection from $\{1,\dots,k\}$ to $\{1,\dots,k\}$, that is, a permutation of $\{1,\dots,k\}$. 

\begin{lemma}
\label{lemma_converge_to_truth_condition_centers_bound}
    Under Assumption~\ref{assumption_unique_minimizer} and suppose a fixed $n$ large enough such that the set $\{i\;|\; l=\mathop{\arg\min}_{l=1,\dots,k} \| \bm{\mathrm{x}}_i - \bm{\mu}_l^{\ast\ast} \|_2,\; \bm{r}=\bm{\mathrm{r}}_i\}\neq \emptyset$ for any $l=1,\dots,k$ and non-zero vector $\bm{r}\in\{0,1\}^p$. 
    If $\rho^{\ast\ast}> 4\mathrm{b}_n \sqrt{n/\mathrm{n}_{\min}^{\textnormal{feature}}}$ holds, then we have $D(\widehat{\bm{\mathrm{M}}}_n,\bm{M}^{\ast\ast}) \leq 2\mathrm{b}_n \sqrt{n/\mathrm{n}_{\min}^{\textnormal{comp}}}$. 
\end{lemma}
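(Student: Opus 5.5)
The approach is a deterministic ``comparison of objective values'' argument in the style of well-separated $k$-means analyses. Everything is conditioned on the realized sample $(\bm{\mathrm{x}}_i,\bm{\mathrm{r}}_i)_{i=1}^n$, and all that is used is that $\widehat{\bm{\mathrm{M}}}_n$ is a global minimizer of $\widehat{L}_n$.

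\textbf{Step 1 (upper bound at the truth).} I will evaluate the objective at the true centers. Since each $\mathrm{r}_{ij}\in\{0,1\}$,
\begin{align*}
\widehat{L}_n(\widehat{\bm{\mathrm{M}}}_n)\le \widehat{L}_n(\bm{M}^{\ast\ast})\le \frac1n\sum_{i=1}^n \min_l\|\bm{\mathrm{x}}_i-\bm{\mu}^{\ast\ast}_l\|_2^2\le \mathrm{b}_n^2 .
\end{align*}
Hence $\sum_i \phi(\bm{\mathrm{x}}_i,\bm{\mathrm{r}}_i,\widehat{\bm{\mathrm{M}}}_n)\le n\mathrm{b}_n^2$. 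The key consequence is that any subset $I$ of indices all assigned to a common estimated center $\widehat{\bm{\mu}}_t$ satisfies $\sum_{i\in I}\|(\bm{\mathrm{x}}_i-\widehat{\bm{\mu}}_t)\circ\bm{\mathrm{r}}_i\|_2^2\le n\mathrm{b}_n^2$.

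\textbf{Step 2 (coordinatewise localization, using $\mathrm{n}_{\min}^{\textnormal{feature}}$).} Fix a true cluster $l$ and a coordinate $j$. Let $I_{lj}$ be the set of points whose nearest true center is $l$ and which have $\mathrm{r}_{ij}=1$; this set has size at least $\mathrm{n}_{\min}^{\textnormal{feature}}$. I want to show that some estimated center $\widehat{\bm{\mu}}_t$ satisfies $|\widehat\mu_{tj}-\mu^{\ast\ast}_{lj}|\le \mathrm{b}_n\sqrt{n/\mathrm{n}_{\min}^{\textnormal{feature}}}$ or so. The argument is by averaging: for $i\in I_{lj}$ we have $|\mathrm{x}_{ij}-\mu^{\ast\ast}_{lj}|\le\mathrm{b}_n$, and the total squared coordinate error $\sum_{i\in I_{lj}}(\mathrm{x}_{ij}-\widehat\mu_{t(i)j})^2$ is at most $n\mathrm{b}_n^2$. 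So a point $i$ (indeed most points) has $|\mathrm{x}_{ij}-\widehat\mu_{t(i)j}|\le \mathrm{b}_n\sqrt{n/\mathrm{n}_{\min}^{\textnormal{feature}}}$, and the triangle inequality gives the claim.

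Combining this with the separation $\rho^{\ast\ast}>4\mathrm{b}_n\sqrt{n/\mathrm{n}^{\textnormal{feature}}_{\min}}$ then yields an injectivity statement. The intervals of radius $\rho^{\ast\ast}/2$ around the values $\mu^{\ast\ast}_{lj}$ are disjoint in each coordinate, so no estimated center can serve two true clusters in any coordinate. Since there are $k$ estimated and $k$ true centers, this induces a bijection $\pi_j$ for each coordinate $j$.

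\textbf{Step 3 (the main obstacle: consistency of the matching across coordinates).} Here I will use the nonemptiness assumption over all nonzero patterns $\bm r$. In particular, I will use patterns with $\bm r=\bm 1_p$ and patterns with two observed coordinates. These show that a point of true cluster $l$ observed in coordinates $j$ and $j'$ is assigned to a single $\widehat{\bm\mu}_t$. If $\pi_j\neq\pi_{j'}$, that point would incur cost exceeding $(\rho^{\ast\ast}/2)^2$ in one coordinate, which I plan to contradict by a swap or exchange argument, possibly by comparing with a modified $\bm M$. I expect this to be the delicate step, since a single point contributes only $1/n$ to the loss. The planned fix is to exploit the many points per pattern via the counts and the fact that the averaging bound holds for most points, not just one. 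Having established a common permutation $\pi$, I will finish by restricting to complete points $\bm{\mathrm{r}}_i=\bm 1_p$ of each cluster $l$: there are at least $\mathrm{n}^{\textnormal{comp}}_{\min}$ of them, all assigned to $\widehat{\bm\mu}_{\pi(l)}$. Their total cost is at most $n\mathrm b_n^2$, so some such point is within $\mathrm b_n\sqrt{n/\mathrm n^{\textnormal{comp}}_{\min}}$ of $\widehat{\bm\mu}_{\pi(l)}$ in full $\ell_2$, and the triangle inequality gives $D(\widehat{\bm{\mathrm M}}_n,\bm M^{\ast\ast})\le 2\mathrm b_n\sqrt{n/\mathrm n^{\textnormal{comp}}_{\min}}$.
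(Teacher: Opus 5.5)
Your Steps 1 and 2 are sound. Comparing with $\bm{M}^{\ast\ast}$ gives $\sum_i\phi(\bm{\mathrm{x}}_i,\bm{\mathrm{r}}_i,\widehat{\bm{\mathrm{M}}}_n)\le n\mathrm{b}_n^2$. Averaging over the at least $\mathrm{n}_{\min}^{\textnormal{feature}}$ points of cluster $l$ observed in coordinate $j$ then gives, for every $j$, a bijection $\pi_j$ with $|\hat\mu_{\pi_j(l),j}-\mu^{\ast\ast}_{lj}|\le c:=\mathrm{b}_n(1+\sqrt{n/\mathrm{n}_{\min}^{\textnormal{feature}}})<\rho^{\ast\ast}/2$.

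The genuine gap is Step 3, which is only a plan, and the counting plan cannot work. A point observed in a single coordinate $j$ has loss $\min_t(\mathrm{x}_{ij}-\hat\mu_{tj})^2$. This depends only on the set $\{\hat\mu_{tj}\}_t$, so such points cannot see how values are matched across coordinates. A conflict $\pi_j(l)\neq\pi_{j'}(l)$ is detected only by points observed in at least two coordinates; for example, each point of cluster $l$ observed in both $j$ and $j'$ pays at least $(\rho^{\ast\ast}-c-\mathrm{b}_n)^2$. The number of such points is bounded below only by $\mathrm{n}_{\min}^{\textnormal{comp}}$ (or by $1$, via the nonemptiness assumption), never by $\mathrm{n}_{\min}^{\textnormal{feature}}$. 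Comparing that penalty with the budget $n\mathrm{b}_n^2$ therefore gives a contradiction only under a condition like $\rho^{\ast\ast}\gtrsim\mathrm{b}_n\sqrt{n/\mathrm{n}_{\min}^{\textnormal{comp}}}$, which the hypothesis does not provide. Counting ``many points per pattern'' cannot change this.

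The exchange you mention is the route that can work, but you have to carry it out, and it needs more margin.
\begin{itemize}
\item \emph{Construction.} Fix $\pi$ and define $\widetilde{\bm{M}}$ by $\tilde\mu_{\pi(l),j}=\hat\mu_{\pi_j(l),j}$, i.e.\ permute the values within each coordinate. Single-coordinate losses are unchanged.
\item \emph{Comparison.} A point of cluster $l$ with observed set $S$ satisfies $\phi(\bm{\mathrm{x}}_i,\bm{\mathrm{r}}_i,\widetilde{\bm{M}})\le\sum_{j\in S}(\mathrm{x}_{ij}-\hat\mu_{\pi_j(l),j})^2\le\phi(\bm{\mathrm{x}}_i,\bm{\mathrm{r}}_i,\widehat{\bm{\mathrm{M}}}_n)$, \emph{provided} $\hat\mu_{\pi_j(l),j}$ is the element of $\{\hat\mu_{tj}\}_t$ nearest to $\mathrm{x}_{ij}$ for each $j\in S$.
\item \emph{Contradiction.} A complete point of a crossed cluster exists by the nonemptiness assumption with $\bm{r}=\bm{1}_p$. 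It makes the inequality strict, contradicting optimality, provided also $\widetilde{\bm{M}}\in\mathcal{X}^k$.
\end{itemize}
However, that nearest-value property requires $\mathrm{b}_n+c<\rho^{\ast\ast}-c-\mathrm{b}_n$, i.e.\ $\rho^{\ast\ast}>4\mathrm{b}_n+2\mathrm{b}_n\sqrt{n/\mathrm{n}_{\min}^{\textnormal{feature}}}$. Your hypothesis implies this only when $\mathrm{n}_{\min}^{\textnormal{feature}}\le n/4$, which is automatic for $k\ge 4$ since $\mathrm{n}_{\min}^{\textnormal{feature}}\le n/k$. For $k\in\{2,3\}$ you would need a sharper localization, or an extra $O(\mathrm{b}_n)$ margin like the $2\mathrm{b}_n$ in Lemma~\ref{lemma_converge_to_truth_perfect_label}.

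Your last step has a smaller problem of the same type. The claim that all complete points of cluster $l$ are assigned to $\hat{\bm{\mu}}_{\pi(l)}$ is unjustified without that margin. You do not need it: once a common $\pi$ is known, $\min_t\|\bm{\mu}^{\ast\ast}_l-\hat{\bm{\mu}}_t\|_2=\|\bm{\mu}^{\ast\ast}_l-\hat{\bm{\mu}}_{\pi(l)}\|_2$. Minkowski's inequality over the complete points of cluster $l$ then gives the bound directly; this is what the paper's surrogate $\widetilde{D}$ does.

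The paper does not supply your Step 3 either. It shows $\widetilde{D}\le 2\mathrm{b}_n$ and lower-bounds $\widetilde{D}$ by a two-case split. Its case (b) asserts that failure of a common permutation forces two true centers to share a nearest estimated coordinate in some $j_0$. That omits exactly your crossed configuration, e.g.\ $\bm{\mu}^{\ast\ast}_1=(0,0)^T$, $\bm{\mu}^{\ast\ast}_2=(\rho^{\ast\ast},\rho^{\ast\ast})^T$, $\hat{\bm{\mu}}_1=(0,\rho^{\ast\ast})^T$, $\hat{\bm{\mu}}_2=(\rho^{\ast\ast},0)^T$. You found the real crux, but it remains unproved.
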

The next Lemma~\ref{lemma_converge_to_truth_perfect_label} provides a sufficient condition under which the cluster assignment given by $\widehat{\bm{\mathrm{M}}}_n$ coincides with that given by $\bm{M}^{\ast\ast}$ almost everywhere. 
\begin{lemma}
\label{lemma_converge_to_truth_perfect_label}
    Under Assumption~\ref{assumption_unique_minimizer} and suppose a fixed $n$ large enough such that the set $\{i\;|\; l=\mathop{\arg\min}_{l=1,\dots,k} \| \bm{\mathrm{x}}_i - \bm{\mu}_l^{\ast\ast} \|_2,\; \bm{r}=\bm{\mathrm{r}}_i\}\neq \emptyset$ for any $l=1,\dots,k$ and non-zero vector $\bm{r}\in\{0,1\}^p$. 
    If $\rho^{\ast\ast} > 4\mathrm{b}_n\sqrt{n/\mathrm{n}_{\min}^{\textnormal{comp}}} + 2\mathrm{b}_n $ holds, then there must exist a permutation $\pi$ such that 
    \begin{align*}
        \mathop{\arg\min}\limits_{t=1,\dots,k} \| \bm{\mathrm{x}}_i\circ\bm{\mathrm{r}}_i - \hat{\bm{\mu}}_t \circ\bm{\mathrm{r}}_i \|_2 = \pi\left( \mathop{\arg\min}\limits_{l=1,\dots,k} \| \bm{\mathrm{x}}_i - \bm{\mu}_l^{\ast\ast} \|_2 \right)
    \end{align*}
    holds for all $i=1,\dots,n$ whenever both $\mathop{\arg\min}$'s are unique.
\end{lemma}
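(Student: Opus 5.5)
Since the hypothesis $\rho^{\ast\ast} > 4\mathrm{b}_n\sqrt{n/\mathrm{n}_{\min}^{\textnormal{comp}}} + 2\mathrm{b}_n$ implies the hypothesis of Lemma~\ref{lemma_converge_to_truth_condition_centers_bound} (because $\mathrm{n}_{\min}^{\textnormal{feature}}\geq \mathrm{n}_{\min}^{\textnormal{comp}}$ gives $4\mathrm{b}_n\sqrt{n/\mathrm{n}_{\min}^{\textnormal{feature}}}\leq 4\mathrm{b}_n\sqrt{n/\mathrm{n}_{\min}^{\textnormal{comp}}}$), I will first invoke that lemma. This yields a permutation $\pi$ with $\|\hat{\bm{\mu}}_{\pi(l)} - \bm{\mu}_l^{\ast\ast}\|_2 \leq \epsilon_n := 2\mathrm{b}_n\sqrt{n/\mathrm{n}_{\min}^{\textnormal{comp}}}$ for every $l$. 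The stated condition then reads $\rho^{\ast\ast} > 2\epsilon_n + 2\mathrm{b}_n$, and the rest of the argument is a coordinatewise triangle-inequality computation.

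Fix $i$ with $\mathrm{r}_i\neq \bm{0}_p$ (if $\mathrm{r}_i=\bm{0}_p$, all distances are zero, so the argmin is not unique and nothing needs to be proved). Let $l$ be the true label, so that $\|\bm{\mathrm{x}}_i-\bm{\mu}_l^{\ast\ast}\|_2\leq \mathrm{b}_n$. The goal is to show that for every $t\neq l$,
\begin{align*}
\|(\bm{\mathrm{x}}_i-\hat{\bm{\mu}}_{\pi(l)})\circ\bm{\mathrm{r}}_i\|_2 < \|(\bm{\mathrm{x}}_i-\hat{\bm{\mu}}_{\pi(t)})\circ\bm{\mathrm{r}}_i\|_2 .
\end{align*}
For the upper bound on the left-hand side, use $\|\bm{v}\circ\bm{r}\|_2\leq\|\bm{v}\|_2$ to get $\|(\bm{\mathrm{x}}_i-\hat{\bm{\mu}}_{\pi(l)})\circ\bm{\mathrm{r}}_i\|_2 \leq \mathrm{b}_n+\epsilon_n$. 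For the lower bound on the right-hand side, pick an observed coordinate $j$ with $\mathrm{r}_{ij}=1$, which exists because $\mathrm{r}_i\neq\bm{0}_p$. Then
\begin{align*}
\|(\bm{\mathrm{x}}_i-\hat{\bm{\mu}}_{\pi(t)})\circ\bm{\mathrm{r}}_i\|_2 \geq |\mathrm{x}_{ij}-\hat{\mu}_{\pi(t)j}| \geq |\mu^{\ast\ast}_{lj}-\mu^{\ast\ast}_{tj}| - |\mathrm{x}_{ij}-\mu^{\ast\ast}_{lj}| - |\mu^{\ast\ast}_{tj}-\hat{\mu}_{\pi(t)j}| \geq \rho^{\ast\ast}-\mathrm{b}_n-\epsilon_n .
\end{align*}
Therefore the strict inequality holds whenever $\rho^{\ast\ast}-\mathrm{b}_n-\epsilon_n > \mathrm{b}_n+\epsilon_n$, that is, whenever $\rho^{\ast\ast}>2\mathrm{b}_n+2\epsilon_n$. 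This is exactly the hypothesis. Hence $\pi(l)$ is the unique minimizer of the missing-data distance, so the two argmins agree through $\pi$.

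\textbf{Main obstacle.} The key point is using the per-dimension separation $\rho^{\ast\ast}$, which survives any nonempty observed pattern; this is why coordinatewise distinctness is needed rather than just distinctness in $\mathbb{R}^p$. The remaining work is bookkeeping: checking that the label ties and the $\min$ convention in the argmin are harmless under the uniqueness proviso in the statement. The heavy lifting sits entirely inside Lemma~\ref{lemma_converge_to_truth_condition_centers_bound}, which I take as given.
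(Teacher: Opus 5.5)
Your proof is correct and follows the paper's argument. Like the paper, you invoke Lemma~\ref{lemma_converge_to_truth_condition_centers_bound} to get a permutation with $\|\hat{\bm{\mu}}_{\pi(l)}-\bm{\mu}_l^{\ast\ast}\|_2\leq \epsilon_n$. You then compare the upper bound $\mathrm{b}_n+\epsilon_n$ for the correctly matched center against the lower bound $\rho^{\ast\ast}-\mathrm{b}_n-\epsilon_n$ taken on an observed coordinate; you argue directly where the paper argues by contradiction. Your explicit derivation of that lemma's hypothesis from $\mathrm{n}_{\min}^{\textnormal{feature}}\geq \mathrm{n}_{\min}^{\textnormal{comp}}$, and your treatment of $\bm{\mathrm{r}}_i=\bm{0}_p$, fill in steps the paper leaves implicit.
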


The following theorem shows that if the true cluster centers are well-separated in each dimension, then $\widehat{\bm{\mathrm{M}}}_n$ will converge to $\bm{M}^{\ast\ast}$ under MCAR mechanism. 
\begin{assumption}
\label{assumption_max_within_cluster_distance}
    Suppose that there exists a constant $\beta^{\ast\ast}>0$ such that $\mathrm{b}_n \leq \beta^{\ast\ast}$ almost surely. 
\end{assumption}
\begin{theorem}
\label{theorem_converge_to_truth}
    Under Assumptions~\ref{assumption_unique_minimizer}-\ref{assumption_mcar} and \ref{assumption_max_within_cluster_distance}, 
    if $\lim_{n\rightarrow \infty} \textnormal{Pr}\left( \rho^{\ast\ast}/\beta^{\ast\ast} > 4\sqrt{n/\mathrm{n}_{\min}^{\textnormal{comp}}} + 2  \right) = 1$, then we have for any $\epsilon>0$, it holds that $\lim_{n\rightarrow\infty}  \textnormal{Pr}\left( D(\widehat{\bm{\mathrm{M}}}_n,\bm{M}^{\ast\ast}) > \epsilon \right) =0$.
\end{theorem}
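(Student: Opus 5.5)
The argument combines Lemma~\ref{lemma_converge_to_truth_perfect_label} with the consistency result of Theorem~\ref{theorem_consistency}. The point is to show that, with probability tending to one, the population-level optimizer $\bm{M}^{\ast}$ coincides with $\bm{M}^{\ast\ast}$ up to a permutation. Then $D(\widehat{\bm{\mathrm{M}}}_n,\bm{M}^{\ast\ast})\le \|\textnormal{vec}(\widehat{\bm{\mathrm{M}}}_n-\bm{M}^{\ast}_{\pi})\|_2$, and this tends to zero in probability.

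\textbf{Step 1: a high-probability event.} Define $E_n$ as the event on which three things hold:
\begin{itemize}
\item every cell $\{i: l=\arg\min_{l'}\|\bm{\mathrm{x}}_i-\bm{\mu}^{\ast\ast}_{l'}\|_2,\ \bm{\mathrm{r}}_i=\bm{r}\}$ is nonempty;
\item $\rho^{\ast\ast}/\beta^{\ast\ast}>4\sqrt{n/\mathrm{n}_{\min}^{\textnormal{comp}}}+2$;
\item $\mathrm{b}_n\le\beta^{\ast\ast}$.
\end{itemize}
Under MCAR (Assumption~\ref{assumption_mcar}), each pair of a cluster $l$ with positive mass and a pattern $\bm{r}$ with positive probability has positive joint probability. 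A union bound over these finitely many cells therefore shows the first condition fails with probability decaying geometrically in $n$. The second condition holds with probability tending to one by hypothesis, and the third holds almost surely by Assumption~\ref{assumption_max_within_cluster_distance}. Since $\mathrm{n}_{\min}^{\textnormal{feature}}\ge \mathrm{n}_{\min}^{\textnormal{comp}}$, the condition of Lemma~\ref{lemma_converge_to_truth_centers_bound} is implied on $E_n$, and so is that of Lemma~\ref{lemma_converge_to_truth_perfect_label}. Hence $\Pr(E_n)\to1$.

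\textbf{Step 2: identify $\widehat{\bm{\mathrm{M}}}_n$ on $E_n$.} By Lemma~\ref{lemma_converge_to_truth_perfect_label}, on $E_n$ the missing-data assignments agree with the true assignments up to a permutation $\pi$, for all $i$ with unique argmins. By Assumption~\ref{assumption_surface_zeromeasure} and the separation, ties occur only on a null set. The first-order condition for minimizing $\widehat{L}_n$ coordinatewise then gives an explicit form for the estimator: $\hat{\mu}_{\pi(l)j}$ is the average of $\mathrm{x}_{ij}$ over the points in true cluster $l$ with $\mathrm{r}_{ij}=1$. Patterns with $\mathrm{r}_{ij}=0$ do not affect coordinate $j$.

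\textbf{Step 3: pass to the limit.} Under MCAR, $\mathrm{r}_{ij}$ is independent of $\bm{\mathrm{x}}_i$. The law of large numbers then gives that this observed-coordinate mean converges to $\mathbb{E}[\mathrm{x}_{1j}\mid \bm{\mathrm{x}}_1\in\text{true cell } l]$. That quantity is exactly $\mu^{\ast\ast}_{lj}$, because $\bm{M}^{\ast\ast}$ satisfies the complete-data centroid condition (Voronoi cell means), guaranteed by uniqueness in Assumption~\ref{assumption_unique_minimizer} and zero-mass boundaries. A finite union over $l,j$, intersected with $E_n$, then yields $\Pr(D(\widehat{\bm{\mathrm{M}}}_n,\bm{M}^{\ast\ast})>\epsilon)\le \Pr(E_n^c)+o(1)\to0$.

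\textbf{Main obstacle.} The hard part is Step 2. The permutation $\pi$ and the assignment from Lemma~\ref{lemma_converge_to_truth_perfect_label} are defined in terms of $\widehat{\bm{\mathrm{M}}}_n$ itself. So the argument must be a fixed-point one: the global minimizer induces the true partition, and the minimizer of $\widehat{L}_n$ given a fixed partition is the observed-coordinate mean, so the global minimizer equals these means.

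A second difficulty is that the true-cluster labels in Step 3 are random. Since they are determined by the fixed $\bm{M}^{\ast\ast}$ and not by the data, the summands are still i.i.d., and the LLN applies. If the fixed-point argument proves delicate, an alternative is to show $L(\bm{M}^{\ast\ast})=\min L$ using the separation bound at the population level. Theorem~\ref{theorem_consistency} then gives the result directly.
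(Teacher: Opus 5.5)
Your Steps 1--3 are essentially the paper's proof. On the high-probability separation event, Lemma~\ref{lemma_converge_to_truth_perfect_label} makes the missing-data assignments agree with the true ones up to a permutation. The global minimizer is then the observed-coordinate mean of each true cluster. MCAR, the law of large numbers and the centroid property of $\bm{M}^{\ast\ast}$ then send these means to $\mu^{\ast\ast}_{lj}$.

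Your opening framing through Theorem~\ref{theorem_consistency} and $\bm{M}^{\ast}=\bm{M}^{\ast\ast}$ is never used in those steps and can be dropped. Your order of substituting the true labels before invoking MCAR is slightly cleaner than the paper's: the paper applies the MCAR reduction while $\hat{\mathrm{z}}_i$ still depends on $\bm{\mathrm{r}}_i$.
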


There are several remarks that should be noted. 
First, the separation condition of true cluster centers given in Theorem~\ref{theorem_converge_to_truth} ensures not only the convergence of $\widehat{\bm{\mathrm{M}}}_n$ to $\bm{M}^{\ast\ast}$, but also the exact recovery of cluster labels with high probability. 
Second, since $\beta^{\ast\ast}$ reflects the upper bound of maximum within-cluster deviation, then the $\rho^{\ast\ast}/\beta^{\ast\ast}$ roughly serves as signal-noise-ratio of the distribution of $\bm{\mathrm{x}}_i$. Moreover, since Assumption~\ref{assumption_max_within_cluster_distance} implies $\bm{\mathrm{x}}_i\in \bigcup_{l=1}^{k} \mathcal{B}(\bm{\mu}_l^{\ast\ast},\beta^{\ast\ast})$ for any $i=1,\dots,k$, based on which Assumption~\ref{assumption_X_compact} holds. Thus, we here only require Assumption~\ref{assumption_max_within_cluster_distance}. 
Third, since $\mathrm{n}_{\min}^{\textnormal{comp}}$ is the minimum number of complete data points of each cluster, and missingness is independent with $\bm{\mathrm{x}}_i$'s distribution under MCAR mechanism (i.e., Assumption~\ref{assumption_mcar}), then $\mathrm{n}_{\min}^{\textnormal{comp}}/n$ is equivalent to the frequency of complete data points in the minimum cluster. 
Therefore, to ensure the convergence to true cluster centers, the required minimum separation $\rho^{\ast\ast}/\beta^{\ast\ast}$ is determined by the minimum cluster proportion and the probability of complete data points appearing. 

\begin{example}
    For $p\geq 2$ and $k\geq 2$, suppose each $\bm{\mathrm{x}}_i$ belongs to each cluster with equal probability $1/k$, and consider MCAR mechanism with $\textnormal{Pr}(\mathrm{x}_{ij} \textnormal{ is observed})=q$ ($0<q \leq 1$), then we have $\mathrm{n}_{\min}^{\textnormal{comp}}/n\xrightarrow{P} q^p/k$. It follows that the required separation condition should be $\rho^{\ast\ast}/\beta^{\ast\ast} > 4\sqrt{k/q^p}+2$. Therefore, the separation condition is easier to satisfy when observed probability $q$ is larger, the number of clusters $k$ is smaller, or the dimension of data $p$ is smaller. 
    For example, consider $p=2$, $k=2$ and true cluster centers given by $\bm{\mu}_1^{\ast\ast}=(0,0)^T$ and $\bm{\mu}_2^{\ast\ast}=(\rho^{\ast\ast},\rho^{\ast\ast})^T$. If $q=1/2$, then $\mathrm{n}_{\min}^{\textnormal{comp}}/n\xrightarrow{P} 1/8$. It follows that $4\sqrt{n/\mathrm{n}_{\min}^{\textnormal{comp}}} + 2 \xrightarrow{P} 8\sqrt{2}+2$. Thus, the separation condition should be $\rho^{\ast\ast}/\beta^{\ast\ast} > 8\sqrt{2}+2$. 
    In addition, if we consider the no-missingness case (i.e., $q=1$), because $\mathrm{n}_{\min}^{\textnormal{comp}}/n\xrightarrow{P} 1/2$, then $4\sqrt{n/\mathrm{n}_{\min}^{\textnormal{comp}}} + 2 \xrightarrow{P} 4\sqrt{2}+2$. Thus, the separation condition for full observed data is $\rho^{\ast\ast}/\beta^{\ast\ast}>4\sqrt{2}+2$. 
    In comparison, the required separation condition of $\rho^{\ast\ast}/\beta^{\ast\ast}$ under 50\% missingness is about 1.74 times higher than that required for fully observed data, implying the increased difficulty in recovering true cluster structure from incomplete data. 
\end{example}

Furthermore, the separation condition given in Theorem~\ref{theorem_converge_to_truth} highlights an important fact that if the true cluster centers $\bm{\mu}_l^{\ast\ast}$'s are overlapped in some dimension, which means $\rho^{\ast\ast}=0$, then the separation condition would never be satisfied. 
However, such overlapped cluster centers are very common in the high-dimensional data, where there exist many noise dimensions in which we suppose true cluster centers are the same. As a consequence, our result actually provide an explanation for the failure of missing-data-$k$-means in high-dimensional data, and further clarifies the scope of application of missing-data-$k$-means.

\section{Numerical experiments}

\subsection{Asymptotic behavior of estimated cluster centers}

In this section, we verify the theoretical results of Section~\ref{sec_finite_result}-\ref{sec_convergence_rate} via numerical experiments on synthetic incomplete datasets, which are constructed by artificially setting missing on original complete datasets. 

To construct such a missing dataset, we first generate a dataset containing $n$ fully observed data points $\bm{x}_i$ ($i=1,\dots,n$), which consists of $k=3$ clusters in $\mathbb{R}^2$. For any $l=1,\dots,k$, the probability of each $\bm{x}_i$ belonging to $l$-th cluster is equal to $1/k$, and all $\bm{x}_i$'s in $l$-th cluster are drawn independently from a truncated normal distribution that is derived from a base Gaussian distribution $\mathcal{N}(\bm{\mu}_l^{\ast\ast},\bm{I}_2)$  with each dimension bounded within the interval $[-10,10]$. Moreover, we consider $\bm{\mu}_l^{\ast\ast}$'s as follows: 
\begin{align*}
    \bm{\mu}_1^{\ast\ast}=\left(-\frac{\sqrt{6}}{2}, -\frac{\sqrt{6}}{2}\right)^T,\quad
    \bm{\mu}_2^{\ast\ast}=\left( \frac{\sqrt{6}+3\sqrt{2} }{4} , \frac{\sqrt{6}-3\sqrt{2} }{4} \right)^T,\quad
    \bm{\mu}_3^{\ast\ast}=\left( \frac{\sqrt{6}-3\sqrt{2} }{4} , \frac{\sqrt{6}+3\sqrt{2} }{4} \right)^T.
\end{align*}
Secondly, for each $x_{ij}$ ($i=1,\dots,n$, $j=1,2$), we artificially let it be missing as follows: 
\begin{itemize}
    \item MCAR: The missing probability is set to be a constant. For any $i=1,\dots,n$ and $j=1,\dots,p$, \begin{align*}
        \text{Pr}(x_{ij}\text{ is missing}) = \tau.
    \end{align*}
    \item MNAR: The missing probability is determined by the value of the data itself. For any $i=1,\dots,n$ and $j=1,\dots,p$,
    \begin{align*}
        \text{Pr}(x_{ij}\text{ is missing}) = \exp (-\lambda x_{ij}^2 ) .
    \end{align*}
\end{itemize}
Different $\tau\in (0,1)$ and $\lambda>0$ are used to meet the total missing proportion from 10\% to 50\%. 

To verify the theoretical results, the population-level optimal cluster centers $\bm{M}^{\ast}$ and the minimal value of $L(\bm{M})$ (denoted by $m^{\ast}$) are needed but often unknown. Thus, we generate a large sample with 30000 data points and use the estimated cluster centers as a substitute of $\bm{M}^{\ast}$ and the corresponding value of loss function as a substitute of $m^{\ast}$. 
Moreover, we denote by $\widehat{\bm{\mathrm{M}}}_{n,t}$ the estimated cluster centers obtained by conducting missing-data-$k$-means clustering on the incomplete dataset (with $n$ data points) in the $t$-th repetition, where $t=1,\dots,T$. Correspondingly, we denote by $\hat{m}_{n,t}$ the minimal value of loss function $\widehat{L}_n(\bm{M})$ in the $t$-th repetition. 

\subsubsection{Excess risk bound}

To verify the $O_p(n^{-1/2})$ excess risk bound, we consider $n\in \{300, 900, 1500, 3000, 6000, 9000, 12000, 15000\}$ and the number of repetitions $T=100$. We calculate the average of $\{\hat{m}_{n,1},\dots,\hat{m}_{n,T}\}$, denoted by $\hat{m}_n$, and then check the convergence of $\hat{m}_n$ to $m^{\ast}$. 
To this end, Figure~\ref{fig_excess_risk} illustrates how the loss gap $|\hat{m}_n - m^{\ast}|$ varies with different $n$ on a log-log scale. The red dashed line is the line with slope being -0.5, implying the theoretical trend. We can see that the empirical trend of $|\hat{m}_n - m^{\ast}|$ (black solid line) closely aligns with the theoretical trend (the red dashed line), which confirms the $\sqrt{n}$-convergence rate of $\hat{m}_n$ to $m^{\ast}$. Consequently, the $O_p(n^{-1/2})$ excess risk bound is verified. 

\begin{figure}[t]
\captionsetup[subfigure]{justification=centering}
    \centering
    \begin{subfigure}{0.23\textwidth}
        \includegraphics[width=\textwidth]{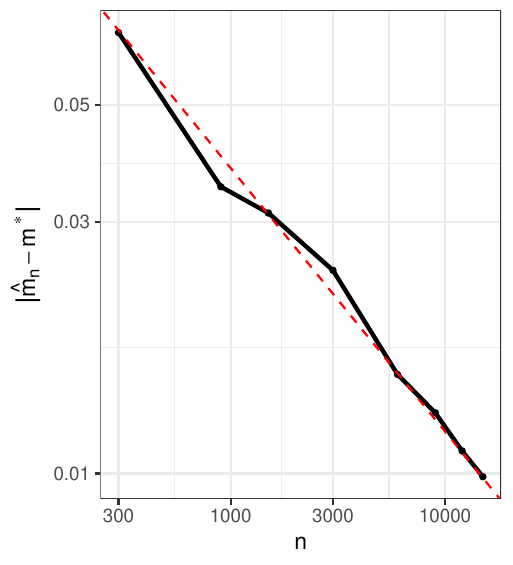}
        \caption*{MCAR, 10\%}
    \end{subfigure}
    \begin{subfigure}{0.23\textwidth}
        \includegraphics[width=\textwidth]{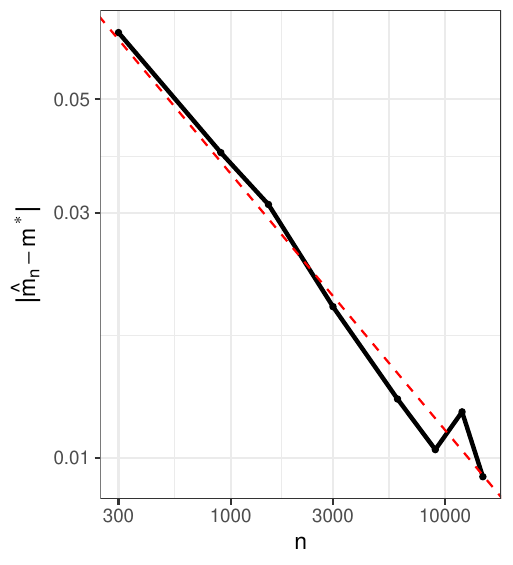}
        \caption*{MCAR, 30\%}
    \end{subfigure}
    \begin{subfigure}{0.23\textwidth}
        \includegraphics[width=\textwidth]{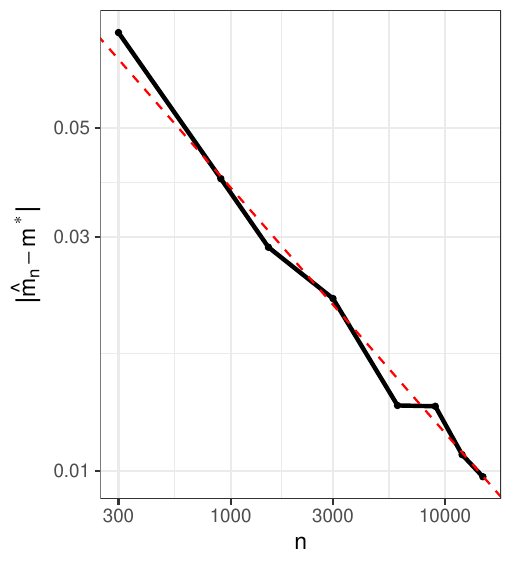}
        \caption*{MCAR, 50\%}
    \end{subfigure}
    \par
    \begin{subfigure}{0.23\textwidth}
        \includegraphics[width=\textwidth]{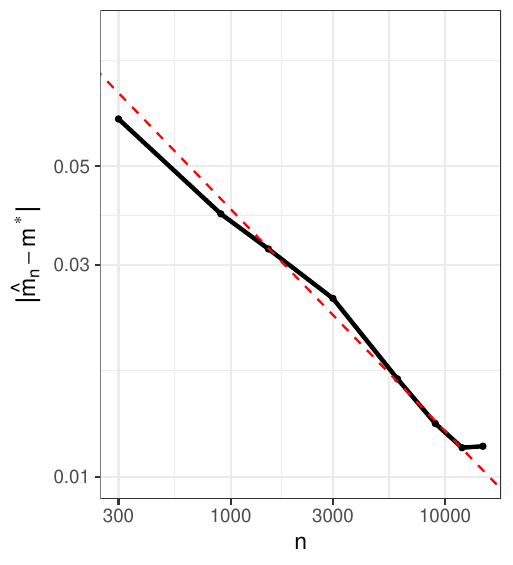}
        \caption*{MNAR, 10\%}
    \end{subfigure}
    \begin{subfigure}{0.23\textwidth}
        \includegraphics[width=\textwidth]{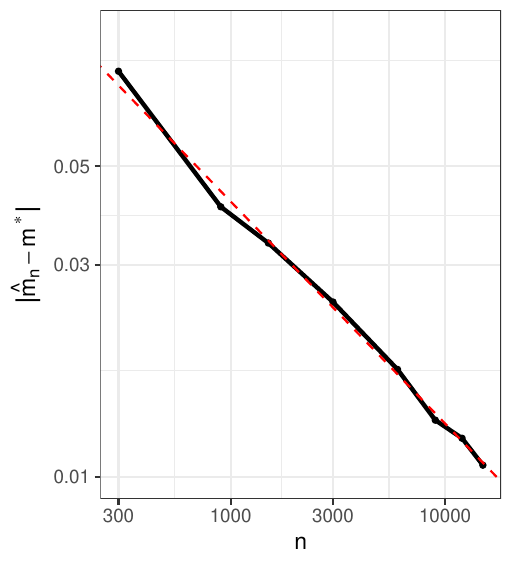}
        \caption*{MNAR, 30\%}
    \end{subfigure}
    \begin{subfigure}{0.23\textwidth}
        \includegraphics[width=\textwidth]{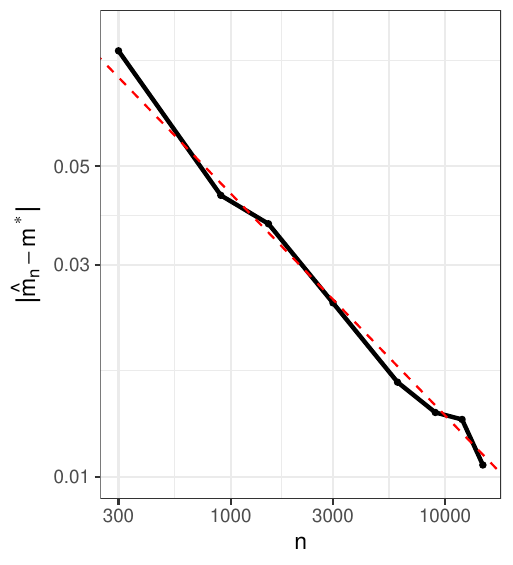}
        \caption*{MNAR, 50\%}
    \end{subfigure}
    \caption{The convergence of $\hat{m}_n$ to $m^{\ast}$ (log-log scale) under different missing mechanisms and missing proportions. The red dashed line is the line with slope being -0.5, implying the theoretical trend.}
    \label{fig_excess_risk}
\end{figure}

\subsubsection{Consistency}
For the consistency of estimated cluster centers, we consider $n\in \{300, 900, 1500, 3000, 6000, 9000, 12000, 15000\}$ and the number of repetitions $T=100$. 
Figure~\ref{fig_consistency} illustrates how the distance between $\widehat{\bm{\mathrm{M}}}_{n,t}$ and $\bm{M}^{\ast}$ varies with $n$. For each $n$, we report the average and error bar of $\{ \|\widehat{\bm{\mathrm{M}}}_{n,t} - \bm{M}^{\ast}\|_F^2\}_{t=1}^{T}$, which shows a decreasing trend to zero for each setting. Consequently, the consistency of estimated cluster centers is verified. 

\begin{figure}[htbp]
\captionsetup[subfigure]{justification=centering}
    \centering
    \begin{subfigure}{0.23\textwidth}
        \includegraphics[width=\textwidth]{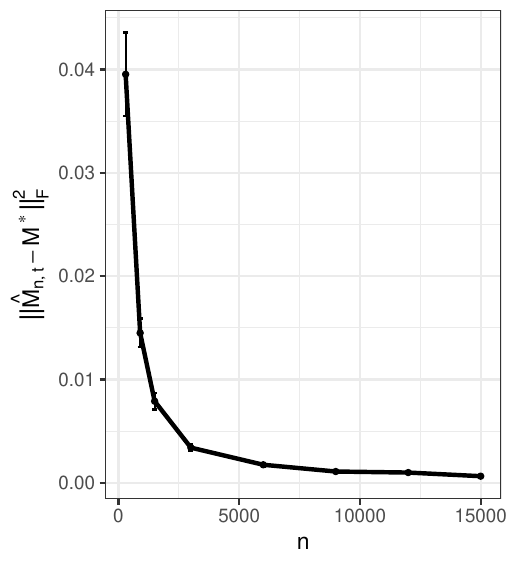}
        \caption*{MCAR, 10\%}
    \end{subfigure}
    \begin{subfigure}{0.23\textwidth}
        \includegraphics[width=\textwidth]{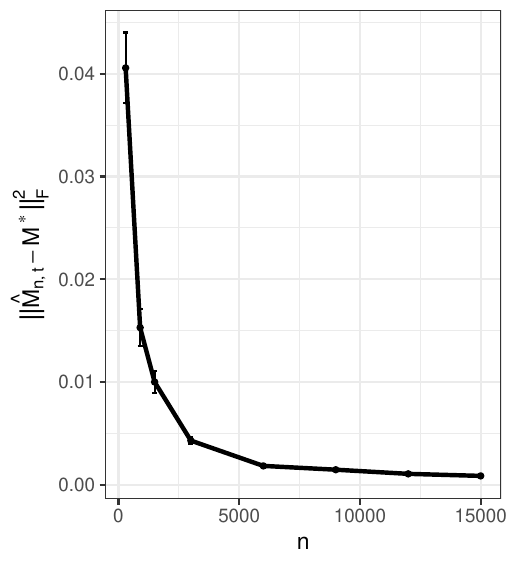}
        \caption*{MCAR, 30\%}
    \end{subfigure}
    \begin{subfigure}{0.23\textwidth}
        \includegraphics[width=\textwidth]{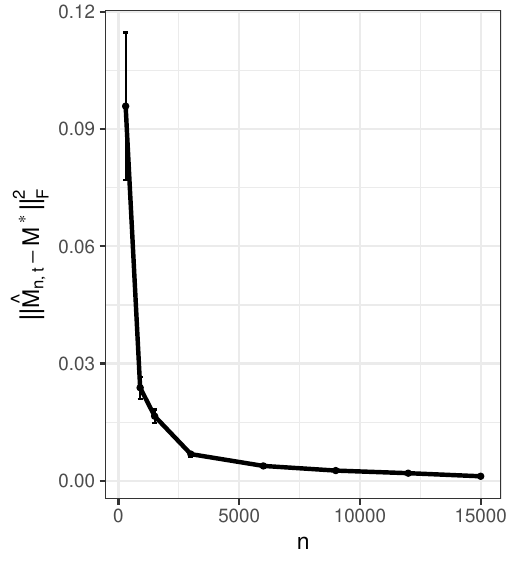}
        \caption*{MCAR, 50\%}
    \end{subfigure}
    \par
    \begin{subfigure}{0.23\textwidth}
        \includegraphics[width=\textwidth]{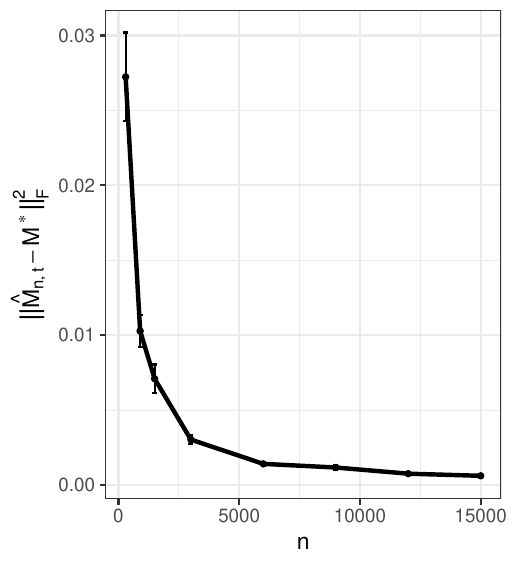}
        \caption*{MNAR, 10\%}
    \end{subfigure}
    \begin{subfigure}{0.23\textwidth}
        \includegraphics[width=\textwidth]{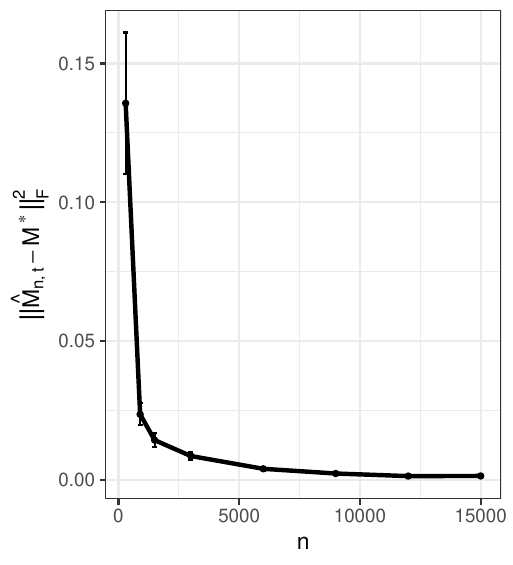}
        \caption*{MNAR, 30\%}
    \end{subfigure}
    \begin{subfigure}{0.23\textwidth}
        \includegraphics[width=\textwidth]{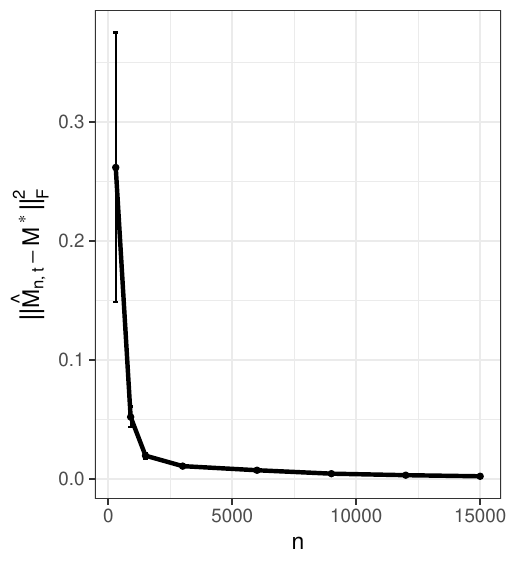}
        \caption*{MNAR, 50\%}
    \end{subfigure}
    \caption{The consistency of estimated cluster centers under different missing mechanisms and missing proportions. The y-axis is the $\|\widehat{\bm{\mathrm{M}}}_{n,t} - \bm{M}^{\ast}\|_F^2$. For each $n$, the average value and error bar of $T=100$ repetitions are reported. }
    \label{fig_consistency}
\end{figure}

\subsubsection{Convergence rate}
Figure~\ref{fig_convergence_rate} illustrates the convergence trend of $\widehat{\bm{\mathrm{M}}}_{n,t}$ to $\bm{M}^{\ast}$ on a log-log scale. For each $n$, the averaged value of $\|\widehat{\bm{\mathrm{M}}}_{n,t} - \bm{M}^{\ast}\|_F^2$ among $T=100$ repetitions are reported. The red dashed line is the line with slope being -1, implying the theoretical trend. 
We can see that the trend of empirical gap between estimated cluster centers and $\bm{M}^{\ast}$ (black solid line) closely aligns with the theoretical trend (the red dashed line), which confirms the $\sqrt{n}$-convergence rate of the estimated cluster centers. 
Moreover, not only MCAR, we can also see the alignment of trend under MNAR, which means that the $\sqrt{n}$-convergence rate may also hold under MNAR mechanism, even though our theoretical result about the $\sqrt{n}$-convergence rate is based on the MCAR mechanism. 

\begin{figure}[htbp]
\captionsetup[subfigure]{justification=centering}
    \centering
    \begin{subfigure}{0.23\textwidth}
        \includegraphics[width=\textwidth]{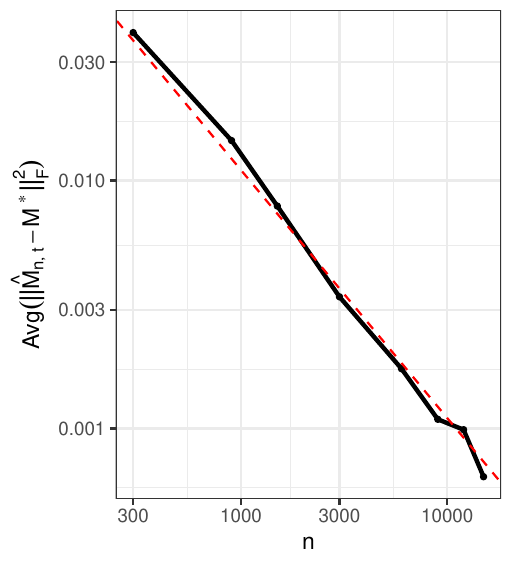}
        \caption*{MCAR, 10\%}
    \end{subfigure}
    \begin{subfigure}{0.23\textwidth}
        \includegraphics[width=\textwidth]{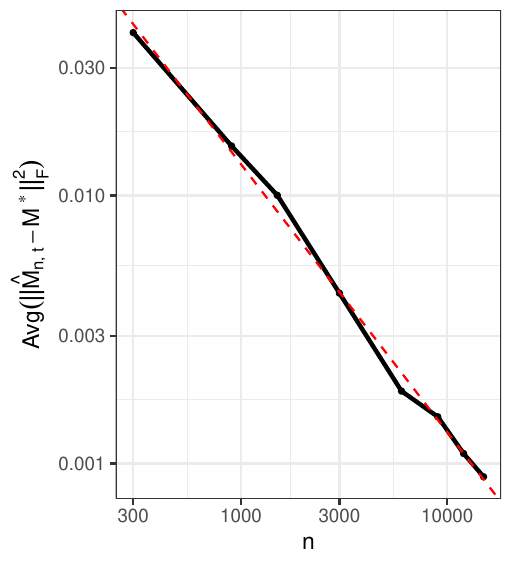}
        \caption*{MCAR, 30\%}
    \end{subfigure}
    \begin{subfigure}{0.23\textwidth}
        \includegraphics[width=\textwidth]{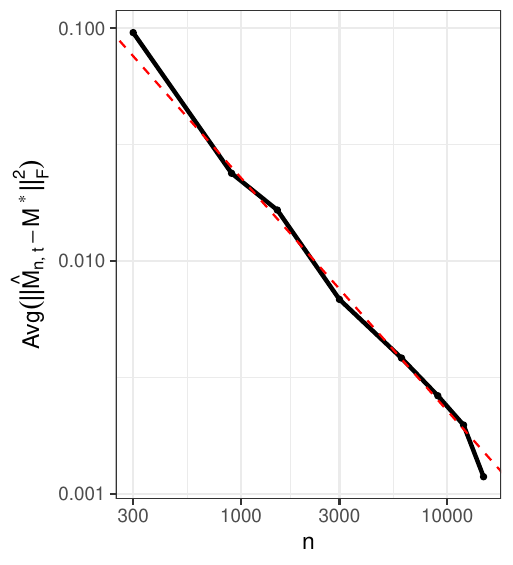}
        \caption*{MCAR, 50\%}
    \end{subfigure}
    \par
    \begin{subfigure}{0.23\textwidth}
        \includegraphics[width=\textwidth]{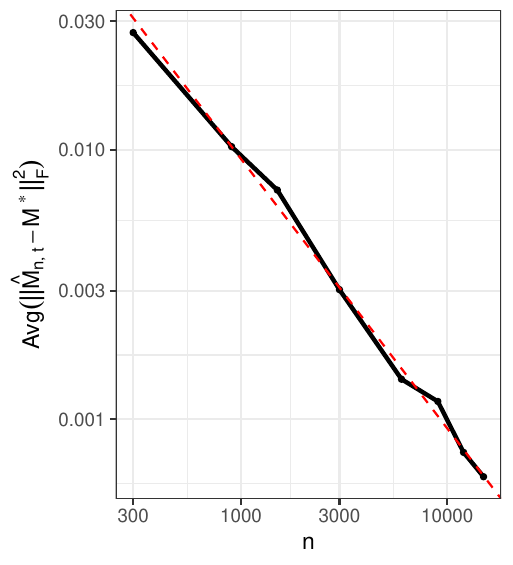}
        \caption*{MNAR, 10\%}
    \end{subfigure}
    \begin{subfigure}{0.23\textwidth}
        \includegraphics[width=\textwidth]{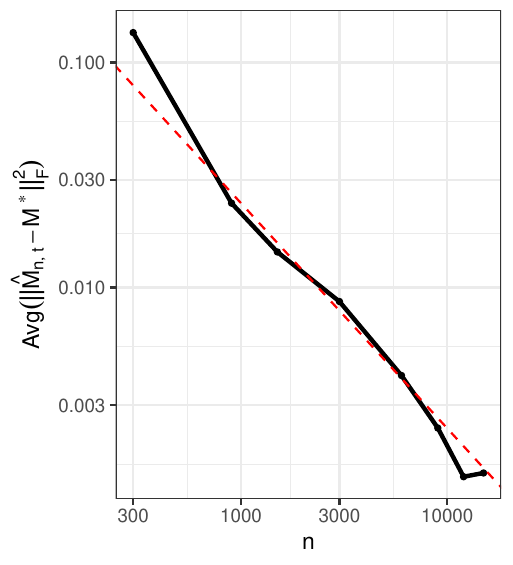}
        \caption*{MNAR, 30\%}
    \end{subfigure}
    \begin{subfigure}{0.23\textwidth}
        \includegraphics[width=\textwidth]{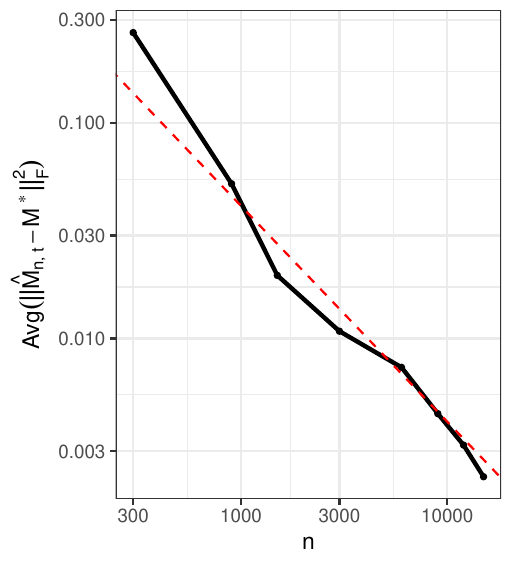}
        \caption*{MNAR, 50\%}
    \end{subfigure}
    \caption{The log-log scale convergence trend of estimated cluster centers under different missing mechanisms and missing proportions. The y-axis is the averaged value of $\|\widehat{\bm{\mathrm{M}}}_{n,t} - \bm{M}^{\ast}\|_F^2$ among $T=100$ repetitions for each $n$. The red dashed line is the line with slope being -1, implying the theoretical trend. }
    \label{fig_convergence_rate}
\end{figure}

\subsubsection{Asymptotic normality}
To verify the asymptotic normality, we consider $n=6000$ and the number of repetitions $T=1000$, and check whether the empirical distribution of $\{\text{vec}(\widehat{\bm{\mathrm{M}}}_{n,t})\}_{t=1}^{T}$ is a normal distribution. 
To this end, since $\text{vec}(\widehat{\bm{\mathrm{M}}}_{n,t})$ is a vector with length $kp$, we conduct the energy test on the $\{\text{vec}(\widehat{\bm{\mathrm{M}}}_{n,t})\}_{t=1}^{T}$, which is a non-parametric testing method specially for the multivariate normal distribution. 
Moreover, due to the fact that if $\text{vec}(\widehat{\bm{\mathrm{M}}}_{n,t})$ follows a multivariate normal distribution, then each component of it also follows a normal distribution, thus, we conduct the Shapiro test on each component of $\text{vec}(\widehat{\bm{\mathrm{M}}}_{n,t})$. 
Table~\ref{table_asymptotic_normality} illustrates the results of testing under different missing mechanisms and missing proportions, where the p-values of normality test on $\text{vec}(\widehat{\bm{\mathrm{M}}}_{n,t})$ and its each component are reported. 
It can be seen that under MCAR mechanism, almost all p-values are larger than 0.05, which confirms the asymptotic normality of $\text{vec}(\widehat{\bm{\mathrm{M}}}_{n,t})$ and each component. 
In contrast, under MNAR mechanism, the p-values are smaller. Especially for 50\% missingness, the p-value of test on $\text{vec}(\widehat{\bm{\mathrm{M}}}_{n,t})$ is less than 0.001, implying that the asymptotic normality of $\text{vec}(\widehat{\bm{\mathrm{M}}}_{n,t})$ does not hold. 
Moreover, under MNAR mechanism with 30\% and 50\% missing proportion, the p-values of the $(2,1)$ and $(3,2)$ components of $\widehat{\bm{\mathrm{M}}}_{n,t}$ are less than 0.01 and 0.001, respectively, implying that the asymptotic normality of the two components do not hold. This is related to MNAR mechanism. 
In fact, our MNAR mechanism gives a higher missing probability if the absolute value of original fully observed data is close to zero. Since the values of $\mu^{\ast}_{21}$ and $\mu^{\ast}_{32}$ are more close to zero than that of others, then more missingness occur for data points belonging to cluster 2 in dimension 1 and data points belonging to cluster 3 in dimension 2. Consequently, the asymptotic normality of estimators of $\mu^{\ast}_{21}$ and $\mu^{\ast}_{32}$ can be damaged, causing a negative influence to the asymptotic normality of $\text{vec}(\widehat{\bm{\mathrm{M}}}_{n,t})$.

\begin{table*}[htbp]
    \centering
    \caption{The p-values of normality testing on $\text{vec}(\widehat{\bm{\mathrm{M}}}_{n,t})$ and its each component}
    \label{table_asymptotic_normality}
    \begin{tabular}{lclllllll}
    \toprule
    \multirow{2}{*}{\makecell{Missing\\Mechanism}} & 
    \multirow{2}{*}{\makecell{Missing\\Proportion}} & 
    \multirow{2}{*}{\makecell{Energy test\\on $\text{vec}(\widehat{\bm{\mathrm{M}}}_{n,t})$}} & \multicolumn{6}{c}{Shapiro test of each component}\\
    \cmidrule(lr){4-9}
    & & & (1,1) & (1,2) & (2,1) & (2,2) & (3,1) & (3,2) \\
    \midrule
    MCAR & 10\% & 0.518 & 0.513 & 0.188 & 0.162 & 0.633 & 0.434 & 0.294 \\
     & 30\% & 0.799 & 0.748 & 0.545 & 0.525 & 0.163 & 0.609 & 0.021* \\
     & 50\% & 0.724 & 0.138 & 0.568 & 0.368 & 0.055 & 0.906 & 0.136 \\
    MNAR & 10\% & 0.367 & 0.207 & 0.890 & 0.804 & 0.914 & 0.786 & 0.138\\
     & 30\% & 0.256 & 0.044* & 0.908 & 0.005** & 0.380 & 0.208 & 0.003** \\
     & 50\% & 0.000*** & 0.420 & 0.897 & 0.000*** & 0.145 & 0.791 & 0.000*** \\
    \bottomrule
    \multicolumn{9}{l}{***: p-value<0.001  **: p-value<0.01  *: p-value<0.05}
    \end{tabular}
\end{table*}

\vspace{-10pt}
\subsection{Convergence to true cluster enters}

In this section, we verify the convergence of estimated cluster centers to the true cluster centers $\bm{M}^{\ast\ast}$ given in Section~\ref{sec_converge_to_truth} via numerical experiments on synthetic incomplete datasets. 
For this purpose, we conduct the missing-data-$k$-means on the incomplete dataset to obtain the estimated cluster centers, then we calculate the Mean-Squared Error (MSE) to evaluate the bias of estimated cluster centers. Specifically, $\text{MSE}(\widehat{\bm{\mathrm{M}}}_n,\bm{M}^{\ast\ast})=\sum_{l=1}^{k}\min_{l'=1,\dots,k} \|\hat{\bm{\mu}}_l - \bm{\mu}^{\ast\ast}_{l'}\|_2^2$, where $\widehat{\bm{\mathrm{M}}}_n$ is the estimated cluster centers, and $\bm{M}^{\ast\ast}=((\bm{\mu}_1^{\ast\ast})^T,\dots,(\bm{\mu}_k^{\ast\ast})^T)^T$ is the true cluster centers.

The construction of missing datasets is the same as the above experiments, while we fix $n=10000$, $p=2$ and $k=2$, and only consider the MCAR mechanism that $\textnormal{Pr}(x_{ij} \textnormal{is missing})=1-q$ for any $i=1,\dots,n,j=1,\dots,p$. 
Moreover, we consider two cases of true cluster centers: (a) $\bm{\mu}_1^{\ast\ast}=(0,0)^T$, $\bm{\mu}_2^{\ast\ast}=(\rho,\rho)^T$; (b) $\bm{\mu}_1^{\ast\ast}=(0,0)^T$, $\bm{\mu}_2^{\ast\ast}=(\rho,0)^T$, where the $\rho^{\ast\ast}=\rho$ for the case (a) and $\rho^{\ast\ast}=0$ for the case (b). 
We vary $\rho\in\{1,2,\dots,10\}$ for different separation degrees between true cluster centers, while let the within-cluster distance $\|\bm{x}_i -\bm{\mu}_l^{\ast\ast} \|_2$ bounded by a fixed $\beta^{\ast\ast}=3$. 

Figure~\ref{fig_converge_to_truth} shows how MSE varies with the parameter $\rho$ of true cluster centers, where the black solid line is the result of $k$-means on original fully observed dataset, and the blue dashed line is for missing-data-$k$-means on datasets with different missing proportions. 
 
For the case (a), it can be seen that in all settings, the MSE value of missing-data-$k$-means becomes almost zero when $\rho$ is relatively large, which means that the estimated cluster centers by incomplete data would converge to the true centers when true cluster centers are well-separated in each feature. 
Moreover, in each penal, the red point represents the empirical transition point (the minimum $\rho$ required to achieve $\textnormal{MSE}\approx 0$), and the vertical dotted line represents the approximated theoretical threshold\footnote{
Here, it should be clarified that the approximated theoretical threshold is calculated as follows: First, since original fully observed data $\bm{x}_i$'s of each cluster are generated from a Gaussian distribution, then $\|\bm{x}_i -\bm{\mu}_l^{\ast\ast} \|_2^2$ is bounded by the within-cluster variance (equal to 1) with high probability. Thus, the separation condition in Theorem~\ref{theorem_converge_to_truth} can be approximated by $\rho^{\ast\ast} > 4\sqrt{n/\mathrm{n}_{\min}^{\textnormal{comp}}} +2$ with high probability. Second, because $4\sqrt{n/\mathrm{n}_{\min}^{\textnormal{comp}}} +2\xrightarrow{P} 4\sqrt{k/q^p}+2$, then we calculate the limit value $4\sqrt{k/q^p}+2$ for each setting. Third, for the simplification of visualization, we illustrate the half value $2\sqrt{k/q^p}+1$ in Figure~\ref{fig_converge_to_truth} by the vertical dotted line, as a smaller approximation of theoretical threshold. 
}. 
We can see that although the empirical threshold becomes larger as missing proportion is larger, the vertical dotted line is always in the right of the red point, that is, the theoretical threshold is always larger than the empirical threshold. Consequently, any $\rho$ larger than the theoretical threshold (i.e., $\rho$ in the right of vertical line) ensures $\textnormal{MSE}\approx 0$. This result verifies that the proposed theoretical threshold gives a sufficient guarantee for the convergence to true cluster centers.

For the case (b), however, it can be seen that in all settings, the MSE value of missing-data-$k$-means is not close to zero even when $\rho=10$ (i.e., $\bm{\mu}_1^{\ast\ast}$ and $\bm{\mu}_2^{\ast\ast}$ are far away in $\mathbb{R}^p$ space), which means that the estimated cluster centers by incomplete data would not converge to the true centers, even though the estimators by original fully observed data indeed converge to true centers (e.g., when $\rho\geq 4$). The main reason is that the minimum distance between true centers across all features is actually $\rho^{\ast\ast}=0$ in this case. Consequently, the separation condition in Theorem~\ref{theorem_converge_to_truth} will never be satisfied, leading to the failure of exact recovery of cluster labels. 
More specifically, for those data points that is only observed in the noise feature (the second dimension), the overlap of true cluster centers in noise feature makes it almost impossible to correctly group these data points, and thus the convergence of estimated centers to true centers becomes very challenging.

\begin{figure}[t]
\captionsetup[subfigure]{justification=centering}
    \centering
    \begin{subfigure}{0.23\textwidth}
        \includegraphics[width=\textwidth]{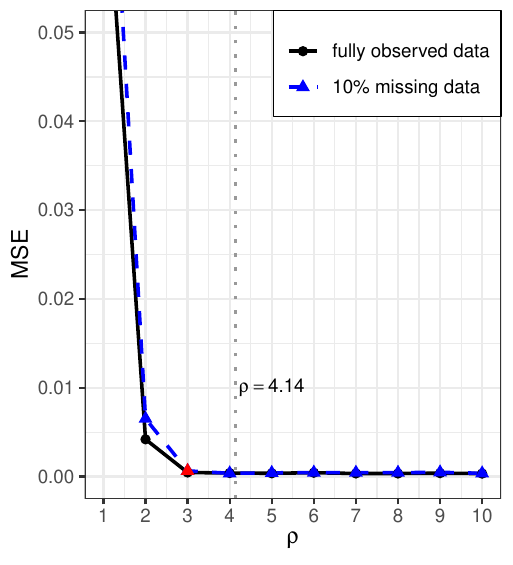}
        \caption*{Case (a), 10\%}
    \end{subfigure}
    \begin{subfigure}{0.23\textwidth}
        \includegraphics[width=\textwidth]{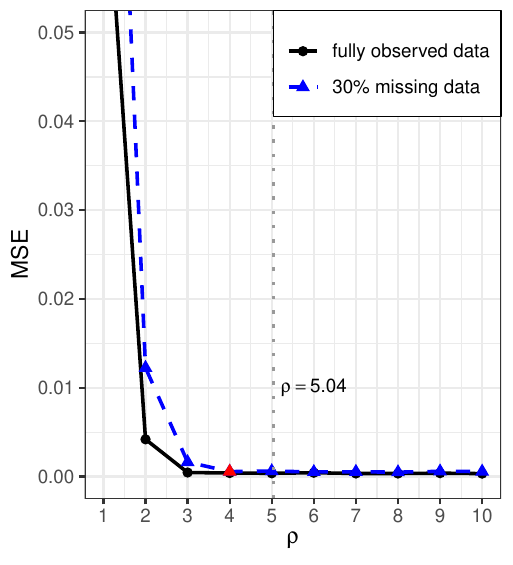}
        \caption*{Case (a), 30\%}
    \end{subfigure}
    \begin{subfigure}{0.23\textwidth}
        \includegraphics[width=\textwidth]{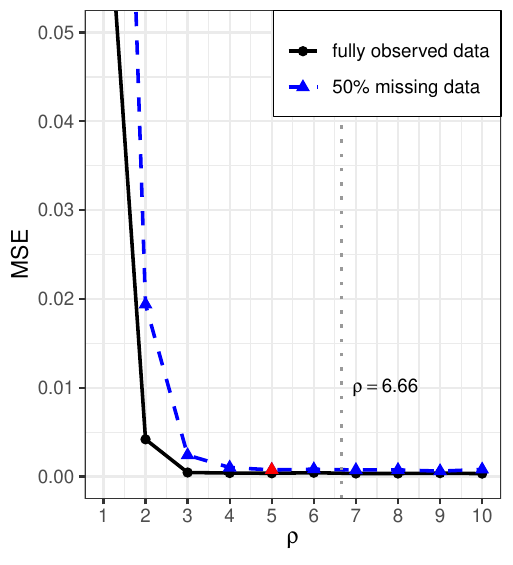}
        \caption*{Case (a), 50\%}
    \end{subfigure}
    \par
    \begin{subfigure}{0.23\textwidth}
        \includegraphics[width=\textwidth]{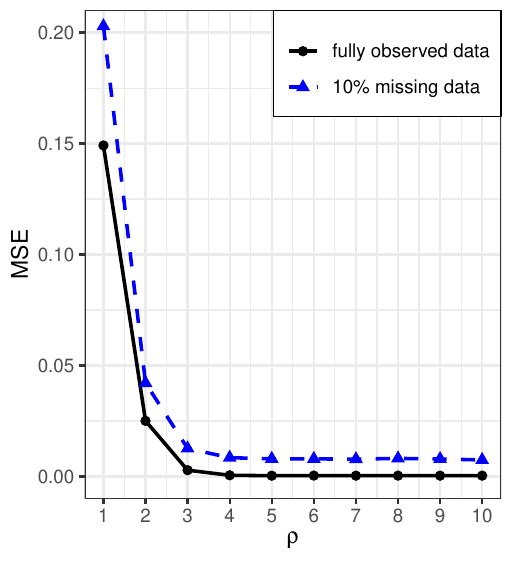}
        \caption*{Case (b), 10\%}
    \end{subfigure}
    \begin{subfigure}{0.23\textwidth}
        \includegraphics[width=\textwidth]{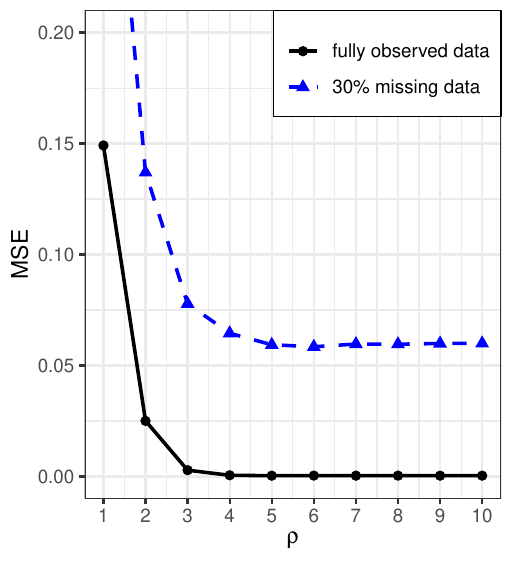}
        \caption*{Case (b), 30\%}
    \end{subfigure}
    \begin{subfigure}{0.23\textwidth}
        \includegraphics[width=\textwidth]{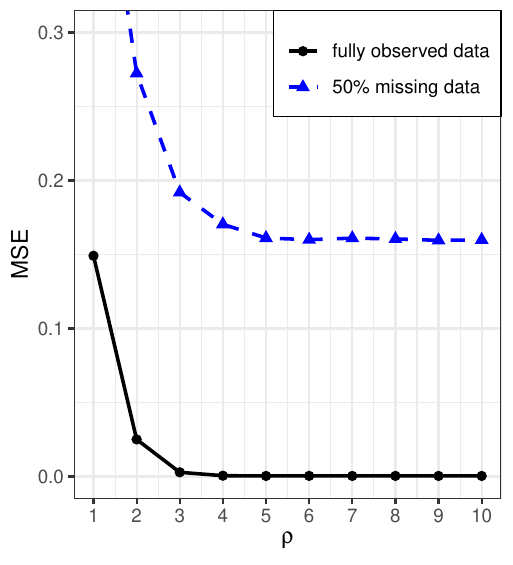}
        \caption*{Case (b), 50\%}
    \end{subfigure}
    \caption{The MSE values versus the parameter $\rho$ of true cluster centers. In panels of Case (a), the red point represents the empirical transition point (the minimum $\rho$ required to achieve MSE $\approx 0$), and the vertical dotted line represents the approximated theoretical threshold. }
    \label{fig_converge_to_truth}
\end{figure}

\section{Conclusions}

In this paper, we analyzed the statistical properties of $k$-means in the presence of missing data, including the excess risk bound, consistency, convergence rate and asymptotic normality of the estimated cluster centers. We also identified conditions for asymptotically recovering the true centers from incomplete data. 
These results provide a theoretical guarantee for extending the $k$-means clustering to incomplete datasets, and were verified by our numerical experiments. 

Our findings reveal a critical contrast, that is, even under the simplest MCAR mechanism, achieving the $\sqrt{n}$-convergence rate requires the population-level optimal cluster centers to have $k$ distinct components in every dimension, which is stricter than that of complete-data-$k$-means.  
Moreover, our separation condition for convergence to true cluster centers requires $k$ true centers distinct in every dimension, which also implies that population-level optimal cluster centers (i.e., the limit of estimated cluster centers) should be distinct in every dimension. 
These results clarify the scope of application of missing-data-$k$-means and highlight a significant challenge of using it in high-dimensional regimes, where we usually suppose that there exist \textit{noise} dimensions in which the true centers are overlapped. 

The main limitation of this work is that most results are restricted to the MCAR mechanism, whereas the asymptotic behavior under other mechanisms, such as MAR and MNAR, is still unclear. According to our experiments, under MNAR mechanism where the missingness depends on data distribution, the $\sqrt{n}$-convergence rate can hold, but asymptotic normality may not hold, which leaves us an important direction for future work, since MNAR mechanism is more common in practice.

\section*{Supplementary materials}
The proofs of all theoretical results are provided in Appendix~\ref{sec_proofs}. 
The code for reproducing all experimental results can be found from \url{https://github.com/GXguanxin/stat_property_of_missing-data-k-means}.

\bibliographystyle{abbrvnat}
\bibliography{main}

@article{chi2016k,
  title={k-pod: A method for k-means clustering of missing data},
  author={Chi, Jocelyn T and Chi, Eric C and Baraniuk, Richard G},
  journal={The American Statistician},
  volume={70},
  number={1},
  pages={91--99},
  year={2016},
  publisher={Taylor \& Francis}
}

@article{wang2019k,
  title={K-means clustering with incomplete data},
  author={Wang, Siwei and Li, Miaomiao and Hu, Ning and Zhu, En and Hu, Jingtao and Liu, Xinwang and Yin, Jianping},
  journal={IEEE Access},
  volume={7},
  pages={69162--69171},
  year={2019},
  publisher={IEEE}
}

@article{pollard1981strong,
  title={Strong consistency of k-means clustering},
  author={Pollard, David},
  journal={The annals of statistics},
  volume = {10},
  number = {1},
  pages={135--140},
  year={1981}
}

@article{Pollard1982,
author = {Pollard, David},
journal = {The Annals of Probability},
number = {4},
pages = {919--926},
title = {{A Central Limit Theorem for $k$-means Clustering}},
volume = {10},
year = {1982}
}

@article{cuesta1997trimmed,
  title={Trimmed $k$-means: an attempt to robustify quantizers},
  author={Cuesta-Albertos, Juan Antonio and Gordaliza, Alfonso and Matr{\'a}n, Carlos},
  journal={The Annals of Statistics},
  volume={25},
  number={2},
  pages={553--576},
  year={1997}
}

@article{terada2014strong,
  title={Strong consistency of reduced k-means clustering},
  author={Terada, Yoshikazu},
  journal={Scandinavian Journal of Statistics},
  volume={41},
  number={4},
  pages={913--931},
  year={2014}
}

@article{sun2012regularized,
  title={Regularized k-means clustering of high-dimensional data and its asymptotic consistency},
  author={Sun, Wei and Wang, Junhui and Fang, Yixin},
  year={2012},
  volume = {6},
  journal = {Electronic Journal of Statistics},
  pages = {148 -- 167},
}

@article{raymaekers2022regularized,
  title={Regularized k-means through hard-thresholding},
  author={Raymaekers, Jakob and Zamar, Ruben H},
  journal={Journal of Machine Learning Research},
  volume={23},
  number={93},
  pages={1--48},
  year={2022}
}

@article{georgogiannis2016robust,
  title={Robust k-means: a theoretical revisit},
  author={Georgogiannis, Alexandros},
  journal={Advances in Neural Information Processing Systems},
  volume={29},
  year={2016}
}

@article{chakraborty2019strong,
  title={On the strong consistency of feature-weighted k-means clustering in a nearmetric space},
  author={Chakraborty, Saptarshi and Das, Swagatam},
  journal={Stat},
  volume={8},
  number={1},
  pages={e227},
  year={2019},
  publisher={Wiley Online Library}
}

@article{von2008consistency,
  title={Consistency of spectral clustering},
  author={Von Luxburg, Ulrike and Belkin, Mikhail and Bousquet, Olivier},
  journal={The Annals of Statistics},
  pages={555--586},
  year={2008}
}

@article{paul2022implicit,
  title={Implicit annealing in kernel spaces: A strongly consistent clustering approach},
  author={Paul, Debolina and Chakraborty, Saptarshi and Das, Swagatam and Xu, Jason},
  journal={IEEE Transactions on Pattern Analysis and Machine Intelligence},
  volume={45},
  number={5},
  pages={5862--5871},
  year={2022},
  publisher={IEEE}
}

@inproceedings{liang2023consistency,
  title={Consistency of multiple kernel clustering},
  author={Liang, Weixuan and Liu, Xinwang and Liu, Yong and Ma, Chuan and Zhao, Yunping and Liu, Zhe and Zhu, En},
  booktitle={International Conference on Machine Learning},
  pages={20650--20676},
  year={2023},
  organization={PMLR}
}

@article{garcia1999central,
  title={A central limit theorem for multivariate generalized trimmed k-means},
  author={Garc{\'\i}a-Escudero, Luis A and Gordaliza, Alfonso and Matr{\'a}n, Carlos},
  journal={Annals of statistics},
  pages={1061--1079},
  year={1999},
  publisher={JSTOR}
}

@article{yang1994asymptotic,
  title={On asymptotic normality of a class of fuzzy c-means clustering procedures},
  author={Yang, Miin-Shen},
  journal={International Journal Of General System},
  volume={22},
  number={4},
  pages={391--403},
  year={1994},
  publisher={Taylor \& Francis}
}

@article{liu2016knn,
title = "Adaptive imputation of missing values for incomplete pattern classification", 
author = "Zhunga Liu and Quan Pan and Jean Dezert and Arnaud Martin", 
journal = "Pattern Recognition", 
volume = "52", 
pages = "85--95", 
year = "2016"
}

@inproceedings{yoon2018gain,
  title= "Gain: Missing data imputation using generative adversarial nets", 
  author= "Yoon, Jinsung and Jordon, James and Schaar, Mihaela", 
  booktitle= "International conference on machine learning", 
  pages= "5689--5698", 
  year= "2018", 
  organization= "PMLR"
}

@article{Choudhury2019nn,
title = "Imputation of missing data with neural networks for classification", 
author = "Suvra Jyoti Choudhury and Nikhil R. Pal", 
journal = "Knowledge--Based Systems", 
volume = "182", 
pages = "104838", 
year = "2019"
}

@article{spinelli2020,
title = "Missing data imputation with adversarially--trained graph convolutional networks", 
author = "Indro Spinelli and Simone Scardapane and Aurelio Uncini", 
journal = "Neural Networks", 
volume = "129", 
pages = "249--260", 
year = "2020"
}

@inproceedings{abdallah2014mean,
  title= "Mean shift clustering algorithm for data with missing values", 
  author= "AbdAllah, Loai and Shimshoni, Ilan", 
  booktitle= "International Conference on Data Warehousing and Knowledge Discovery", 
  pages= "426--438", 
  year= "2014", 
  organization= "Springer"
}

@article{datta2018clustering,
  title={Clustering with missing features: a penalized dissimilarity measure based approach},
  author={Datta, Shounak and Bhattacharjee, Supritam and Das, Swagatam},
  journal={Machine Learning},
  volume={107},
  pages={1987--2025},
  year={2018},
  publisher={Springer}
}

@article{mesquita2017,
title = " Euclidean distance estimation in incomplete datasets", 
author = "Diego P.P. Mesquita and João P.P. Gomes and Amauri H. {Souza Junior} and Juvêncio S. Nobre", 
journal = "Neurocomputing", 
volume = "248", 
pages = "11--18", 
year = "2017"
}

@article{lithio2018efficient,
  title={An efficient k-means-type algorithm for clustering datasets with incomplete records},
  author={Lithio, Andrew and Maitra, Ranjan},
  journal={Statistical Analysis and Data Mining: The ASA Data Science Journal},
  volume={11},
  number={6},
  pages={296--311},
  year={2018}
}

@article{aschenbruck2023imputation,
  title={Imputation strategies for clustering mixed-type data with missing values},
  author={Aschenbruck, Rabea and Szepannek, Gero and Wilhelm, Adalbert FX},
  journal={Journal of Classification},
  volume={40},
  number={1},
  pages={2--24},
  year={2023}
}

@inproceedings{agliz2025joint,
  title={Joint dimensionality reduction and clustering with missing data},
  author={Agliz, Yasmine and Audigier, Vincent and Niang, Nd{\`e}ye and Nadif, Mohamed},
  booktitle={Advanced Machine Learning and Data Science},
  year={2025}
}

@book{van1996weak,
    author = {Van Der Vaart, Aad W and Wellner, Jon A},
    title = {Weak convergence and empirical processes: with applications to statistics},
    publisher = {Springer},
    year = {1996}
}

@book{mohri2018foundations,
  title={Foundations of machine learning},
  author={Mohri, Mehryar and Rostamizadeh, Afshin and Talwalkar, Ameet},
  year={2018},
  publisher={MIT press}
}

@article{bartlett2002rademacher,
  title={Rademacher and gaussian complexities: Risk bounds and structural results},
  author={Bartlett, Peter L and Mendelson, Shahar},
  journal={Journal of machine learning research},
  volume={3},
  number={Nov},
  pages={463--482},
  year={2002}
}

@article{terada2025a,
title={A note on the $k$-means clustering for missing data},
author={Yoshikazu Terada and Xin Guan},
journal={Transactions on Machine Learning Research},
year={2025},
url={https://openreview.net/forum?id=pcqlTvePXS}
}

\newpage
\appendix


\numberwithin{equation}{section}

\section{Proofs}
\label{sec_proofs}

\subsection{Proofs of Section~\ref{sec_finite_result}}

\subsubsection{Proof of Lemma~\ref{lemma_uniform_convergence}}
\begin{proof}
    We first define two function classes on $\mathcal{X}\times \{0,1\}^p$ to be 
    \begin{align*}
        \mathcal{G}_{k}&=\big\{  g_{k,\bm{M}}:(\bm{x},\bm{r})\mapsto \min_{l=1,\dots,k} \| \bm{x}\circ \bm{r} - \bm{\mu}_l \circ \bm{r} \|_2^2 \;\big|\; \forall \bm{M}\in\mathcal{X}^k, \|\bm{\mu}_l\|_2\leq B, l=1,\dots,k \big\} \\
        \mathcal{G}&=\big\{ g_{\bm{M}}:(\bm{x},\bm{r})\mapsto \| \bm{x}\circ \bm{r} - \bm{\mu} \circ \bm{r} \|_2^2 \;\big|\; \forall \bm{\mu} \in \mathcal{X}, \|\bm{\mu}\|_2\leq B \big\}.
    \end{align*}
    Any $g_{k,\bm{M}}\in\mathcal{G}_{k}$ and $g_{\bm{M}}\in \mathcal{G}$ are mappings from $\mathcal{X}\times \{0,1\}^p$ to $[0,4B^2]$. 
    Then, for a size-$n$ random sample of missing data $\bm{\mathrm{X}}\circ\bm{\mathrm{R}}=\{\bm{\mathrm{x}}_1\circ\bm{\mathrm{r}}_1,\dots,\bm{\mathrm{x}}_n\circ\bm{\mathrm{r}}_n\}$, according to Theorem 3.3 of \cite{mohri2018foundations}, it holds that for any $\delta\in (0,1)$,
    \begin{align*}
        \textnormal{Pr}\left( 
        \mathbb{E}_{\bm{\mathrm{x}}_1,\bm{\mathrm{r}}_1}\left[ g_{k,\bm{M}}(\bm{\mathrm{x}}_1,\bm{\mathrm{r}}_1) \right] 
        - \frac{1}{n}\sum_{i=1}^{n} g_{k,\bm{M}}(\bm{\mathrm{x}}_i,\bm{\mathrm{r}}_i) \leq  \mathfrak{R}_n(\mathcal{G}_k) + 4B^2\sqrt{\frac{\log(1/\delta)}{2n}}\right)
        > 1-\delta,
    \end{align*}
    where $\mathfrak{R}_n(\mathcal{G}_k)$ is the Rademacher complexity of $\mathcal{G}_k$.  
    Moreover, for each $\bm{M}\in\mathcal{X}^k$, we can write 
    \begin{align*}
        L(\bm{M})=\mathbb{E}_{\bm{\mathrm{x}}_1,\bm{\mathrm{r}}_1}\left[ g_{k,\bm{M}}(\bm{\mathrm{x}}_1,\bm{\mathrm{r}}_1) \right]
        \quad \textnormal{and} \quad 
        \widehat{L}_n(\bm{M})=\frac{1}{n}\sum_{i=1}^{n} g_{k,\bm{M}}(\bm{\mathrm{x}}_i,\bm{\mathrm{r}}_i), 
    \end{align*}
    and according to Theorem 12 of \cite{bartlett2002rademacher}, the empirical Rademacher complexity of $\mathcal{G}_{k}$ and that of $\mathcal{G}$ satisfy $\mathfrak{R}_n(\mathcal{G}_k)\leq k\cdot \mathfrak{R}_n(\mathcal{G})$. 
    Then using the symmetry, we can have 
    \begin{align}
        \textnormal{Pr}\left( 
        \left| L(\bm{M}) - \widehat{L}_n(\bm{M}) \right| \leq 4k \mathfrak{R}_n(\mathcal{G}) + 8B^2\sqrt{\frac{\log(1/\delta)}{2n}}\right)
        > 1-\delta.
    \label{proof_uniform_convergence_eq_1}
    \end{align}

    We next derive the upper bound of $\mathfrak{R}_n(\mathcal{G})$. 
    Let $\bm{\epsilon} = \{\epsilon_i\}_{i=1}^n$ be independent random variables that each $\epsilon_i$ takes the value $\pm 1$ with equal probability $1/2$, and $\bm{\epsilon}$ is independent with $(\bm{\mathrm{X}},\bm{\mathrm{R}})$ then the Rademacher complexity of $\mathcal{G}$ is given by 
    \begin{align*}
        \mathfrak{R}_n(\mathcal{G})
        &=\mathbb{E}_{ \bm{\mathrm{X}},\bm{\mathrm{R}},\bm{\epsilon}}\left[ \sup_{g_{\bm{M}}\in\mathcal{G}} \frac{1}{n}\sum_{i=1}^{n}\epsilon_i g_{\bm{M}}(\bm{\mathrm{x}}_i,\bm{\mathrm{r}}_i) \right]
        =\mathbb{E}_{\bm{\mathrm{X}},\bm{\mathrm{R}},\bm{\epsilon}}\left[ \sup_{\|\bm{\mu}\|_2\leq B} \frac{1}{n}\sum_{i=1}^{n} \epsilon_i \| \bm{\mathrm{x}}_i\circ \bm{\mathrm{r}}_i - \bm{\mu}\circ \bm{\mathrm{r}}_i \|_2^2 \right] \\
        &=\mathbb{E}_{\bm{\mathrm{X}},\bm{\mathrm{R}},\bm{\epsilon}}\left[ \sup_{\|\bm{\mu}\|_2\leq B} \frac{1}{n}\sum_{i=1}^{n} \epsilon_i \bigg\{  \|\bm{\mathrm{x}}_i\circ \bm{\mathrm{r}}_i \|_2^2 + \|\bm{\mu} \circ \bm{\mathrm{r}}_i\|_2^2 -2\langle \bm{\mathrm{x}}_i\circ \bm{\mathrm{r}}_i, \bm{\mu}\circ \bm{\mathrm{r}}_i \rangle \bigg\} \right] \\
        &\leq  \mathbb{E}_{\bm{\mathrm{X}},\bm{\mathrm{R}},\bm{\epsilon}}\left[ \sup_{\|\bm{\mu}\|_2\leq B} \frac{1}{n}\sum_{i=1}^{n} \epsilon_i \|\bm{\mathrm{x}}_i\circ \bm{\mathrm{r}}_i \|_2^2\right] 
        + \mathbb{E}_{\bm{\mathrm{X}},\bm{\mathrm{R}},\bm{\epsilon}}\left[ \sup_{\|\bm{\mu}\|_2\leq B} \frac{1}{n}\sum_{i=1}^{n} \epsilon_i \|\bm{\mu} \circ \bm{\mathrm{r}}_i\|_2^2 \right] \\
        &\quad + 2\mathbb{E}_{\bm{\mathrm{X}},\bm{\mathrm{R}},\bm{\epsilon}}\left[ \sup_{\|\bm{\mu}\|_2\leq B} \left| \frac{1}{n}\sum_{i=1}^{n} \epsilon_i \langle \bm{\mathrm{x}}_i\circ \bm{\mathrm{r}}_i, \bm{\mu}\circ \bm{\mathrm{r}}_i \rangle \right| \right]. 
    \end{align*}
    It suffices to bound the three terms, respectively. 
    For the first term, under Assumption~\ref{assumption_X_compact}, since $\|\bm{x}\|_2\leq B$ for any $\bm{x}\in\mathcal{X}$, then we have
    \begin{align*}
        \mathbb{E}_{\bm{\mathrm{X}},\bm{\mathrm{R}},\bm{\epsilon}}\left[ \sup_{\|\bm{\mu}\|_2\leq B} \frac{1}{n}\sum_{i=1}^{n} \epsilon_i \|\bm{\mathrm{x}}_i\circ \bm{\mathrm{r}}_i \|_2^2\right] 
        &= \mathbb{E}_{\bm{\mathrm{X}},\bm{\mathrm{R}},\bm{\epsilon}}\left[ \frac{1}{n}\sum_{i=1}^{n} \epsilon_i \|\bm{\mathrm{x}}_i\circ \bm{\mathrm{r}}_i \|_2^2\right] \\
        &\leq \mathbb{E}_{\bm{\mathrm{X}},\bm{\mathrm{R}},\bm{\epsilon}}\left[ \frac{1}{n}\sum_{i=1}^{n} \epsilon_i \|\bm{\mathrm{x}}_i \|_2^2\right]
        = \frac{1}{n}\sum_{i=1}^{n} \bigg\{ \mathbb{E}_{\epsilon_i}\left[  \epsilon_i\right]\cdot   \mathbb{E}_{\bm{\mathrm{x}}_i,\bm{\mathrm{r}}_i}\left[\|\bm{\mathrm{x}}_i \|_2^2 \right]\bigg\} \\
        &= \frac{1}{n}\sum_{i=1}^{n} \bigg\{ 0\cdot   \mathbb{E}_{\bm{\mathrm{x}}_i}\left[\|\bm{\mathrm{x}}_i \|_2^2 \right]\bigg\}
        = 0 . 
    \end{align*}
    For the second term, since $\mathbb{E}_{\epsilon_i,\epsilon_{i'}}[\epsilon_i\epsilon_{i'}]=1$ if $i=i'$, 0 otherwise, and $\|\bm{r}\|_2^2\leq p$ for any $\bm{r}\in\{0,1\}^p$, then we have 
    \begin{align*}
        &\mathbb{E}_{\bm{\mathrm{X}},\bm{\mathrm{R}},\bm{\epsilon}}\left[ \sup_{\|\bm{\mu}\|_2\leq B} \frac{1}{n}\sum_{i=1}^{n} \epsilon_i \|\bm{\mu} \circ \bm{\mathrm{r}}_i\|_2^2 \right] \\
        &=\mathbb{E}_{\bm{\mathrm{X}},\bm{\mathrm{R}},\bm{\epsilon}}\left[ \sup_{\|\bm{\mu}\|_2\leq B} \frac{1}{n}\sum_{i=1}^{n} \epsilon_i  \left\langle \bm{\mu}\circ \bm{\mathrm{r}}_i \;,\; \bm{\mu}\circ \bm{\mathrm{r}}_i \right\rangle \right]\\
        &=\mathbb{E}_{\bm{\mathrm{X}},\bm{\mathrm{R}},\bm{\epsilon}}\left[ \sup_{\|\bm{\mu}\|_2\leq B}  \frac{1}{n}\sum_{i=1}^{n} \epsilon_i \left\langle \bm{\mu}\circ \bm{\mu} \;,\; \bm{\mathrm{r}}_i\right\rangle \right]
        =\frac{1}{n}\mathbb{E}_{\bm{\mathrm{X}},\bm{\mathrm{R}},\bm{\epsilon}}\left[ \sup_{\|\bm{\mu}\|_2\leq B}  \left\langle \bm{\mu}\circ \bm{\mu} \;,\; \sum_{i=1}^{n} \epsilon_i \bm{\mathrm{r}}_i\right\rangle  \right]\\
        &\leq \frac{1}{n}\mathbb{E}_{\bm{\mathrm{X}},\bm{\mathrm{R}},\bm{\epsilon}}\left[ \left( \sup_{\|\bm{\mu}\|_2\leq B} \left\| \bm{\mu}\circ \bm{\mu} \right\|_2 \right)\cdot \left\| \sum_{i=1}^{n} \epsilon_i \bm{\mathrm{r}}_i \right\|_2  \right] 
        =\frac{1}{n} \left( \sup_{\|\bm{\mu}\|_2\leq B}\left\| \bm{\mu}\circ \bm{\mu} \right\|_2 \right) \cdot \mathbb{E}_{\bm{\mathrm{X}},\bm{\mathrm{R}},\bm{\epsilon}}\left[ \left\| \sum_{i=1}^{n} \epsilon_i \bm{\mathrm{r}}_i \right\|_2  \right]\\
        &\leq \frac{B^2}{n}\left\{ \mathbb{E}_{\bm{\mathrm{X}},\bm{\mathrm{R}},\bm{\epsilon}}\left[ \left\| \sum_{i=1}^{n} \epsilon_i \bm{\mathrm{r}}_i \right\|_2^2  \right] \right\}^{1/2} \\
        &=\frac{B^2}{n}\left\{ \sum_{i=1}^{n} \mathbb{E}_{\epsilon_i} \left[\epsilon_i \epsilon_i \right] \cdot \mathbb{E}_{\bm{\mathrm{x}}_i,\bm{\mathrm{r}}_i}\left[\langle \bm{\mathrm{r}}_i,\bm{\mathrm{r}}_i \rangle\right] \right\}^{1/2}
        =\frac{B^2}{n}\left\{ \sum_{i=1}^{n} 1\cdot \mathbb{E}_{\bm{\mathrm{r}}_i}\left[ \|\bm{\mathrm{r}}_i\|_2^2 \right] \right\}^{1/2}\\
        &\leq \frac{B^2}{n}\cdot\sqrt{np}=\frac{B^2\sqrt{p}}{\sqrt{n}}.
    \end{align*}
    For the third term, since $\mathbb{E}_{\epsilon_i,\epsilon_{i'}}[\epsilon_i\epsilon_{i'}]=1$ if $i=i'$, 0 otherwise, and under Assumption~\ref{assumption_X_compact}, $\|\bm{x}\|_2\leq B$ for any $\bm{x}\in\mathcal{X}$, then we have 
    \begin{align*}
        &2\mathbb{E}_{\bm{\mathrm{X}},\bm{\mathrm{R}},\bm{\epsilon}}\left[ \sup_{\|\bm{\mu}\|_2\leq B} \left| \frac{1}{n}\sum_{i=1}^{n} \epsilon_i \langle \bm{\mathrm{x}}_i\circ \bm{\mathrm{r}}_i, \bm{\mu}\circ \bm{\mathrm{r}}_i \rangle \right| \right] \\
        &=2\mathbb{E}_{\bm{\mathrm{X}},\bm{\mathrm{R}},\bm{\epsilon}}\left[ \sup_{\|\bm{\mu}\|_2\leq B} \left| \frac{1}{n}\sum_{i=1}^{n} \epsilon_i \langle \bm{\mathrm{x}}_i\circ \bm{\mathrm{r}}_i\;,\; \bm{\mu} \rangle \right| \right] 
        =2\mathbb{E}_{\bm{\mathrm{X}},\bm{\mathrm{R}},\bm{\epsilon}}\left[ \sup_{\|\bm{\mu}\|_2\leq B} \left|\left\langle \frac{1}{n}\sum_{i=1}^{n} \epsilon_i \bm{\mathrm{x}}_i\circ \bm{\mathrm{r}}_i\;,\; \bm{\mu} \right\rangle \right| \right] \\
        &\leq 2\mathbb{E}_{\bm{\mathrm{X}},\bm{\mathrm{R}},\bm{\epsilon}}\left[ \sup_{\|\bm{\mu}\|_2\leq B} \|\bm{\mu}\|_2 \cdot \left\| \frac{1}{n}\sum_{i=1}^{n} \epsilon_i \bm{\mathrm{x}}_i\circ \bm{\mathrm{r}}_i \right\|_2   \right] 
        = \frac{2B}{n} \cdot \mathbb{E}_{\bm{\mathrm{X}},\bm{\mathrm{R}},\bm{\epsilon}}\left[ \left\| \sum_{i=1}^{n} \epsilon_i \bm{\mathrm{x}}_i\circ \bm{\mathrm{r}}_i \right\|_2  \right] \\
        &\leq \frac{2B}{n} \cdot \left\{ \mathbb{E}_{\bm{\mathrm{X}},\bm{\mathrm{R}},\bm{\epsilon}}\left[ \left\| \sum_{i=1}^{n} \epsilon_i \bm{\mathrm{x}}_i\circ \bm{\mathrm{r}}_i \right\|_2^2  \right]  \right\}^{1/2} 
        =\frac{2B}{n} \cdot \left\{ \mathbb{E}_{\bm{\mathrm{X}},\bm{\mathrm{R}},\bm{\epsilon}}\left[ \sum_{i,i'=1}^{n} \epsilon_i \epsilon_{i'} \langle \bm{\mathrm{x}}_i\circ \bm{\mathrm{r}}_i \;,\; \bm{\mathrm{x}}_{i'}\circ \bm{\mathrm{r}}_{i'}\rangle \right]  \right\}^{1/2}\\
        &=\frac{2B}{n}\left\{ \sum_{i=1}^{n} \mathbb{E}_{\epsilon_i}\left[ \epsilon_i\epsilon_i \right]\cdot \mathbb{E}_{\bm{\mathrm{x}}_i,\bm{\mathrm{r}}_i}\left[ \|\bm{\mathrm{x}}_i\circ \bm{\mathrm{r}}_i\|_2^2\right] \right\}^{1/2} \\
        &\leq \frac{2B}{n}\left\{ \sum_{i=1}^{n} 1 \cdot \mathbb{E}_{\bm{\mathrm{x}}_i,\bm{\mathrm{r}}_i}\left[ \|\bm{\mathrm{x}}_i\|_2^2\right] \right\}^{1/2}
        = \frac{2B}{n}\left\{ \sum_{i=1}^{n} \mathbb{E}_{\bm{\mathrm{x}}_i}\left[ \|\bm{\mathrm{x}}_i\|_2^2\right] \right\}^{1/2} \\
        &\leq \frac{2B}{n} \sqrt{nB^2}
        =\frac{2B^2}{\sqrt{n}}.
    \end{align*}
    Combining the above three bounds leads to
    \begin{align}
        \widehat{\mathfrak{R}}_n(\mathcal{G}) 
        \leq 0+\frac{B^2\sqrt{p}}{\sqrt{n}} + \frac{2B^2}{\sqrt{n}} = \frac{B^2(\sqrt{p}+2)}{\sqrt{n}}.
    \label{proof_uniform_convergence_eq_2}
    \end{align}
    Therefore, combining Eq.(\ref{proof_uniform_convergence_eq_1}) and Eq.(\ref{proof_uniform_convergence_eq_2}), we have for any $\delta\in (0,1)$, 
    \begin{align*}
        \textnormal{Pr}\left( 
        \left| L(\bm{M}) - \widehat{L}_n(\bm{M}) \right| \leq  \frac{4B^2}{\sqrt{n}} \cdot \left\{ k(\sqrt{p}+2) + \sqrt{2}\cdot \sqrt{\log(1/\delta)} \right\}
        \right)
        > 1-\delta,
    \end{align*}
    which completes the proof.
\end{proof}

\subsubsection{Proof of Theorem~\ref{theorem_excess_risk}}
\begin{proof}
    For any $n\in \mathbb{N}_+$, under Assumption~\ref{assumption_unique_minimizer} and according to the definition of $\bm{M}^{\ast}$,
    we have the excess risk is bounded by
    \begin{align*}
        L(\widehat{\bm{\mathrm{M}}}_n) - \min_{\bm{M}\in\mathcal{X}^k} L(\bm{M})
        &=
        L(\widehat{\bm{\mathrm{M}}}_n) - L(\bm{M}^{\ast})\\
        &=L(\widehat{\bm{\mathrm{M}}}_n) - \widehat{L}_n(\widehat{\bm{\mathrm{M}}}_n) + \widehat{L}_n(\widehat{\bm{\mathrm{M}}}_n) - \widehat{L}_n(\bm{M}^{\ast}) + \widehat{L}_n(\bm{M}^{\ast}) - L(\bm{M}^{\ast})\\
        &\leq 2\cdot \sup_{\bm{M}\in \mathcal{X}^k} \left| L(\bm{M}) -  \widehat{L}_n(\bm{M}) \right| + \widehat{L}_n(\widehat{\bm{\mathrm{M}}}_n) - \widehat{L}_n(\bm{M}^{\ast}) \\
        &\leq 2\cdot \sup_{\bm{M}\in \mathcal{X}^k} \left| L(\bm{M}) -  \widehat{L}_n(\bm{M}) \right|,
    \end{align*}
    where the final inequality is because $\widehat{L}_n(\widehat{\bm{\mathrm{M}}}_n) - \widehat{L}_n(\bm{M}^{\ast}) \leq 0$ according to the definitions of $\widehat{\bm{\mathrm{M}}}_n$. 
    Then, by using Lemma~\ref{lemma_uniform_convergence}, we obtain for any $\delta\in (0,1)$, 
    \begin{align*}
        \textnormal{Pr}\left( 
        L(\widehat{\bm{\mathrm{M}}}_n) - L(\bm{M}^{\ast}) \leq  \frac{8B^2}{\sqrt{n}} \cdot \left\{ k(\sqrt{p}+2) + \sqrt{2}\cdot \sqrt{\log(1/\delta)} \right\}
        \right)
        > 1-\delta,
    \end{align*}
    which completes the proof. 
\end{proof}

\subsubsection{Proof of Lemma~\ref{lemma_identifiability}}
\begin{proof}
    Through this proof, we let
    \begin{align*}
        m^{\ast}=\mathop{\min}_{\bm{M}\in\mathcal{X}^k} L(\bm{M})
        \quad\text{and}\quad
        \mathcal{L}=\{ L(\bm{M})\;|\; \bm{M}\in\mathcal{X}^k\}.
    \end{align*}
    We first prove: there exists a sequence $\{\bm{V}_n\}_{n\in \mathbb{N}_+}\subset \mathcal{X}^k$ such that $\lim_{n\rightarrow\infty} L(\bm{V}_n) = m^{\ast}$. 
    To this end, we note that for any $a>m^{\ast}$, there exists an $\bm{M}\in \mathcal{X}^k$ such that $b=L(\bm{M})<a$. 
    Then take $a_n=m^{\ast}+1/n$, which means $\lim_{n\rightarrow\infty} a_n=m^{\ast}$, and take $\mathcal{L}_n =\{ b \;|\;\forall b\in \mathcal{L}, b<a_n\}$. We have $\mathcal{L}_n\neq \emptyset$. 
    Denote by $\mathfrak{B}(\mathcal{L})$ the power set of $\mathcal{L}$. According to the axiom of choice, there exists a function $g:\mathfrak{B}(\mathcal{L})\setminus \{\emptyset\}\mapsto \mathcal{L}$ such that for any $B\in \mathfrak{B}(\mathcal{L})\setminus \{\emptyset\}$, $g(\mathcal{L})\in B$. 
    Thus, let $b_n=g(\mathcal{L}_n)$, then we have $b_n\in \mathcal{L}_n$, which means $m^{\ast}\leq b_n<a_n$. It follows that $\lim_{n\rightarrow\infty} b_n = m^{\ast}$, which implies the existence of the sequence $\{\bm{V}_n\}_{n\in\mathbb{N}_+}\subset \mathcal{X}^k$ satisfying $\lim_{n\rightarrow\infty} L(\bm{V}_n) = m^{\ast}$. 

    Next, we turn to prove the unique minimizer $\bm{M}^{\ast}$ is identifiable by deriving a contradiction. 
    For any $\epsilon>0$, write $\mathcal{M}_{\epsilon}=\{\bm{M}\in \mathcal{X}^k \;|\; \|\text{vec}(\bm{M},\bm{M}^{\ast})\|_2>\epsilon \}$. 
    If there exists an $\epsilon>0$ satisfying 
    \begin{align*}
        m^{\ast} = \inf \left\{ L(\bm{M}) \mid \|\text{vec}(\bm{M},\bm{M}^{\ast})\|_2 >\epsilon,\; \bm{M}\in \mathcal{X}^k  \right\},
    \end{align*}
    then by using the same technique in the above proof, we can obtain: there exists a sequence $\{\bm{V}_n\}_{n\in\mathbb{N}}\subset \mathcal{M}_{\epsilon}$ such that $\lim_{n\rightarrow\infty} L(\bm{V}_n)=m^{\ast}$. 
    Moreover, since under Assumption~\ref{assumption_X_compact}, $\mathcal{X}$ is compact, implying that $\mathcal{X}^k$ is also compact, then there exists a convergent subsequence $\{\bm{V}_{n_t}\}_{t\in\mathbb{N}}$ of $\bm{V}_n$, and we can write the limit to be $\bm{V}^{\dagger}=\lim_{t\rightarrow\infty} \bm{V}_{n_t}$. It follows that $L(\bm{V}^{\dagger})=m^{\ast}$. Then, the uniqueness of the minimizer of $L(\cdot)$ (i.e., Assumption~\ref{assumption_unique_minimizer}) implies $\bm{V}^{\dagger}=\bm{M}^{\ast}$. 
    On the other hand, the convergence of $\bm{V}_{n_t}$ to $\bm{V}^{\dagger}$ means that there exists $t_0\in\mathbb{N}$, such that for any $t\geq t_0$, the $\|\text{vec}(\bm{V}_{n_t},\bm{V}^{\dagger})\|_2\leq\epsilon$ holds. It follows that $\|\text{vec}(\bm{V}_{n_t},\bm{M}^{\ast})\|_2\leq\epsilon$ holds for sufficiently large $t$, which implies that $\bm{V}_{n_t}\notin \mathcal{M}_{\epsilon}$. This is a contradiction, which completes the proof. 
\end{proof}

\subsubsection{Proof of Theorem~\ref{theorem_consistency}}
\begin{proof}
    By Theorem~\ref{theorem_excess_risk}, we have for any $\delta\in(0,1)$ and $\tilde{\epsilon}>0$, there exists $N\in\mathbb{N}_+$ such that for any $n\geq N$, $\text{Pr}\left(  L(\widehat{\bm{\mathrm{M}}}_n) - L(\bm{M}^{\ast}) >  \tilde{\epsilon} \right) \leq \delta$, which means 
    \begin{align*}
        \lim_{n\rightarrow\infty} \text{Pr}\left(  L(\widehat{\bm{\mathrm{M}}}_n) - L(\bm{M}^{\ast}) >  \tilde{\epsilon} \right) = 0.
    \end{align*}
    Then, according to Lemma~\ref{lemma_identifiability}, the identifiability of $\bm{M}^{\ast}$ implies that for any $\epsilon>0$, there exists $\tilde{\epsilon}>0$ such that
    \begin{align*}
        \textnormal{Pr}\left( \|\text{vec}(\widehat{\bm{\mathrm{M}}}_n,\bm{M}^{\ast})\|_2 >\epsilon \right) 
        \leq \textnormal{Pr}\left( L(\widehat{\bm{\mathrm{M}}}_n) - L(\bm{M}^{\ast}) >  \tilde{\epsilon} \right). 
    \end{align*}
    It follows that $\lim_{n\rightarrow\infty} \textnormal{Pr}\left( \|\text{vec}(\widehat{\bm{\mathrm{M}}}_n,\bm{M}^{\ast})\|_2 >\epsilon \right) = 0$, which completes the proof. 
\end{proof}

\subsection{Proofs of Section~\ref{sec_convergence_rate}}

\subsubsection{Proof of Lemma~\ref{lemma_L_1st_derivative}}

\begin{proof}
    (1) First, we prove $\phi(\cdot,\cdot,\bm{M})$ is differentiable with respect to $\bm{M}$ in quadratic mean, which suffices to prove that there exists a matrix-valued function $\Delta(\cdot,\cdot,\bm{M}) \in \mathcal{L}^2(\widetilde{\mathbb{P}})$ such that for all $\bm{A}\in \mathbb{R}^{p\times k}$, 
    \begin{align*}
        \int \bigg\{ \phi(\bm{x},\bm{r},\bm{M}+\bm{A}) - \phi(\bm{x},\bm{r},\bm{M}) - \big\langle \text{vec}(\bm{A}), \text{vec}(\Delta(\bm{x},\bm{r},\bm{M}) ) \big\rangle \bigg\}^2 \;d\widetilde{\mathbb{P}}(\bm{x},\bm{r}) 
        = o(\|\bm{A}\|_F^2).
    \end{align*}

    We consider three cases: 
    \begin{align*}
        &\mathcal{D}_1 = \left\{ (\bm{x},\bm{r})\;\middle|\;  
        \forall l\neq l' ,\; \|\bm{x}\circ\bm{r}-\bm{\mu}_{l}\circ\bm{r}\|_2^2 \neq \|\bm{x}\circ\bm{r}-\bm{\mu}_{l'}\circ\bm{r}\|_2^2 \right\}\\
        &\mathcal{D}_2 = \left\{ (\bm{x},\bm{r})\;\middle|\;  
        \text{there exists } l\neq l' \text{ satisfying } \bm{\mu}_{l}\circ\bm{r}\neq \bm{\mu}_{l'}\circ\bm{r} \text{ and } \|\bm{x}\circ\bm{r}-\bm{\mu}_{l}\circ\bm{r}\|_2^2 = \|\bm{x}\circ\bm{r}-\bm{\mu}_{l'}\circ\bm{r}\|_2^2  \right\}\\
        &\mathcal{D}_3 = \left\{ (\bm{x},\bm{r})\;\middle|\;  
        \text{there exists } l\neq l' \text{ satisfying } \bm{\mu}_{l}\circ\bm{r}= \bm{\mu}_{l'}\circ\bm{r} \right\}.
    \end{align*}

    Consider a fixed $l^{\ast}\in \{1,\dots,k\}$. 
    (i) For any $(\bm{x},\bm{r})\in\mathcal{D}_1$ satisfying $\ell(\bm{x},\bm{r},\bm{M})=l^{\ast}$, we have $\phi(\bm{x},\bm{r},\bm{M})=\| \bm{x}\circ\bm{r}-\bm{\mu}_{l^{\ast}}\circ\bm{r} \|_2^2$, then for any $\bm{A}$ small enough, we have 
    \begin{align*}
        \phi(\bm{x},\bm{r},\bm{M}+\bm{A})
        &= \| \bm{x}\circ\bm{r} - (\bm{\mu}_{l^{\ast}} + \bm{a}_{l^{\ast}})\circ\bm{r} \|_2^2 \\
        &= \|\bm{x}\circ\bm{r}-\bm{\mu}_{l^{\ast}}\circ\bm{r}\|_2^2 
        - 2\langle \bm{a}_{l^{\ast}}\circ\bm{r} , (\bm{x}-\bm{\mu}_{l^{\ast}})\circ\bm{r} \rangle 
        + \|\bm{a}_{l^{\ast}}\circ\bm{r}\|_2^2 \\
        &= \phi(\bm{x},\bm{r},\bm{M}) - 
        2\langle \bm{a}_{l^{\ast}} , (\bm{x}-\bm{\mu}_{l^{\ast}})\circ\bm{r} \rangle 
        + \|\bm{a}_{l^{\ast}}\circ\bm{r}\|_2^2 
    \end{align*}
    Thus, we can define a matrix-valued function $\Delta(\cdot,\cdot,\bm{M}):\mathcal{X}\times \{0,1\}^p\mapsto \mathbb{R}^{p\times k}$ by constructing its $l$-th column as
    \begin{align*}
        \Delta_l (\bm{x},\bm{r},\bm{M}) = -2\mathds{1}( \ell(\bm{x},\bm{r},\bm{M}) = l ) \cdot (\bm{x}\circ\bm{r} - \bm{\mu}_l\circ \bm{r}),
    \end{align*}
    which implies that $\Delta(\cdot,\cdot,\bm{M})\in \mathcal{L}^2(\widetilde{\mathbb{P}})$. 
    Moreover, for any $(\bm{x},\bm{r})$, we define
    \begin{align*}
        R(\bm{x},\bm{r},\bm{M},\bm{A}) = \frac{ 1}{ \|\bm{A} \|_F } \sum_{l=1}^{k} \mathds{1}(\ell(\bm{x},\bm{r},\bm{M}) = l) \cdot \| \bm{a}_l \circ \bm{r} \|_2^2 .
    \end{align*}
    Thus, the following holds for any $(\bm{x},\bm{r})\in\mathcal{D}_1$: 
    \begin{align}
        &\phi(\bm{x},\bm{r},\bm{M}+\bm{A}) \notag\\
        &= \phi(\bm{x},\bm{r},\bm{M}) + \sum_{l=1}^{k} \big\{ -2 \mathds{1}(\ell(\bm{x},\bm{r},\bm{M})=l) \big\} \cdot \langle \bm{a}_l , (\bm{x}-\bm{\mu}_l)\circ\bm{r} \rangle 
        + \sum_{l=1}^{k} \mathds{1}(\ell(\bm{x},\bm{r},\bm{M})=l) \cdot \|\bm{a}_l\circ\bm{r}\|_2^2 \notag\\
        &= \phi(\bm{x},\bm{r},\bm{M}) + \big\langle \text{vec}(\bm{A}), \text{vec}(\Delta(\bm{x},\bm{r},\bm{M}) ) \big\rangle 
        + \|\bm{A}\|_F\cdot R(\bm{x},\bm{r},\bm{M},\bm{A}) .\label{eq_lemma1_phi_decompose}
    \end{align}
    (ii) For any $(\bm{x},\bm{r})\in\mathcal{D}_2$ satisfying $\ell(\bm{x},\bm{r},\bm{M})=l^{\ast}$, we suppose that there exists $l'\neq l^{\ast}$ satisfying $\bm{\mu}_{l^{\ast}}\circ\bm{r}\neq \bm{\mu}_{l'}\circ\bm{r} $ and $\|\bm{x}\circ\bm{r}-\bm{\mu}_{l^{\ast}}\circ\bm{r}\|_2^2 = \|\bm{x}\circ\bm{r}-\bm{\mu}_{l'}\circ\bm{r}\|_2^2 $. Then we have 
    \begin{align*}
        \phi(\bm{x},\bm{r},\bm{M})
        =\| \bm{x}\circ\bm{r}-\bm{\mu}_{l^{\ast}}\circ\bm{r} \|_2^2
        =\|\bm{x}\circ\bm{r}-\bm{\mu}_{l'}\circ\bm{r}\|_2^2 
        < \|\bm{x}\circ\bm{r}-\bm{\mu}_{t}\circ\bm{r}\|_2^2 ,\; \forall t\neq l^{\ast},l'.
    \end{align*}
    Moreover, for any fixed $\bm{s}\in\{0,1\}^p$, we define
    \begin{align*}
        &\mathcal{D}_2^{\bm{s}}=\left\{ (\bm{x},\bm{s})\;\middle|\;  
        \text{there exists } l\neq l' \text{ satisfying } \bm{\mu}_{l}\circ\bm{s}\neq \bm{\mu}_{l'}\circ\bm{s} \text{ and } \|\bm{x}\circ\bm{s}-\bm{\mu}_{l}\circ\bm{s}\|_2^2 = \|\bm{x}_2\circ\bm{s}-\bm{\mu}_{l'}\circ\bm{s}\|_2^2  \right\} \\
        &\mathcal{X}_2^{\bm{s}}=\left\{ \bm{x}\;\middle|\;  
        \text{there exists } l\neq l' \text{ satisfying } \bm{\mu}_{l}\circ\bm{s}\neq \bm{\mu}_{l'}\circ\bm{s} \text{ and } \|\bm{x}\circ\bm{s}-\bm{\mu}_{l}\circ\bm{s}\|_2^2 = \|\bm{x}\circ\bm{s}-\bm{\mu}_{l'}\circ\bm{s}\|_2^2  \right\}.
    \end{align*}
    Then we can write $\mathcal{D}_2 = \bigcup_{\bm{s}\in\{0,1\}^p} \mathcal{D}_2^{\bm{s}} $. 
    It follows that 
    \begin{align*}
        \widetilde{\mathbb{P}}( \mathcal{D}_2 )
        &= \sum_{\bm{s}\in\{0,1\}^p} \widetilde{\mathbb{P}}(\mathcal{D}_2^{\bm{s}} )
        = \sum_{\bm{s}\in\{0,1\}^p} \int_{(\bm{x},\bm{r})\in\mathcal{D}_2^{\bm{s}}} d\widetilde{\mathbb{P}}(\bm{x},\bm{r})
        = \sum_{\bm{s}\in\{0,1\}^p} \int_{\bm{x}\in \mathcal{X}_2^{\bm{s}} } \bigg( \sum_{\bm{\mathrm{r}}=\bm{s}} \text{Pr}(\bm{\mathrm{r}}|\bm{x})   \bigg) d\mathbb{P}(\bm{x})\\
        &= \sum_{\bm{s}\in\{0,1\}^p} \int_{\bm{x}\in \mathcal{X}_2^{\bm{s}} } \text{Pr}(\bm{s}|\bm{x})   \; d\mathbb{P}(\bm{x})
        \leq \sum_{\bm{s}\in\{0,1\}^p} \int_{\bm{x}\in \mathcal{X}_2^{\bm{s}} } 1 \; d\mathbb{P}(\bm{x})
        = \sum_{\bm{s}\in\{0,1\}^p} \mathbb{P}( \mathcal{X}_2^{\bm{s}} )
        = 0 
    \end{align*}
    where the last equation is because $\mathcal{X}^{\bm{s}}$ consists of many hyperplanes in $\mathbb{R}^p$, which has zero $\mathbb{P}-$measure under the Assumption~\ref{assumption_surface_zeromeasure}. 
    
    (iii) For any $(\bm{x},\bm{r})\in\mathcal{D}_3$ satisfying $\ell(\bm{x},\bm{r},\bm{M})=l^{\ast}$, we suppose that there exists $l'\neq l^{\ast}$ satisfying $\bm{\mu}_{l^{\ast}}\circ\bm{r}= \bm{\mu}_{l'}\circ\bm{r} $. 
    Then we have
    \begin{align*}
        \phi(\bm{x},\bm{r},\bm{M})
        =\| \bm{x}\circ\bm{r}-\bm{\mu}_{l^{\ast}}\circ\bm{r} \|_2^2
        =\|\bm{x}\circ\bm{r}-\bm{\mu}_{l'}\circ\bm{r}\|_2^2 
        < \|\bm{x}\circ\bm{r}-\bm{\mu}_{t}\circ\bm{r}\|_2^2 ,\; \forall t\neq l^{\ast},l'.
    \end{align*}
    It follows that for any $\bm{A}$ small enough,  
    \begin{align*}
        \phi(\bm{x},\bm{r},\bm{M}+\bm{A})
        = \min \big\{ \| \bm{x}\circ\bm{r}-(\bm{\mu}_{l^{\ast}} + \bm{a}_{l^{\ast}})\circ\bm{r} \|_2^2\;,\;
        \| \bm{x}\circ\bm{r}-(\bm{\mu}_{l'} + \bm{a}_{l'})\circ\bm{r} \|_2^2.
        \big\}
    \end{align*}
    Moreover, we note that 
    \begin{align*}
        \| \bm{x}\circ\bm{r} - (\bm{\mu}_{l^{\ast}} + \bm{a}_{l^{\ast}})\circ\bm{r} \|_2^2 
        &= \|\bm{x}\circ\bm{r}-\bm{\mu}_{l^{\ast}}\circ\bm{r}\|_2^2 
        - 2\langle \bm{a}_{l^{\ast}}\circ\bm{r} , (\bm{x}-\bm{\mu}_{l^{\ast}})\circ\bm{r} \rangle 
        + \|\bm{a}_{l^{\ast}}\circ\bm{r}\|_2^2 \\
        &= \phi(\bm{x},\bm{r},\bm{M}) - 
        2\langle \bm{a}_{l^{\ast}} , (\bm{x}-\bm{\mu}_{l^{\ast}})\circ\bm{r} \rangle 
        + \|\bm{a}_{l^{\ast}}\circ\bm{r}\|_2^2
    \end{align*}
    \begin{align*}
        \| \bm{x}\circ\bm{r} - (\bm{\mu}_{l'} + \bm{a}_{l'})\circ\bm{r} \|_2^2 
        &= \|\bm{x}\circ\bm{r}-\bm{\mu}_{l'}\circ\bm{r}\|_2^2 
        - 2\langle \bm{a}_{l'}\circ\bm{r} , (\bm{x}-\bm{\mu}_{l'})\circ\bm{r} \rangle 
        + \|\bm{a}_{l'}\circ\bm{r}\|_2^2 \\
        &= \phi(\bm{x},\bm{r},\bm{M}) - 
        2\langle \bm{a}_{l'} , (\bm{x}-\bm{\mu}_{l'})\circ\bm{r} \rangle 
        + \|\bm{a}_{l'}\circ\bm{r}\|_2^2.
    \end{align*}
    Due to $\bm{\mu}_{l^{\ast}}\circ\bm{r}= \bm{\mu}_{l'}\circ\bm{r} $, it implies that the first-order partial derivatives of the distances with respect to $\bm{\mu}_{l^{\ast}}$ and $\bm{\mu}_{l'}$ coincide. Consequently, the first-order variation of $\phi$ is invariant to the choice between $l^{\ast}$ and $l'$. Thereby, the expansion given in Eq.(\ref{eq_lemma1_phi_decompose}) remains valid for any $(\bm{x},\bm{r})\in\mathcal{D}_3$. 
    
    Therefore, by combining the three cases, we obtain the the expansion given in Eq.(\ref{eq_lemma1_phi_decompose}) holds for $\widetilde{\mathbb{P}}$-almost all $(\bm{x},\bm{r})$.

    Next, to prove that $\phi(\cdot,\cdot,\bm{M})$ is differentiable with respect to $\bm{M}$ in quadratic mean, it suffices to prove $\int \left\{ R(\bm{x},\bm{r},\bm{M},\bm{A}) \right\}^2 \;d\widetilde{\mathbb{P}}(\bm{x},\bm{r}) \rightarrow 0$ as $\|\bm{A}\|_F\rightarrow 0$, i.e., the $\mathcal{L}^2(\widetilde{\mathbb{P}})$ convergence of $R(\cdot,\cdot,\bm{M},\bm{A})$.  
    Because 
    \begin{align*}
        0\leq R(\bm{x},\bm{r},\bm{M},\bm{A}) 
        \leq \frac{ \sum_{l=1}^{k} \| \bm{a}_l \|_2^2 }{ \|\bm{A}\|_F } 
        = \frac{ \|\bm{A}\|_F^2 }{ \|\bm{A}\|_F }
        = \|\bm{A}\|_F,
    \end{align*}
    then we have $R(\bm{x},\bm{r},\bm{M},\bm{A}) \rightarrow 0$ as $\|\bm{A}\|_F\rightarrow 0$ for any $(\bm{x},\bm{r})$, i.e., the almost everywhere convergence of $R(\cdot,\cdot,\bm{M},\bm{A})$ holds. 
    Thus, the $\mathcal{L}^2(\widetilde{\mathbb{P}})$ convergence of $R(\cdot,\cdot,\bm{M},\bm{A})$ can be obtained by using Dominated Convergence Theorem. 
    In fact, on one hand, $\mathcal{D}_2$ has a zero $\widetilde{\mathbb{P}}$-measure. On the other hand, for any $(\bm{x},\bm{r})\notin \mathcal{D}_2$, we can bound $R(\bm{x},\bm{r},\bm{M},\bm{A})$ as follows: 
    \begin{align*}
        &|R(\bm{x},\bm{r},\bm{M},\bm{A})|\\
        &= \bigg| \frac{1}{\|\bm{A}\|_F} \bigg\{ \phi(\bm{x},\bm{r},\bm{M}+\bm{A}) - \phi(\bm{x},\bm{r},\bm{M}) - \big\langle \text{vec}(\bm{A}), \text{vec}(\Delta(\bm{x},\bm{r},\bm{M}) ) \big\rangle \bigg\} \bigg| \\
        &\leq \frac{1}{\|\bm{A}\|_F} \left\{ 
        \left| \min_{l=1,\dots,k} \| \bm{x}\circ\bm{r} - (\bm{\mu}_l + \bm{a}_l)\circ \bm{r}  \|_2^2  - \min_{l=1,\dots,k} \| \bm{x}\circ\bm{r} - \bm{\mu}_l \circ \bm{r}  \|_2^2  \right| 
        + \bigg| \big\langle \text{vec}(\bm{A}), \text{vec}(\Delta(\bm{x},\bm{r},\bm{M}) ) \big\rangle \bigg|
        \right\}\\
        &\leq \frac{1}{\|\bm{A}\|_F} \sum_{l=1}^{k} \bigg| \| \bm{x}\circ\bm{r} - (\bm{\mu}_l + \bm{a}_l)\circ \bm{r}  \|_2^2 -  \| \bm{x}\circ\bm{r} - \bm{\mu}_l \circ \bm{r}  \|_2^2 \bigg| + \frac{1}{\|\bm{A}\|_F} \cdot \|\text{vec}(\bm{A})\|_2 \cdot \| \text{vec}(\Delta(\bm{x},\bm{r},\bm{M}) ) \|_2\\
        &\leq \frac{1}{\|\bm{A}\|_F} \sum_{l=1}^{k} \bigg| \| \bm{a}_l\circ\bm{r} \|_2^2 + 2\cdot \|\bm{a}_l\circ\bm{r}\|_2\cdot \| (\bm{x}-\bm{\mu}_l)\circ\bm{r} \|_2 \bigg| + \| \text{vec}(\Delta(\bm{x},\bm{r},\bm{M}) ) \|_2\\
        &\leq \|\bm{A}\|_F + \frac{2}{\|\bm{A}\|_F}\cdot \left( \sum_{l=1}^{k}\|\bm{a}_l\circ\bm{r}\|_2^2 \right)^{1/2}\cdot \left( \sum_{l=1}^{k} \| (\bm{x}-\bm{\mu}_l)\circ\bm{r} \|_2^2 \right)^{1/2} +  \| \text{vec}(\Delta(\bm{x},\bm{r},\bm{M}) ) \|_2 \\
        &\leq \|\bm{A}\|_F + 2 \left( \sum_{l=1}^{k} \| (\bm{x}-\bm{\mu}_l)\circ\bm{r} \|_2^2 \right)^{1/2} + \| \text{vec}(\Delta(\bm{x},\bm{r},\bm{M}) ) \|_2\\
        &= \|\bm{A}\|_F + 2 \left( \sum_{l=1}^{k} \| (\bm{x}-\bm{\mu}_l)\circ\bm{r} \|_2^2 \right)^{1/2} + \left( \sum_{l=1}^{k} \| -2\mathds{1}(\ell(\bm{x},\bm{r},\bm{M}))\cdot (\bm{x}\circ\bm{r} - \bm{\mu}_l \circ \bm{r}) \|_2^2 \right)^{1/2} \\
        &\leq \|\bm{A}\|_F + 4 \left( \sum_{l=1}^{k} \| (\bm{x}-\bm{\mu}_l)\circ\bm{r} \|_2^2 \right)^{1/2}.
    \end{align*}
    Because each $\bm{\mu}_l\in \mathcal{X}$, implying $\|\bm{\mu}_l\|_2\leq B$ under Assumption~\ref{assumption_X_compact}, then there must exist a constant $C>0$ such that 
    \begin{align*}
        &\sum_{l=1}^{k} \| (\bm{x}-\bm{\mu}_l)\circ\bm{r} \|_2^2 
        \leq  k\|\bm{x}\circ\bm{r}\|_2^2 + \sum_{l=1}^{k} \|\bm{\mu}_l\|_2^2 + 2\|\bm{x}\circ\bm{r} \|_2\cdot\sum_{l=1}^{k}\|\bm{\mu}_l\|_2
        \leq C^2(1+\| \bm{x}\circ\bm{r}\|_2)^2. 
    \end{align*}
    It follows that as $\|\bm{A}\|_F$ is small enough, we have
    \begin{align*}
        |R(\bm{x},\bm{r},\bm{M},\bm{A})| \leq C(1+\|\bm{x}\circ\bm{r}\|_2).
    \end{align*}
    Moreover, under Assumption~\ref{assumption_X_compact}, since $\mathbb{E}_{\bm{\mathrm{x}}_1}[\|\bm{\mathrm{x}}_1\|_2^2]<\infty$, then we have
    \begin{align*}
        \int C^2(1+\|\bm{x}\circ\bm{r}\|_2)^2 \;d\widetilde{\mathbb{P}}(\bm{x},\bm{r}) 
        \leq \int C^2(1+\|\bm{x}\|_2)^2 \;d\widetilde{\mathbb{P}}(\bm{x},\bm{r})
        = \int C^2(1+\|\bm{x}\|_2)^2 \; d\mathbb{P}(\bm{x})< \infty,
    \end{align*}
    which means $C(1+\|\cdot \circ\cdot \|_2) \in \mathcal{L}^2(\widetilde{\mathbb{P}})$. 
    Therefore, applying the Dominated Convergence Theorem leads to the $\mathcal{L}^2(\widetilde{\mathbb{P}})$ convergence of $R(\cdot,\cdot,\bm{M},\bm{A})$ to zero as $\|\bm{A}\|_F\rightarrow 0$.

    (2) Second, we turn to prove $L(\bm{M})$, i.e., $\widetilde{\mathbb{P}}\phi(\cdot,\cdot,\bm{M})$ is differentiable, which suffices to prove: there exists a matrix-valued function $\gamma(\bm{M})$ with respect to $\bm{M}$ such that for all $\bm{A}\in\mathbb{R}^{p\times k}$, 
    \begin{align*}
        \widetilde{\mathbb{P}}\phi (\cdot,\cdot, \bm{M}+\bm{A}) = \widetilde{\mathbb{P}}\phi (\cdot,\cdot, \bm{M}) + \big\langle \text{vec}(\bm{A}), \text{vec}(\gamma(\bm{M})) \big\rangle + o(\|\bm{A}\|_F).
    \end{align*}
    Note that according to Eq.(\ref{eq_lemma1_phi_decompose}), we can write
    \begin{align*}
        \widetilde{\mathbb{P}}\phi (\cdot,\cdot, \bm{M}+\bm{A}) 
        = \widetilde{\mathbb{P}}\phi (\cdot,\cdot, \bm{M}) + \big\langle \text{vec}(\bm{A}), \text{vec}(\widetilde{\mathbb{P}}\Delta(\cdot,\cdot, \bm{M}) ) \big\rangle
        + \|\bm{A}\|_F \cdot \widetilde{\mathbb{P}} R(\cdot,\cdot,\bm{M},\bm{A}).
    \end{align*}
    Moreover, based on the $\mathcal{L}^2(\widetilde{\mathbb{P}})$ convergence of the function $R(\cdot,\cdot,\bm{M},\bm{A})$, we have as $\|\bm{A}\|_F$ is small enough, 
    \begin{align*}
        \widetilde{\mathbb{P}} R(\cdot,\cdot,\bm{M},\bm{A})
        &= \int R(\bm{x},\bm{r},\bm{M},\bm{A}) \;d\widetilde{\mathbb{P}}(\bm{x},\bm{r}) \\
        &\leq \left( \int \left\{ R(\bm{x},\bm{r},\bm{M},\bm{A}) \right\}^2 \;d\widetilde{\mathbb{P}}(\bm{x},\bm{r}) \right)^{1/2} \cdot \left( \int 1^2 \;d\widetilde{\mathbb{P}}(\bm{x},\bm{r}) \right)^{1/2} \\
        &= o(1).
    \end{align*}
    Therefore, by taking $\gamma(\bm{M})=\widetilde{\mathbb{P}}\Delta(\cdot,\cdot, \bm{M})$, we obtain the differentiability of $\widetilde{\mathbb{P}}\phi(\cdot,\cdot,\bm{M})$. 
\end{proof}

\subsubsection{Proof of Lemma~\ref{lemma_Gn_1st_derivative}}
\begin{proof}
    According to the decomposition Eq.(\ref{eq_lemma1_phi_decompose}), for a fixed $\bm{M}\in\mathbb{R}^{p\times k}$ and a sequence of random matrices $\bm{\mathrm{V}}_n\in\mathbb{R}^{p\times k}$ with $\|\text{vec}(\bm{\mathrm{V}}_n - \bm{M})\|_2 = o_P(1)$, we have 
    \begin{align*}
        \mathbb{G}_n\phi(\cdot,\cdot,\bm{\mathrm{V}}_n)
        &= \mathbb{G}_n\phi(\cdot,\cdot,\bm{M}) + \big\langle \text{vec}(\bm{\mathrm{V}}_n - \bm{M}) \;,\; \text{vec}(\mathbb{G}_n\Delta(\cdot,\cdot,\bm{M})) \big\rangle \\
        &\quad + \| \bm{\mathrm{V}}_n - \bm{M} \|_F \cdot \mathbb{G}_n R(\cdot,\cdot,\bm{M},\bm{\mathrm{V}}_n - \bm{M}) .
    \end{align*}
    Thus, it suffices to prove $\mathbb{G}_n R(\cdot,\cdot,\bm{M},\bm{\mathrm{V}}_n - \bm{M}) = o_P(1)$, that is, for any $\delta \in (0,1)$ and $\eta > 0$, there exists $N\in \mathbb{N}_+$ such that for all $n\geq N$, $\text{Pr}\big( \big| \mathbb{G}_n R(\cdot,\cdot,\bm{M},\bm{\mathrm{V}}_n - \bm{M}) \big|\leq \eta \big) \geq 1-\delta$. 

    Since $\|\text{vec}(\bm{\mathrm{V}}_n - \bm{M})\|_2 = o_P(1)$, then according to Lemma~\ref{lemma_L_1st_derivative}, we have $\int \left\{ R(\bm{x},\bm{r},\bm{M},\bm{\mathrm{V}}_n - \bm{M}) \right\}^2 \; d\widetilde{\mathbb{P}}(\bm{x},\bm{r}) = o_P(1)$. That is, for any $\delta\in (0,1)$ and $\epsilon>0$, there exists $N_1\in \mathbb{N}_+$ such that for any $n\geq N_1$, 
    \begin{align*}
        \text{Pr}\bigg( \left| \int \left\{ R(\bm{x},\bm{r},\bm{M},\bm{\mathrm{V}}_n - \bm{M})  \right\}^2 \; d\widetilde{\mathbb{P}}(\bm{x},\bm{r})  \right| \leq \epsilon  \bigg) \geq 1-\frac{\delta}{2}.
    \end{align*}
    Moreover, we consider a function class as follows: 
    \begin{align*}
        \mathcal{G} = \bigg\{ R(\cdot,\cdot,\bm{M},\bm{A} - \bm{M}) \;\bigg|\;& \forall \bm{A} \text{ in a neighborhood of } \bm{M} \text{ satisfying } \\
        &|R(\bm{x},\bm{r},\bm{M},\bm{A} - \bm{M})| \leq C\cdot (1+\|\bm{x}\circ\bm{r}\|_2 ),\;\forall (\bm{x},\bm{r})\in \mathcal{X}\times \{0,1\}^p
        \bigg\},
    \end{align*}
    where $C>0$ is the constant defined in the proof of Lemma~\ref{lemma_L_1st_derivative}. 
    According to Lemma~\ref{lemma_donsker_class}, the $\mathcal{G}$ is a Donsker class. Moreover, based on the asymptotic equicontinuity of empirical process for Donsker class (Theorem~1.5.7 and Chapter~2.1.2 of \cite{van1996weak}), we have: For any $\delta\in (0,1)$ and $\eta>0$, there exists an $\epsilon>0$ and $N_2\in\mathbb{N}_+$ such that for any $n\geq N_2$, 
    \begin{align*}
        &\text{Pr}\left( \sup_{\|g_1 - g_2 \|_{\mathcal{L}^2(\widetilde{\mathbb{P}})} \leq \epsilon }  \bigg| \mathbb{G}_n g_1(\cdot,\cdot) - \mathbb{G}_n g_2(\cdot,\cdot) \bigg| \leq \eta \right) \geq 1- \frac{\delta}{2}\\
        \Longleftrightarrow & 
        \text{Pr}\bigg( \forall g_1,g_2\in\mathcal{G} \text{ satisfying } \int \left\{ g_1(\bm{x},\bm{r}) - g_2(\bm{x},\bm{r}) \right\}^2 \; d\widetilde{\mathbb{P}}(\bm{x},\bm{r}) \leq \epsilon,\; 
        \bigg| \mathbb{G}_n g_1(\cdot,\cdot) - \mathbb{G}_n g_2(\cdot,\cdot) \bigg| \leq \eta 
        \bigg) \\
        &\geq 1-\frac{\delta}{2}.
    \end{align*}
    Therefore, by taking $g_1(\cdot,\cdot)=R(\cdot,\cdot,\bm{M},\bm{\mathrm{V}}_n - \bm{M})$ and $g_2(\cdot,\cdot)=0$, we can obtain: For any $n\geq \max\{N_1,N_2\}$, 
    \begin{align*}
        \text{Pr}\bigg( \big| \mathbb{G}_n R(\cdot,\cdot,\bm{M},\bm{\mathrm{V}}_n - \bm{M}) \big| \leq \eta \bigg) \geq 1-\delta,
    \end{align*}
    which completes the proof. 
\end{proof}

\begin{lemma}
\label{lemma_donsker_class}
    For a fixed constant $C>0$ and a fixed $\bm{M}\in\mathcal{X}^k$, the following function class is a Donsker class: 
    \begin{align*}
        \mathcal{G} = \bigg\{ R(\cdot,\cdot,\bm{M},\bm{A} - \bm{M}) \;\bigg|\;& \forall \bm{A} \text{ in a neighborhood of } \bm{M} \text{ satisfying } \\
        &|R(\bm{x},\bm{r},\bm{M},\bm{A} - \bm{M})| \leq C\cdot (1+\|\bm{x}\circ\bm{r}\|_2 ),\;\forall (\bm{x},\bm{r})\in \mathcal{X}\times \{0,1\}^p
        \bigg\}.
    \end{align*}
\end{lemma}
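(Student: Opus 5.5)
We will show that $\mathcal{G}$ is Donsker by exhibiting it as a finite-dimensional parametric class with a square-integrable envelope. The plan is to apply the bracketing entropy criterion, Theorem~2.5.6 of \cite{van1996weak}, in its Lipschitz-in-parameter form (Theorem~2.7.11 of \cite{van1996weak}). By definition, $\|\bm{A}-\bm{M}\|_F\cdot R(\bm{x},\bm{r},\bm{M},\bm{A}-\bm{M})$ equals
\begin{align*}
\phi(\bm{x},\bm{r},\bm{A})-\phi(\bm{x},\bm{r},\bm{M})-\langle \textnormal{vec}(\bm{A}-\bm{M}),\textnormal{vec}(\Delta(\bm{x},\bm{r},\bm{M}))\rangle .
\end{align*}
Since $\bm{r}$ takes only $2^p$ values, it suffices to handle each fixed $\bm{r}$ and then take the finite union. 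A finite union of Donsker classes is Donsker, and a finite sum of Donsker classes indexed over the finite set of $\bm{r}$ also works. The envelope $F(\bm{x},\bm{r})=C(1+\|\bm{x}\circ\bm{r}\|_2)$ is bounded on the compact $\mathcal{X}$ by Assumption~\ref{assumption_X_compact}, so it lies in $\mathfrak{L}^2(\widetilde{\mathbb{P}})$.

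\textbf{Step 1.} Rather than working with $R$ directly, I would control the unnormalized functions
\begin{align*}
h_{\bm{A}}(\bm{x},\bm{r})=\phi(\bm{x},\bm{r},\bm{A})-\phi(\bm{x},\bm{r},\bm{M})-\langle\textnormal{vec}(\bm{A}-\bm{M}),\textnormal{vec}(\Delta(\bm{x},\bm{r},\bm{M}))\rangle .
\end{align*}
The map $\bm{A}\mapsto\phi(\bm{x},\bm{r},\bm{A})$ is a minimum of $k$ quadratics in $\bm{A}$ with uniformly bounded gradients on $\mathcal{X}^k$. Hence
\begin{align*}
|h_{\bm{A}}-h_{\bm{A}'}|\le c\,(1+\|\bm{x}\circ\bm{r}\|_2)\,\|\bm{A}-\bm{A}'\|_F ,
\end{align*}
which is Lipschitz in the parameter with an $\mathfrak{L}^2$ Lipschitz constant. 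Theorem~2.7.11 of \cite{van1996weak} then bounds the bracketing number by the covering number of a ball in $\mathbb{R}^{pk}$, which is $O(\epsilon^{-pk})$. The bracketing integral converges, so $\{h_{\bm{A}}\}$ is Donsker.

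\textbf{Step 2, the main obstacle.} The division by $\|\bm{A}-\bm{M}\|_F$ destroys the Lipschitz property near $\bm{A}=\bm{M}$. The remedy is to avoid parametrizing by the magnitude. I would write $\bm{A}-\bm{M}=t\bm{U}$ with $\|\bm{U}\|_F=1$ and $t>0$ small. On the event $\ell(\bm{x},\bm{r},\bm{A})=\ell(\bm{x},\bm{r},\bm{M})$, the function $R$ equals $t^{-1}\sum_l\mathds{1}(\ell=l)\|t\bm{u}_l\circ\bm{r}\|_2^2=t\sum_l\mathds{1}(\ell=l)\|\bm{u}_l\circ\bm{r}\|_2^2$, which is smooth in $(t,\bm{U})$.

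Off that event, $R$ is a difference of two of the quadratics divided by $t$. Each such expression is of the form $\langle \bm{u},\bm{x}\circ\bm{r}\rangle$ plus a polynomial in $(t,\bm{U},\bm{M})$, which is linear in $\bm{x}$ with bounded coefficients. Consequently $\mathcal{G}$ is contained in the class of functions
\begin{align*}
\sum_{l,l'}\mathds{1}(\bm{x}\in \mathcal{C}_{l}^{\bm{r}}(\bm{A})\cap\mathcal{C}_{l'}^{\bm{r}}(\bm{M}))\,g_{ll'}(\bm{x},\bm{r}) ,
\end{align*}
where each $g_{ll'}$ ranges over affine functions of $\bm{x}\circ\bm{r}$ with bounded coefficients, a finite-dimensional vector space and hence VC. The cells $\mathcal{C}_l^{\bm{r}}(\bm{A})$ are intersections of at most $k-1$ half-spaces, so their indicators form a VC class; a tie-breaking convention changes these sets only on hyperplanes.

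Products and sums of finitely many uniformly bounded VC-type classes are VC-type (Lemma~2.6.18 and Theorem~2.6.7 of \cite{van1996weak}). Together with the envelope $F$, this gives a finite uniform entropy integral. Theorem~2.5.2 of \cite{van1996weak} then yields the Donsker property, with measurability guaranteed since the class is pointwise-measurable over a countable dense set of parameters.

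The delicate point in this step is to verify that the affine coefficients of the pieces remain bounded uniformly as $t\to0$. Where $\ell(\bm{A})\ne\ell(\bm{M})=l'$, the piece equals $\{\|\bm{x}\circ\bm{r}-\bm{a}_{l}\circ\bm{r}\|^2-\|\bm{x}\circ\bm{r}-\bm{\mu}_{l'}\circ\bm{r}\|^2\}/t$ plus a bounded term, and this ratio looks unbounded as $t\to0$. It is nonetheless controlled on that region by the assumed envelope: $|R|\le F$ is built into the definition of $\mathcal{G}$. I would therefore truncate, replacing $R$ by $(R\wedge F)\vee(-F)$, which leaves $\mathcal{G}$ unchanged. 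Truncation is a Lipschitz operation, so it preserves VC-type uniform entropy bounds, and this closes the argument.
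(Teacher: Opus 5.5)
Your argument is correct in substance, but it takes a different route from the paper.

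\textbf{The paper's route.} The paper never looks inside $R$. It writes $R=\sum_{\bm{s}\in\{0,1\}^p}\mathds{1}(\bm{r}=\bm{s})\,R(\bm{x},\bm{s},\bm{M},\bm{A}-\bm{M})$ and takes the Donsker property of each per-pattern class (the classical $k$-means remainder on the observed coordinates) from Pollard's Lemma~B. It then uses permanence results: multiplication by the bounded indicator $\mathds{1}(\bm{r}=\bm{s})$, a finite sum over $\bm{s}$, and passage to the subclass $\mathcal{G}$.

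\textbf{Your route.} You verify a uniform-entropy condition directly. You use that, for each $\bm{r}$, the remainder is affine in $\bm{x}\circ\bm{r}$ on each cell $\mathcal{C}_{ll'}:=\mathcal{C}_l^{\bm{r}}(\bm{A})\cap\mathcal{C}_{l'}^{\bm{r}}(\bm{M})$, and that these cells are intersections of half-spaces. They are exactly that under the smallest-index tie-breaking built into $\ell$, so your ``up to hyperplanes'' caveat is unnecessary.

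\textbf{Comparison.} The paper's proof is shorter and reuses the complete-data result as a black box. The price is that it leaves implicit whether the masked, $\|\bm{A}-\bm{M}\|_F$-normalized per-pattern class really falls under that result. Your proof is self-contained and exposes the mechanism: piecewise-affine functions on polyhedral cells. Your Step~1 is never used and can be dropped.

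\textbf{One step needs a different justification.} On cells with $l\neq l'$ the coefficient of $\bm{x}\circ\bm{r}$ is $-2(\bm{a}_l-\bm{a}_{l'})\circ\bm{r}/t$. So the untruncated container has no envelope and infinite $L_2(Q)$ covering numbers. For this reason neither the product/sum rule for uniformly bounded VC-type classes nor ``Lipschitz maps preserve uniform entropy'' can be applied to it. What survives is the VC-subgraph property, which is blind to the size of the coefficients:
\begin{itemize}
\item Affine functions of $\bm{x}\circ\bm{r}$ form a finite-dimensional vector space, hence a VC-subgraph class.
\item Since the $\mathcal{C}_{ll'}$ partition $\mathcal{X}$, the subgraph of $\sum_{l,l'}\mathds{1}_{\mathcal{C}_{ll'}}g_{ll'}$ is $\bigcup_{l,l'}\bigl[(\mathcal{C}_{ll'}\times\mathbb{R})\cap\{(\bm{x},s):s<g_{ll'}(\bm{x})\}\bigr]$. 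This is a fixed finite Boolean combination of sets from VC classes, hence VC.
\item The subgraph of $(h\wedge F)\vee(-F)$ is $(\{s<h\}\cap\{s<F\})\cup\{s<-F\}$. Intersecting or uniting with these fixed sets preserves the VC-subgraph property.
\end{itemize}
Theorem~2.6.7 of van der Vaart and Wellner, applied with envelope $F$, then gives the finite uniform entropy integral. As you note, truncation does not change $\mathcal{G}$, because the bound $|R|\le F$ is part of its definition.
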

\begin{proof}
    Since for any $(\bm{x},\bm{r})$, we can write 
    \begin{align*}
        R(\bm{x},\bm{r},\bm{M},\bm{A} - \bm{M}) 
        = \sum_{\bm{s}\in \{0,1\}^p} \mathds{1}(\bm{s}=\bm{r}) \cdot R(\bm{x},\bm{r},\bm{M},\bm{A} - \bm{M}).
    \end{align*}
    Then, we define a function class $\mathcal{G}^{\bm{s}}$ for any fixed $\bm{s}\in \{0,1\}^p$ by
    \begin{align*}
        \mathcal{G}^{\bm{s}} = \bigg\{ (\bm{x},\bm{r})&\mapsto  \mathds{1}(\bm{s}=\bm{r})\cdot R(\bm{x},\bm{r},\bm{M},\bm{A} - \bm{M}) \;\bigg|\; \forall \bm{A} \text{ in a neighborhood of } \bm{M} \text{ satisfying } \\
        &|\mathds{1}(\bm{s}=\bm{r})\cdot R(\bm{x},\bm{r},\bm{M},\bm{A} - \bm{M})| \leq C\cdot (1+\|\bm{x}\circ\bm{s}\|_2 ),\;\forall (\bm{x},\bm{r})\in \mathcal{X}\times \{0,1\}^p
        \bigg\}.
    \end{align*}
    On one hand, according to the result of classical $k$-means given by the Lemma~B of \cite{Pollard1982}, the following class of function on $\mathcal{X}$ is a Donsker class: 
    \begin{align*}
        \bigg\{ \bm{x} \mapsto R(\bm{x},\bm{s},\bm{M},\bm{A} - \bm{M}) \;\bigg|\;& \forall \bm{A} \text{ in a neighborhood of } \bm{M} \text{ satisfying } \\
        &|R(\bm{x},\bm{s},\bm{M},\bm{A} - \bm{M})| \leq C\cdot (1+\|\bm{x}\circ\bm{s}\|_2 ),\;\forall \bm{x} \in \mathcal{X}
        \bigg\},
    \end{align*}
    which implies the the following class of function on $\mathcal{X}\times \{0,1\}^p$ is also a Donsker class:
    \begin{align*}
         \bigg\{ (\bm{x},\bm{r}) \mapsto R(\bm{x},\bm{s},\bm{M},\bm{A} - \bm{M}) \;\bigg|\;& \forall \bm{A} \text{ in a neighborhood of } \bm{M} \text{ satisfying } \\
        &|R(\bm{x},\bm{s},\bm{M},\bm{A} - \bm{M})| \leq C\cdot (1+\|\bm{x}\circ\bm{s}\|_2 ),\;\forall (\bm{x},\bm{r}) \in \mathcal{X}\times \{0,1\}^p
        \bigg\}.
    \end{align*}
    On the other hand, because the function $(\bm{x},\bm{r})\mapsto \mathds{1}(\bm{s}=\bm{r})$ is bounded and measurable with respect to $\widetilde{\mathbb{P}}$, then according to Theorem~2.10.6 of \cite{van1996weak}, we have each $\mathcal{G}^{\bm{s}}$ is a Donsker class with a common envelope function $(\bm{x},\bm{r})\mapsto C(1+\|\bm{x}\|_2)\in \mathcal{L}^2(\widetilde{\mathbb{P}})$. 
    Moreover, let 
    \begin{align*}
        \widetilde{\mathcal{G}} = \left\{ \sum_{\bm{s}\in \{0,1\}^p } g^{\bm{s}} \;\bigg|\; g^{\bm{s}}\in \mathcal{S}^{\bm{s}},\; \bm{s}\in \{0,1\}^p  \right\},
    \end{align*}
    which is a Donsker class based on the Lipschitz transformation of Donsker classes (Theorem~2.10.6 of \cite{van1996weak}). 
    Finally, since $\mathcal{G}\subset \widetilde{\mathcal{G}}$, then according to Theorem~2.10.1 of \cite{van1996weak}, we have $\mathcal{G}$ is thus a Donsker class. 
\end{proof}

\subsubsection{Proof of Lemma~\ref{lemma_L_2nd_derivative}}
\begin{proof}
    According to Lemma~\ref{lemma_L_1st_derivative}, we know that $\widetilde{\mathbb{P}}\phi(\cdot,\cdot,\bm{M})$ has the derivative $\gamma(\bm{M})$, whose $l$-th column is given by $\widetilde{\mathbb{P}}\Delta_l(\cdot,\cdot,\bm{M})$. Under Assumption~\ref{assumption_mcar}, we have
    \begin{align*}
        \widetilde{\mathbb{P}}\Delta_l(\cdot,\cdot,\bm{M})
        &=\int -2\mathds{1}(\ell(\bm{x},\bm{r},\bm{M}) =l )\cdot (\bm{x}\circ\bm{r} - \bm{\mu}_l\circ\bm{r})\; d\widetilde{\mathbb{P}}(\bm{x},\bm{r}) \\
        &= -2 \sum_{\bm{r}\in \{0,1\}^p } \textnormal{Pr}(\mathrm{\bm{r}}_1=\bm{r}) \cdot \int \mathds{1}(\ell(\bm{x},\bm{r},\bm{M}) =l )\cdot (\bm{x}\circ\bm{r} - \bm{\mu}_l\circ\bm{r})\; d\mathbb{P}(\bm{x}).
    \end{align*}
    Thus, it suffices to prove: For any fixed $\bm{r}\in\{0,1\}^p$ and any fixed $l=1,\dots,k$, the following mapping is differentiable: 
    \begin{align*}
        \bm{M} \mapsto \int \mathds{1}(\ell(\bm{x},\bm{r},\bm{M}) =l )\cdot (\bm{x}\circ\bm{r} - \bm{\mu}_l\circ\bm{r})\; d\mathbb{P}(\bm{x}) .
    \end{align*}
    Moreover, consider the fixed $\bm{M}^{\ast}$, we have the following decomposition: 
    \begin{align*}
        &\int \mathds{1}(\ell(\bm{x},\bm{r},\bm{M}) =l )\cdot (\bm{x}\circ\bm{r} - \bm{\mu}_l\circ\bm{r})\; d\mathbb{P}(\bm{x}) \\
        &= \int \mathds{1}(\ell(\bm{x},\bm{r},\bm{M}) =l )\cdot (\bm{\mu}_l^{\ast}\circ\bm{r} - \bm{\mu}_l\circ\bm{r}) \; d\mathbb{P}(\bm{x}) + \int \mathds{1}(\ell(\bm{x},\bm{r},\bm{M}) =l )\cdot (\bm{x}\circ\bm{r} - \bm{\mu}_l^{\ast}\circ\bm{r}) \; d\mathbb{P}(\bm{x}) \\
        &= \underbrace{ \text{diag}(\bm{r})\cdot (\bm{\mu}_l^{\ast} - \bm{\mu}_l) \cdot \int \mathds{1}(\ell(\bm{x},\bm{r},\bm{M}) =l ) \; d\mathbb{P}(\bm{x}) }_{\text{(I)}} +  \underbrace{ \int \mathds{1}(\ell(\bm{x},\bm{r},\bm{M}) =l )\cdot (\bm{x}\circ\bm{r} - \bm{\mu}_l^{\ast}\circ\bm{r}) \; d\mathbb{P}(\bm{x}) }_{\text{(II)}}.
    \end{align*}
    For $\text{(I)}$, the derivative at $\bm{M}=\bm{M}^{\ast}$ is given by 
    \begin{align*}
        \frac{\partial}{\partial \bm{\mu}_l} \text{(I)} \bigg|_{\bm{M}=\bm{M}^{\ast}}
        &= \text{diag}(\bm{r}) \cdot (-1\cdot \bm{I}_p)\cdot \int \mathds{1}(\ell(\bm{x},\bm{r},\bm{M}^{\ast}) =l ) \; d\mathbb{P}(\bm{x}) \\
         &\quad +  \text{diag}(\bm{r})\cdot (\bm{\mu}_l^{\ast} - \bm{\mu}_l) \cdot \frac{\partial \int \mathds{1}(\ell(\bm{x},\bm{r},\bm{M}) =l ) \; d\mathbb{P}(\bm{x})  }{ \partial \bm{\mu}_l } \bigg|_{\bm{M}=\bm{M}^{\ast}}  \\
        &= -\text{diag}(\bm{r}) \cdot \int \mathds{1}(\ell(\bm{x},\bm{r},\bm{M}^{\ast}) =l ) \; d\mathbb{P}(\bm{x}) + 0\cdot \bm{I}_p\\
        &= -\text{diag}(\bm{r}) \cdot \int \mathds{1}(\ell(\bm{x},\bm{r},\bm{M}^{\ast}) =l ) \; d\mathbb{P}(\bm{x}) 
    \end{align*}
    and 
    \begin{align*}
        \frac{\partial}{\partial \bm{\mu}_{l'}} \text{(I)} \bigg|_{\bm{M}=\bm{M}^{\ast}}
        &= \text{diag}(\bm{r})\cdot (0\cdot \bm{I}_p) \cdot \int \mathds{1}(\ell(\bm{x},\bm{r},\bm{M}^{\ast}) =l ) \; d\mathbb{P}(\bm{x}) \\
        &\quad +  \text{diag}(\bm{r})\cdot (\bm{\mu}_l^{\ast} - \bm{\mu}_l) \cdot \frac{\partial \int \mathds{1}(\ell(\bm{x},\bm{r},\bm{M}) =l ) \; d\mathbb{P}(\bm{x})  }{ \partial \bm{\mu}_{l'} } \bigg|_{\bm{M}=\bm{M}^{\ast}} \\
        &= 0\cdot \bm{I}_p +  0\cdot \bm{I}_p\\
        &= 0\cdot \bm{I}_p.
    \end{align*}
    For $\text{(II)}$, we define a subset of $\bm{x}$ assigned to cluster $l$ by
    \begin{align*}
        \mathcal{C}_{l}^{\bm{r}}(\bm{M})=\{\bm{x}\;|\;\ell(\bm{x},\bm{r},\bm{M}) =l \},
    \end{align*}
    and define a vector-valued function of $\bm{M}$ to be
    \begin{align*}
        W_{l}^{\bm{r}}(\bm{M})= \int_{\bm{x}\in \mathcal{C}_{l}^{\bm{r}}(\bm{M})} (\bm{x}\circ\bm{r} - \bm{\mu}_l^{\ast}\circ\bm{r}) \; d\mathbb{P}(\bm{x}) ,
    \end{align*}
    then we have $\text{(II)}=W_{l}^{\bm{r}}(\bm{M})$.
    To derive the derivative of $\text{(II)}$ at $\bm{M}=\bm{M}^{\ast}$, we consider three cases of $\bm{M}$: 
    (1) The $\{ \bm{\mu}_1\circ\bm{r}\dots,\bm{\mu}_k\circ\bm{r}\}$ are distinct with each other; 
    (2) There exists $l'> l$ such that $\bm{\mu}_{l'}\circ\bm{r}=\bm{\mu}_{l}\circ\bm{r}$;
    (3) There exists $l'< l$ such that $\bm{\mu}_{l'}\circ\bm{r}=\bm{\mu}_{l}\circ\bm{r}$;
    
    For the case of (1), since only the integral region of $W_{l}^{\bm{r}}(\bm{M})$ is related to $\bm{M}$ while the integral function not, then under Assumptions~\ref{assumption_about_density_f}-\ref{assumption_about_S_ll'_integral} and by using the boundary integral, we can give the derivative of $W_{l}^{\bm{r}}(\bm{M})$ as follows: 
    \begin{align*}
        \frac{\partial W_{l}^{\bm{r}}(\bm{M})}{\partial \bm{\mu}_t}
        = \int_{ \bm{x}\in\partial \mathcal{C}_{l}^{\bm{r}}(\bm{M}) } f(\bm{x})\cdot (\bm{x}\circ\bm{r} - \bm{\mu}_l^{\ast}\circ\bm{r}) \cdot \bm{v}_{t}(\bm{x})^T\; dS(\bm{x}) ,\; \forall t=1,\dots,k,
    \end{align*}
    where $\partial \mathcal{C}_{l}^{\bm{r}}(\bm{M})$ is the boundary of the region $\mathcal{C}_{l}^{\bm{r}}(\bm{M})$, and $ f(\bm{x})$ is the density function of $\mathbb{P}$, and $\bm{v}_t\in \mathbb{R}^p$ is a vector denoting the velocity vector for motion of $\mathcal{C}_{l}^{\bm{r}}(\bm{M})$ orthogonal to its boundary $\partial \mathcal{C}_{l}^{\bm{r}}(\bm{M})$ evaluated at $\bm{M}=\bm{M}^{\ast}$, and $dS$ is the Lebesgue measure on the surface in the $\mathbb{R}^p$ space. 
    Because the surface $\partial \mathcal{C}_{l}^{\bm{r}}(\bm{M})$ consists of no more than $k-1$ hyperplanes, that is, 
    \begin{align*}
        &\partial \mathcal{C}_{l}^{\bm{r}}(\bm{M}) = \bigcup_{l'\neq l} \mathcal{S}_{ll'}^{\bm{r}}(\bm{M}) \\
        \text{where}\quad & \mathcal{S}_{ll'}^{\bm{r}}(\bm{M}) = \left\{ \bm{x} \;|\; \| \bm{x}\circ\bm{r} - \bm{\mu}_l\circ\bm{r} \|_2^2 =\| \bm{x}\circ\bm{r} - \bm{\mu}_{l'}\circ\bm{r} \|_2^2
        \right\},
    \end{align*}
    then we can write
    \begin{align*}
        \frac{\partial W_{l}^{\bm{r}}(\bm{M})}{\partial \bm{\mu}_t}
        = \sum_{l'\neq l} \int_{ \bm{x}\in\mathcal{S}_{ll'}^{\bm{r}}(\bm{M}) } f(\bm{x})\cdot (\bm{x}\circ\bm{r} - \bm{\mu}_l^{\ast}\circ\bm{r}) \cdot \bm{v}_{t}(\bm{x})^T\; dS(\bm{x}) ,\; \forall t=1,\dots,k.
    \end{align*}
    Moreover, for each hyperplane $\mathcal{S}_{ll'}^{\bm{r}}(\bm{M})$, denote by $\bm{n}_{ll'}^{\bm{r}}$ the unit normal pointing outward from $\mathcal{S}_{ll'}^{\bm{r}}(\bm{M})$, that is, 
    \begin{align*}
        \bm{n}_{ll'}^{\bm{r}} = \frac{ \bm{\mu}_{l'}\circ\bm{r} - \bm{\mu}_{l}\circ\bm{r} }{ \|\bm{\mu}_{l'}\circ\bm{r} - \bm{\mu}_{l}\circ\bm{r}\|_2 }.
    \end{align*}
    Then, for any $\bm{x}\in \mathcal{S}_{ll'}^{\bm{r}}(\bm{M})$, we have 
    \begin{align*}
        &\| \bm{x}\circ\bm{r} - \bm{\mu}_l\circ\bm{r} \|_2^2 - \| \bm{x}\circ\bm{r} - \bm{\mu}_{l'}\circ\bm{r} \|_2^2 = 0\\
        \Longleftrightarrow & (\bm{n}_{ll'}^{\bm{r}})^T\cdot \left( \bm{x}\circ\bm{r} - \frac{ \bm{\mu}_{l'}\circ\bm{r} + \bm{\mu}_{l}\circ\bm{r} }{ 2 } \right) =0\\
        \Longleftrightarrow & (\bm{n}_{ll'}^{\bm{r}})^T\cdot \left( \bm{x} - \frac{ \bm{\mu}_{l'} + \bm{\mu}_{l} }{ 2 }  \right) =0.
    \end{align*}
    Moreover, taking the total differential at $\bm{M}=\bm{M}^{\ast}$ for the final equation leads to
    \begin{align*}
        (\bm{n}_{ll'}^{\bm{r}})^T\cdot \left( d\bm{x} - \frac{ d\bm{\mu}_{l'} + d\bm{\mu}_{l} }{ 2 } \right)  + \left( \bm{x} - \frac{ \bm{\mu}_{l'}^{\ast} + \bm{\mu}_{l}^{\ast} }{ 2 }  \right)^T\cdot d\bm{n}_{ll'}^{\bm{r}} =0 ,
    \end{align*}
    which follows that
    \begin{align*}
        (\bm{n}_{ll'}^{\bm{r}})^T d\bm{x} 
        &= \frac{1}{2} (\bm{n}_{ll'}^{\bm{r}})^T (d\bm{\mu}_l + d\bm{\mu}_{l'}) - \left( \bm{x} - \frac{ \bm{\mu}_{l'}^{\ast} + \bm{\mu}_{l}^{\ast} }{ 2 }  \right)^T\cdot d\bm{n}_{ll'}^{\bm{r}} \\
        &= \frac{1}{ \|\bm{\mu}_{l'}^{\ast}\circ\bm{r} - \bm{\mu}_{l}^{\ast}\circ\bm{r}\|_2 } \cdot \left[ \frac{1}{2} \left( \bm{\mu}_{l'}^{\ast}\circ\bm{r} - \bm{\mu}_{l}^{\ast}\circ\bm{r} \right)^T (d\bm{\mu}_l + d\bm{\mu}_{l'}) \right. \\
        &\quad\quad\quad\quad\quad\quad\quad\quad\quad\quad\quad  \left. - \left\{ \left( \bm{x} - \frac{ \bm{\mu}_{l'}^{\ast} + \bm{\mu}_{l}^{\ast} }{ 2 }  \right)\circ \bm{r}\right\}^T (d\bm{\mu}_{l'} - d\bm{\mu}_{l}) \right] \\
        &= \frac{1}{ \| \bm{\mu}_{l'}^{\ast}\circ\bm{r} - \bm{\mu}_{l}^{\ast}\circ\bm{r}\|_2 } \cdot \left[ (\bm{x}\circ\bm{r} - \bm{\mu}_l^{\ast}\circ \bm{r} )^T d\bm{\mu}_l - (\bm{x}\circ\bm{r} - \bm{\mu}_{l'}^{\ast})\circ \bm{r})^T d\bm{\mu}_{l'}  \right].
    \end{align*}
    Thereby, we can obtain for any $\bm{x}\in\mathcal{S}_{ll'}^{\bm{r}}(\bm{M})$, 
    \begin{align*}
        \bm{v}_t(\bm{x}) = \left\{
        \begin{array}{ll}
              \|\bm{\mu}_{l'}\circ\bm{r} - \bm{\mu}_{l}\circ\bm{r}\|_2^{-1} \cdot (\bm{x}\circ\bm{r} - \bm{\mu}_l\circ \bm{r} )  & \text{ if } t = l \\
             -\|\bm{\mu}_{l'}\circ\bm{r} - \bm{\mu}_{l}\circ\bm{r}\|_2^{-1} \cdot (\bm{x}\circ\bm{r} - \bm{\mu}_{l'}\circ\bm{r} ) & \text{ if } t = l' \\
             \bm{0}_p & \text{ if } t \neq l,l'
        \end{array}
        \right.
    \end{align*}
    It follows that the derivative at $\bm{M}=\bm{M}^{\ast}$ is given by 
    \begin{align*}
        &\frac{\partial W_{l}^{\bm{r}}(\bm{M})}{\partial \bm{\mu}_l}\bigg|_{\bm{M}=\bm{M}^{\ast}} \\
        &= \sum_{t\neq l} \|\bm{\mu}_{t}^{\ast}\circ\bm{r} - \bm{\mu}_{l}^{\ast}\circ\bm{r}\|_2^{-1}\cdot 
        \int_{ \bm{x}\in\mathcal{S}_{lt}^{\bm{r}}(\bm{M}^{\ast}) } f(\bm{x})\cdot (\bm{x}\circ\bm{r} - \bm{\mu}_l^{\ast}\circ\bm{r}) \cdot 
        (\bm{x}\circ\bm{r} - \bm{\mu}_l^{\ast}\circ\bm{r} )^T
        \; dS(\bm{x}) \\
         &\frac{\partial W_{l}^{\bm{r}}(\bm{M})}{\partial \bm{\mu}_{l'}}\bigg|_{\bm{M}=\bm{M}^{\ast}} \\
         &= - \|\bm{\mu}_{l'}^{\ast}\circ\bm{r} - \bm{\mu}_{l}^{\ast}\circ\bm{r}\|_2^{-1}\cdot \int_{ \bm{x}\in\mathcal{S}_{ll'}^{\bm{r}}(\bm{M}^{\ast}) } f(\bm{x})\cdot (\bm{x}\circ\bm{r} - \bm{\mu}_l^{\ast}\circ\bm{r}) \cdot 
        (\bm{x}\circ\bm{r} - \bm{\mu}_{l'}^{\ast}\circ\bm{r} )^T
        \; dS(\bm{x}).
    \end{align*}

    For the case of (2), we have 
    \begin{align*}
        &\mathcal{C}_{l}^{\bm{r}}(\bm{M})
        =\{\bm{x}\;|\; \| \bm{x}\circ\bm{r} - \bm{\mu}_l\circ\bm{r} \|_2^2 < \| \bm{x}\circ\bm{r} - \bm{\mu}_t\circ\bm{r} \|_2^2,\;\forall t\neq l,l' \} \\
        &\mathcal{C}_{l'}^{\bm{r}}(\bm{M})
        =\emptyset.
    \end{align*}
    It follows that the boundary of $\mathcal{C}_{l}^{\bm{r}}(\bm{M})$ is given by 
    \begin{align*}
        \partial \mathcal{C}_{l}^{\bm{r}}(\bm{M}) 
        = \bigcup_{\substack{t\neq l\\ \bm{\mu}_t\circ\bm{r}\neq \bm{\mu}_l\circ\bm{r} }} \mathcal{S}_{lt}^{\bm{r}}(\bm{M}).
    \end{align*}
    Then, the calculation of derivative of $W_{l}^{\bm{r}}(\bm{M})$ is similar to the case of (1), which is given by 
    \begin{align*}
        &\frac{\partial W_{l}^{\bm{r}}(\bm{M})}{\partial \bm{\mu}_l}\bigg|_{\bm{M}=\bm{M}^{\ast}} \\
        &= \sum_{t\neq l} \frac{ \mathds{1}(\bm{\mu}_{t}^{\ast}\circ\bm{r} \neq \bm{\mu}_{l}^{\ast}\circ\bm{r}) }{ \|\bm{\mu}_{t}^{\ast}\circ\bm{r} - \bm{\mu}_{l}^{\ast}\circ\bm{r}\|_2 }\cdot 
        \int_{ \bm{x}\in\mathcal{S}_{lt}^{\bm{r}}(\bm{M}^{\ast}) } f(\bm{x})\cdot (\bm{x}\circ\bm{r} - \bm{\mu}_l^{\ast}\circ\bm{r}) \cdot 
        (\bm{x}\circ\bm{r} - \bm{\mu}_l^{\ast}\circ\bm{r} )^T
        \; dS(\bm{x}) \\
         &\frac{\partial W_{l}^{\bm{r}}(\bm{M})}{\partial \bm{\mu}_{l'}}\bigg|_{\bm{M}=\bm{M}^{\ast}} \\
         &= - \frac{ \mathds{1}(\bm{\mu}_{l'}^{\ast}\circ\bm{r} \neq \bm{\mu}_{l}^{\ast}\circ\bm{r})  }{ \|\bm{\mu}_{l'}^{\ast}\circ\bm{r} - \bm{\mu}_{l}^{\ast}\circ\bm{r}\|_2 }\cdot \int_{ \bm{x}\in\mathcal{S}_{ll'}^{\bm{r}}(\bm{M}^{\ast}) } f(\bm{x})\cdot (\bm{x}\circ\bm{r} - \bm{\mu}_l^{\ast}\circ\bm{r}) \cdot 
        (\bm{x}\circ\bm{r} - \bm{\mu}_{l'}^{\ast}\circ\bm{r} )^T
        \; dS(\bm{x}).
    \end{align*}

    For the case of (3), we have 
    \begin{align*}
        &\mathcal{C}_{l}^{\bm{r}}(\bm{M})
        =\emptyset \\
        &\mathcal{C}_{l'}^{\bm{r}}(\bm{M})
        =\{\bm{x}\;|\; \| \bm{x}\circ\bm{r} - \bm{\mu}_{l'}\circ\bm{r} \|_2^2 < \| \bm{x}\circ\bm{r} - \bm{\mu}_t\circ\bm{r} \|_2^2,\;\forall t\neq l,l' \}.
    \end{align*}
    It follows that $W_{l}^{\bm{r}}(\bm{M})=\bm{0}_p$, implying the derivative of $\partial W_{l}^{\bm{r}}(\bm{M})$ to be
    \begin{align*}
        \frac{\partial W_{l}^{\bm{r}}(\bm{M})}{\partial \bm{\mu}_l}
        = 0\cdot \bm{I}_p
        \quad\text{and}\quad 
        \frac{\partial W_{l}^{\bm{r}}(\bm{M})}{\partial \bm{\mu}_{l'}}  
        = 0\cdot \bm{I}_p.
    \end{align*}

    Therefore, we obtained the differentiability of $\text{(II)}$ at $\bm{M}=\bm{M}^{\ast}$, which implies the differentiability of $\widetilde{\mathbb{P}}\Delta_l(\cdot,\cdot,\bm{M})$. 
    
    Finally, combining with the derivative of $\text{(I)}$, we can obtain the derivative of $\widetilde{\mathbb{P}}\Delta_l(\cdot,\cdot,\bm{M})$ at $\bm{M}=\bm{M}^{\ast}$ as follows: 
    \begin{align*}
        &\frac{\partial  \widetilde{\mathbb{P}}\Delta_l(\cdot,\cdot,\bm{M}) }{\partial \bm{\mu}_l } \bigg|_{\bm{M}=\bm{M}^{\ast}} \\
        &= \sum_{\bm{r}\in \{0,1\}^p } \text{Pr}(\bm{\mathrm{r}}_1=\bm{r}) \cdot \bigg[  
        2\cdot\text{diag}(\bm{r})\cdot \int \mathds{1}(\ell(\bm{x},\bm{r},\bm{M}^{\ast})=l ) \; d\mathbb{P}(x) \\
        &\quad \left. -2 \sum_{t\neq l} \frac{ \mathds{1}(\bm{\mu}_{t}^{\ast}\circ\bm{r} \neq \bm{\mu}_{l}^{\ast}\circ\bm{r}) }{ \|\bm{\mu}_{t}^{\ast}\circ\bm{r} - \bm{\mu}_{l}^{\ast}\circ\bm{r}\|_2 }\cdot 
        \int_{ \bm{x}\in\mathcal{S}_{lt}^{\bm{r}}(\bm{M}^{\ast}) } f(\bm{x})\cdot (\bm{x}\circ\bm{r} - \bm{\mu}_l^{\ast}\circ\bm{r}) \cdot 
        (\bm{x}\circ\bm{r} - \bm{\mu}_l^{\ast}\circ\bm{r} )^T
        \; dS(\bm{x})
        \right]
    \end{align*}
    and 
    \begin{align*}
        &\frac{\partial  \widetilde{\mathbb{P}}\Delta_l(\cdot,\cdot,\bm{M}) }{\partial \bm{\mu}_{l'} } \bigg|_{\bm{M}=\bm{M}^{\ast}} \\
        &= \sum_{\bm{r}\in \{0,1\}^p } \text{Pr}(\bm{\mathrm{r}}_1=\bm{r}) \cdot \bigg[ 
        0\cdot \bm{I}_p  \\
        &\quad \left. + 2\cdot \frac{ \mathds{1}(\bm{\mu}_{l'}^{\ast}\circ\bm{r} \neq \bm{\mu}_{l}^{\ast}\circ\bm{r}) }{ \|\bm{\mu}_{l'}^{\ast}\circ\bm{r} - \bm{\mu}_{l}^{\ast}\circ\bm{r}\|_2 }\cdot \int_{ \bm{x}\in\mathcal{S}_{ll'}^{\bm{r}}(\bm{M}^{\ast}) } f(\bm{x})\cdot (\bm{x}\circ\bm{r} - \bm{\mu}_l^{\ast}\circ\bm{r}) \cdot 
        (\bm{x}\circ\bm{r} - \bm{\mu}_{l'}^{\ast}\circ\bm{r} )^T
        \; dS(\bm{x})
        \right],
    \end{align*}
    which is continuous only at $\bm{M}^{\ast}$ satisfying: $\bm{\mu}_{l}^{\ast}\circ\bm{r} \neq \bm{\mu}_{l'}^{\ast}\circ\bm{r}$ for any $l\neq l'$ and any non-all-zero vector $\bm{r}\in \{0,1\}^p$. 
    We complete the proof. 
\end{proof}

\subsubsection{Proof of Theorem~\ref{theorem_quadratic_approximation}}
\begin{proof}
    According to Lemma~\ref{lemma_L_1st_derivative} and Lemma~\ref{lemma_L_2nd_derivative} and under Assumption~\ref{assumption_unique_minimizer} and \ref{assumption_M_star_distinct}, we have for any $\bm{\mathrm{V}}_n$ in a small neighborhood of $\bm{M}^{\ast}$, the $L(\bm{M})$ has a quadratic approximation as follows: 
    \begin{align*}
        L(\bm{\mathrm{V}}_n) &= \widetilde{\mathbb{P}}\phi(\cdot,\cdot,\bm{\mathrm{V}}_n) \\
        &=\widetilde{\mathbb{P}}\phi(\cdot,\cdot,\bm{M}^{\ast}) + \big\langle  \text{vec}(\bm{\mathrm{V}}_n - \bm{M}^{\ast} ) \;,\;  \text{vec}(\widetilde{\mathbb{P}}\Delta(\cdot,\cdot,\bm{M}^{\ast})) \big\rangle \\
        &\quad + \frac{1}{2} \text{vec}(\bm{\mathrm{V}}_n - \bm{M}^{\ast} )^T\cdot \bm{\mathrm{\Gamma}}(\bm{M}^{\ast})\cdot \text{vec}(\bm{\mathrm{V}}_n - \bm{M}^{\ast} ) + o_P(\|\text{vec}(\bm{\mathrm{V}}_n - \bm{M}^{\ast} )\|_2^2),
    \end{align*}
    where $\widetilde{\mathbb{P}}\Delta(\cdot,\cdot,\bm{M}^{\ast})=0\cdot\bm{I}_p$ because $\bm{M}^{\ast}$ is the minimizer of $L(\bm{M})$. 
    Moreover, let $\bm{\xi}_n=-\text{vec}(\mathbb{G}_n\Delta(\cdot,\cdot,\bm{M}^{\ast}))$, then according to Lemma~\ref{lemma_Gn_1st_derivative}, we have 
    \begin{align*}
        \mathbb{G}_n\phi(\cdot,\cdot,\bm{\mathrm{V}}_n) = \mathbb{G}_n\phi(\cdot,\cdot,\bm{M}^{\ast}) + \big\langle \text{vec}(\bm{\mathrm{V}}_n - \bm{M}^{\ast}) \;,\; -\bm{\xi}_n  \big\rangle + o_P(\|\text{vec}(\bm{\mathrm{V}}_n - \bm{M}^{\ast} )\|_2).
    \end{align*}
    Thereby, we have 
    \begin{align*}
        &\widehat{L}_n(\bm{\mathrm{V}}_n) \\
        &= L(\bm{\mathrm{V}}_n) + \frac{1}{\sqrt{n}} \mathbb{G}_n\phi(\cdot,\cdot,\bm{\mathrm{V}}_n)\\
        &= \left\{ \widetilde{\mathbb{P}}\phi(\cdot,\cdot,\bm{M}^{\ast}) + \frac{1}{\sqrt{n}} \mathbb{G}_n\phi(\cdot,\cdot,\bm{M}^{\ast})  \right\} + \frac{1}{\sqrt{n}}\big\langle \text{vec}(\bm{\mathrm{V}}_n - \bm{M}^{\ast}) \;,\; -\bm{\xi}_n  \big\rangle \\
        &\quad + \frac{1}{2} \text{vec}(\bm{\mathrm{V}}_n - \bm{M}^{\ast} )^T\cdot \bm{\mathrm{\Gamma}}(\bm{M}^{\ast})\cdot \text{vec}(\bm{\mathrm{V}}_n - \bm{M}^{\ast} ) \\
        &\quad + o_P(n^{-1/2}\|\text{vec}(\bm{\mathrm{V}}_n - \bm{M}^{\ast} )\|_2)
        + o_P(\|\text{vec}(\bm{\mathrm{V}}_n - \bm{M}^{\ast} )\|_2^2)\\
        &= \widehat{L}_n(\bm{M}^{\ast}) - n^{-1/2} \bm{\xi}_n^T\cdot \text{vec}(\bm{\mathrm{V}}_n - \bm{M}^{\ast}) + \frac{1}{2} \text{vec}(\bm{\mathrm{V}}_n - \bm{M}^{\ast} )^T\cdot \bm{\mathrm{\Gamma}}(\bm{M}^{\ast})\cdot \text{vec}(\bm{\mathrm{V}}_n - \bm{M}^{\ast} ) \\
        &\quad + o_P(n^{-1/2}\|\text{vec}(\bm{\mathrm{V}}_n - \bm{M}^{\ast} )\|_2)
        + o_P(\|\text{vec}(\bm{\mathrm{V}}_n - \bm{M}^{\ast} )\|_2^2).
    \end{align*}
    
    For $\bm{\xi}_n=-\text{vec}(\mathbb{G}_n\Delta(\cdot,\cdot,\bm{M}^{\ast}))$, which is a random vector in $\mathbb{R}^{kp}$, according to the Central Limitation Theory, we have $\bm{\xi}_n$ has an asymptotical normal distribution with the mean vector being $\bm{0}_{kp}$ because $\widetilde{\mathbb{P}}\Delta (\cdot,\cdot,\bm{M}^{\ast})=0\cdot \bm{I}_p$. 
    Moreover, the asymptotic variance matrix $\bm{\Xi}\in \mathbb{R}^{kp\times kp}$ is given by 
    \begin{align*}
       \widetilde{\mathbb{P}}\text{vec}(\Delta(\cdot,\cdot,\bm{M}^{\ast}))\text{vec}(\Delta(\cdot,\cdot,\bm{M}^{\ast}))^T,
    \end{align*}
    which consists of $k^2$ blocks and each block is a matrix in $\mathbb{R}^{p\times p}$. Since the $l$-th column of $\Delta(\cdot,\cdot,\bm{M})$ is only related to $(\bm{x},\bm{r})$ satisfying $\ell(\bm{x},\bm{r},\bm{M}^{\ast})=l$, then for any $l'\neq l$, the $(l,l')$-th block of $\bm{\Xi}$ i.e., $\int \Delta_l(\bm{x},\bm{r},\bm{M}^{\ast}) \Delta_{l'}(\bm{x},\bm{r},\bm{M}^{\ast})^T\; d\widetilde{\mathbb{P}}(\bm{x},\bm{r})$ is a zero matrix. 
    It implies that $\bm{\Xi}$ is a block diagonal matrix and the $l$-th block is given by 
    \begin{align*}
        \bm{\Xi}_l 
        &= \int \Delta_l(\bm{x},\bm{r},\bm{M}^{\ast}) \Delta_l(\bm{x},\bm{r},\bm{M}^{\ast})^T \; d\widetilde{\mathbb{P}}(\bm{x},\bm{r}) \\
        &= 4 \int \mathds{1}(\ell(\bm{x},\bm{r},\bm{M}^{\ast})=l)\cdot (\bm{x}\circ\bm{r} - \bm{\mu}_l^{\ast}\circ\bm{r})\cdot (\bm{x}\circ\bm{r} - \bm{\mu}_l^{\ast}\circ\bm{r})^T \; d\widetilde{\mathbb{P}}(\bm{x},\bm{r}), 
    \end{align*}
    which completes the proof. 
\end{proof}

\subsubsection{Proof of Corollary~\ref{corollary_sqrt_n_rate_asymptotic_normality}}
\begin{proof}
    For simplification of notation, we write $\bm{\mathrm{\Gamma}}=\bm{\mathrm{\Gamma}}(\bm{M}^{\ast})$ through this proof. 
    
    Since $\| \text{vec}(\widehat{\bm{\mathrm{M}}}_n - \bm{M}^{\ast})\|_2 = o_P(1)$ and $\widehat{L}_n(\widehat{\bm{\mathrm{M}}}_n)\leq \widehat{L}_n(\bm{M}^{\ast})$, then using Theorem~\ref{theorem_quadratic_approximation} leads to
    \begin{align*}
        n^{-1/2} \bm{\xi}_n^T\text{vec}(\widehat{\bm{\mathrm{M}}}_n - \bm{M}^{\ast}) &\geq \frac{1}{2} \text{vec}(\widehat{\bm{\mathrm{M}}}_n - \bm{M}^{\ast})^T\cdot \bm{\mathrm{\Gamma}}\cdot \text{vec}(\widehat{\bm{\mathrm{M}}}_n - \bm{M}^{\ast}) \\
        &\quad + o_P(n^{-1/2}\|\text{vec}(\widehat{\bm{\mathrm{M}}}_n - \bm{M}^{\ast})\|_2) + o_P(\|\text{vec}(\widehat{\bm{\mathrm{M}}}_n - \bm{M}^{\ast})\|_2^2).
    \end{align*}
    Because 
    \begin{align*}
        & n^{-1/2} \bm{\xi}_n^T\text{vec}(\widehat{\bm{\mathrm{M}}}_n - \bm{M}^{\ast}) \leq n^{-1/2} \|\bm{\xi}_n\|_2 \cdot \|\text{vec}(\widehat{\bm{\mathrm{M}}}_n - \bm{M}^{\ast})\|_2\\
        \text{and } & \frac{1}{2} \text{vec}(\widehat{\bm{\mathrm{M}}}_n - \bm{M}^{\ast})^T\cdot \bm{\mathrm{\Gamma}}\cdot \text{vec}(\widehat{\bm{\mathrm{M}}}_n - \bm{M}^{\ast}) \geq \frac{\lambda_{\min}}{2} \|\text{vec}(\widehat{\bm{\mathrm{M}}}_n - \bm{M}^{\ast})\|_2^2,
    \end{align*}
    where $\lambda_{\min}$ is the minimum singular value of $\bm{\mathrm{\Gamma}}$ and $\lambda_{\min}>0$ under Assumption~\ref{assumption_positive_definite_Gamma}, then we have 
    \begin{align*}
        & n^{-1/2} \|\bm{\xi}_n\|_2 \cdot \|\text{vec}(\widehat{\bm{\mathrm{M}}}_n - \bm{M}^{\ast})\|_2 \\
        & \geq \frac{\lambda_{\min}}{2} \|\text{vec}(\widehat{\bm{\mathrm{M}}}_n - \bm{M}^{\ast})\|_2^2 + o_P(n^{-1/2}\|\text{vec}(\widehat{\bm{\mathrm{M}}}_n - \bm{M}^{\ast})\|_2) + o_P(\|\text{vec}(\widehat{\bm{\mathrm{M}}}_n - \bm{M}^{\ast})\|_2^2),
    \end{align*}
    that is, 
    \begin{align*}
        n^{-1/2} \|\bm{\xi}_n\|_2 \geq \frac{\lambda_{\min}}{2} \|\text{vec}(\widehat{\bm{\mathrm{M}}}_n - \bm{M}^{\ast})\|_2 + o_P(n^{-1/2}) + o_P(\|\text{vec}(\widehat{\bm{\mathrm{M}}}_n - \bm{M}^{\ast})\|_2).
    \end{align*}
    Since $\bm{\xi}_n\xrightarrow{d}\mathcal{N}(\bm{0}_{kp},\bm{\mathrm{\Xi}})$, then we have $\bm{\xi}_n=O_P(1)$, which leads to $\|\text{vec}(\widehat{\bm{\mathrm{M}}}_n - \bm{M}^{\ast})\|_2=O_P(n^{-1/2})$. 
    
    Moreover, let $\bm{a}_n= \sqrt{n} \text{vec}(\widehat{\bm{\mathrm{M}}}_n - \bm{M}^{\ast})$, then applying Theorem~\ref{theorem_quadratic_approximation} to $\widehat{\bm{\mathrm{M}}}_n$ and under Assumption~\ref{assumption_positive_definite_Gamma} leads to
    \begin{align*}
        \widehat{L}_n(\widehat{\bm{\mathrm{M}}}_n)
        &= \widehat{L}_n(\bm{M}^{\ast}) - \frac{1}{n} \bm{\xi}_n^T \bm{a}_n + \frac{1}{2n} \bm{a}_n^T \bm{\Gamma} \bm{a}_n + o_P(n^{-1}) \\
        &= \widehat{L}_n(\bm{M}^{\ast}) - \frac{1}{2n} \bm{\xi}_n^T \bm{\Gamma}^{-1} \bm{\xi}_n + \frac{1}{2n} \| \bm{\Gamma}^{1/2} \bm{a}_n - \bm{\Gamma}^{-1/2} \bm{\xi}_n \|_2^2 + o_P(n^{-1}).
    \end{align*}
    Applying Theorem~\ref{theorem_quadratic_approximation} to the sequence $\{ \bm{M}^{\ast} + n^{-1/2} \bm{\Gamma}\bm{\xi}_n \}$ leads to 
    \begin{align*}
        &\widehat{L}_n(\bm{M}^{\ast} + n^{-1/2} \bm{\Gamma}\bm{\xi}_n ) \\
        &= \widehat{L}_n(\bm{M}^{\ast}) - n^{-1/2}\bm{\xi}_n^T \left(n^{-1/2}\bm{\Gamma}^{-1}\bm{\xi}_n\right) + \frac{1}{2} \left(n^{-1/2}\bm{\Gamma}^{-1}\bm{\xi}_n\right)^T \cdot \bm{\Gamma}\cdot \left(n^{-1/2}\bm{\Gamma}^{-1}\bm{\xi}_n\right) + o_P(n^{-1})\\
        &= \widehat{L}_n(\bm{M}^{\ast}) - \frac{1}{2n} \bm{\xi}_n^T\bm{\Gamma}^{-1}\bm{\xi}_n + o_P(n^{-1}).
    \end{align*}
    It follows that 
    \begin{align*}
        \widehat{L}_n(\widehat{\bm{\mathrm{M}}}_n) = \widehat{L}_n(\bm{M}^{\ast} + n^{-1/2} \bm{\Gamma}\bm{\xi}_n ) + \frac{1}{2n} \| \bm{\Gamma}^{1/2} \bm{a}_n - \bm{\Gamma}^{-1/2} \bm{\xi}_n \|_2^2 + o_P(n^{-1}).
    \end{align*}
    Because $\widehat{L}_n(\widehat{\bm{\mathrm{M}}}_n)\leq \widehat{L}_n(\bm{M}^{\ast} + n^{-1/2} \bm{\Gamma}\bm{\xi}_n ) $, then we obtain
    \begin{align*}
        \frac{1}{2n} \| \bm{\Gamma}^{1/2} \bm{a}_n - \bm{\Gamma}^{-1/2} \bm{\xi}_n \|_2^2 = o_P(n^{-1}),
    \end{align*}
    which implies $\bm{a}_n = \bm{\Gamma}^{-1}\bm{\xi}_n +o_P(1)$. Since $\bm{\xi}_n\xrightarrow{d}\mathcal{N}(\bm{0}_{kp},\bm{\mathrm{\Xi}})$, we have $\bm{a}_n\xrightarrow{d}\mathcal{N}(\bm{0}_{kp},\bm{\Gamma}^{-1}\bm{\mathrm{\Xi}}\bm{\Gamma}^{-1})$, which completes the proof. 
\end{proof}

\subsection{Proof of Section~\ref{sec_converge_to_truth}}

\subsubsection{Proof of Lemma~\ref{lemma_converge_to_truth_condition_centers_bound}}
\begin{proof}
    Consider a fixed $n$ large enough to satisfy the condition in Lemma~\ref{lemma_converge_to_truth_condition_centers_bound}. 
    For any $i=1,\dots,n$, we define the assigned true cluster label of $\bm{\mathrm{x}}_i$ to be
    \begin{align*}
        \mathrm{z}_i^{\ast\ast}=\mathop{\arg\min}_{l=1,\dots,k} \| \bm{\mathrm{x}}_i - \bm{\mu}_l^{\ast\ast} \|_2,
    \end{align*}
    where if the minimizer is multiple, then take the smaller one. 
    Moreover, we define
    \begin{align*}
        \widetilde{D}(\widehat{\bm{\mathrm{M}}}_n,\bm{M}^{\ast\ast})= \left( \frac{1}{n} \sum_{i=1}^{n} \min_{t=1,\dots,k} \| (\bm{\mu}_{\mathrm{z}_i^{\ast\ast}}^{\ast\ast} - \hat{\bm{\mu}}_t)\circ \bm{\mathrm{r}}_i \|_2^2 \right)^{1/2},
    \end{align*}
    and we claim that it has the following bounds: 
    \begin{itemize}
        \item[(1)] $\widetilde{D}(\widehat{\bm{\mathrm{M}}}_n,\bm{M}^{\ast\ast})\geq \min\left\{ \sqrt{\frac{\mathrm{n}_{\min}^{\textnormal{comp}}}{n}} D(\widehat{\bm{\mathrm{M}}}_n,\bm{M}^{\ast\ast}),\; \sqrt{\frac{\mathrm{n}_{\min}^{\textnormal{feature}}}{n}} \frac{\rho^{\ast\ast}}{2} \right\}$;
        \item[(2)] $\widetilde{D}(\widehat{\bm{\mathrm{M}}}_n,\bm{M}^{\ast\ast})\leq 2\mathrm{b}_n$.
    \end{itemize}
    Thus, it suffices to prove the two claims, respectively. 

    To prove the claim (1), we consider two cases:
    \begin{itemize}
        \item[(a)] There exists a permutation $\tilde{\pi}$ such that for any $l=1,\dots,k$ and any $\bm{r}\neq \bm{0}_p$, 
    \begin{align*}
        \min_{t=1,\dots,k} \| ( \bm{\mu}_l^{\ast\ast} - \hat{\bm{\mu}}_{t} )\circ \bm{r} \|_2 
        = \| ( \bm{\mu}_l^{\ast\ast} - \hat{\bm{\mu}}_{\tilde{\pi}(l)} )\circ \bm{r} \|_2,
    \end{align*}
    which is equivalent to: for any $l=1,\dots,k$ and any $j\in\{1,\dots,p\}$,
    \begin{align*}
        \min\limits_{t=1,\dots,k} |\mu_{lj}^{\ast\ast} - \hat{\mu}_{tj}|=|\mu_{lj}^{\ast\ast} - \hat{\mu}_{\tilde{\pi}(l),j}|.
    \end{align*}
        \item[(b)] Otherwise, for any permutation $\pi$, there exists $l_1\in\{1,\dots,k\}$ and $\bm{r}_0\neq \bm{0}_p$ such that 
        \begin{align*}
            \min_{t=1,\dots,k} \| ( \bm{\mu}_{l_1}^{\ast\ast} - \hat{\bm{\mu}}_{t} )\circ \bm{r}_0 \|_2 
            <\| ( \bm{\mu}_{l_1}^{\ast\ast} - \hat{\bm{\mu}}_{\tilde{\pi}(l)} )\circ \bm{r}_0 \|_2,
        \end{align*}
        which is equivalent to: there exist $l_1\neq l_2$ and $j_0 \in \{1,\dots,p\}$ and $t_0\in\{1,\dots,k\}$ such that 
        \begin{align*}
            \min\limits_{t=1,\dots,k} |\mu_{l_1,j_0}^{\ast\ast} - \hat{\mu}_{t,j_0}|
            =|\mu_{l_1,j_0}^{\ast\ast} - \hat{\mu}_{t_0,j_0}|
            \quad \textnormal{and} \quad
             \min\limits_{t=1,\dots,k} |\mu_{l_2,j_0}^{\ast\ast} - \hat{\mu}_{t,j_0}|
            =|\mu_{l_2,j_0}^{\ast\ast} - \hat{\mu}_{t_0,j_0}|.
        \end{align*}
    \end{itemize}

    In the case (a), we have 
    \begin{align*}
        &\widetilde{D}(\widehat{\bm{\mathrm{M}}}_n,\bm{M}^{\ast\ast}) \\
        &=\left\{ \frac{1}{n}\sum_{\bm{r}\in\{0,1\}^p} \sum_{l=1}^{k} \sum_{\substack{i:\bm{\mathrm{r}}_i=\bm{r},\\ \mathrm{z}_i^{\ast\ast}=l}} \| ( \bm{\mu}_{l}^{\ast\ast} - \hat{\bm{\mu}}_{\tilde{\pi}(l)} )\circ \bm{r} \|_2^2   \right\}^{1/2} \\
        &\geq \left\{ \frac{1}{n} \sum_{l=1}^{k} \sum_{\substack{i:\bm{\mathrm{r}}_i=\bm{1}_p,\\ \mathrm{z}_i^{\ast\ast}=l}} \| ( \bm{\mu}_{l}^{\ast\ast} - \hat{\bm{\mu}}_{\tilde{\pi}(l)} )\circ \bm{1}_p \|_2^2   \right\}^{1/2} 
        = \left\{ \sum_{l=1}^{k} \frac{\#\left\{i\;|\; \bm{\mathrm{r}}_i=\bm{1}_p,\mathrm{z}_i^{\ast\ast}=l  \right\}  }{n} \cdot \| \bm{\mu}_{l}^{\ast\ast} - \hat{\bm{\mu}}_{\tilde{\pi}(l)}  \|_2^2 \right\}^{1/2} \\
        &\geq \left\{ \frac{\mathrm{n}_{\min}^{\textnormal{comp}} }{n} \cdot \sum_{l=1}^{k} \| \bm{\mu}_{l}^{\ast\ast} - \hat{\bm{\mu}}_{\tilde{\pi}(l)}  \|_2^2 \right\}^{1/2} \\
        &\geq \left\{ \frac{\mathrm{n}_{\min}^{\textnormal{comp}} }{n} \cdot \max_{l=1,\dots,k} \| \bm{\mu}_{l}^{\ast\ast} - \hat{\bm{\mu}}_{\tilde{\pi}(l)}  \|_2^2 \right\}^{1/2} 
        = \sqrt{\frac{\mathrm{n}_{\min}^{\textnormal{comp}} }{n}} \max_{l=1,\dots,k} \| \bm{\mu}_{l}^{\ast\ast} - \hat{\bm{\mu}}_{\tilde{\pi}(l)}  \|_2 \\
        &\geq \sqrt{\frac{\mathrm{n}_{\min}^{\textnormal{comp}} }{n}} \min_{\pi} \max_{l=1,\dots,k} \| \bm{\mu}_{l}^{\ast\ast} - \hat{\bm{\mu}}_{\pi(l)}  \|_2 
        = \sqrt{\frac{\mathrm{n}_{\min}^{\textnormal{comp}} }{n}} D(\widehat{\bm{\mathrm{M}}}_n,\bm{M}^{\ast\ast}),
    \end{align*}
    where $\mathrm{n}_{\min}^{\textnormal{comp}}=\min\limits_{l=1,\dots,k} \sum_{i=1}^{n} \mathds{1}\left(\bm{\mathrm{r}}_i=\bm{1}_p,\mathrm{z}_i^{\ast\ast}=l  \right)$.

    In the case (b), because 
    \begin{align*}
       |\mu_{l_1,j_0}^{\ast\ast} - \mu_{l_2,j_0}^{\ast\ast} | \leq |\mu_{l_1,j_0}^{\ast\ast} - \hat{\mu}_{t_0,j_0}| + |\mu_{l_2,j_0}^{\ast\ast} - \hat{\mu}_{t_0,j_0}|,
    \end{align*}
    without loss of generality, we can suppose 
    \begin{align}
        |\mu_{l_1,j_0}^{\ast\ast} - \hat{\mu}_{t_0,j_0}| \geq \frac{1}{2} |\mu_{l_1,j_0}^{\ast\ast} - \mu_{l_2,j_0}^{\ast\ast} |.
    \end{align}
    Then we have 
    \begin{align*}
        &\widetilde{D}(\widehat{\bm{\mathrm{M}}}_n,\bm{M}^{\ast\ast}) \\
        &=\left\{ \frac{1}{n} \sum_{l=1}^{k} \sum_{i:\mathrm{z}_i^{\ast\ast}=l} \min_{t=1,\dots,k} \| (\bm{\mu}_{l}^{\ast\ast} - \hat{\bm{\mu}}_{t} )\circ\bm{\mathrm{r}}_i \|_2^2 \right\}^{1/2}\\
        &\geq \left\{ \frac{1}{n} \sum_{i:\mathrm{z}_i^{\ast\ast}=l_1} \min_{t=1,\dots,k} \| (\bm{\mu}_{l_1}^{\ast\ast} - \hat{\bm{\mu}}_{t} )\circ\bm{\mathrm{r}}_i \|_2^2  \right\}^{1/2} 
        = \left\{ \frac{1}{n} \sum_{i:\mathrm{z}_i^{\ast\ast}=l_1} \min_{t=1,\dots,k} \sum_{j=1}^{p} \mathrm{r}_{ij}\cdot (\mu_{l_1,j}^{\ast\ast} - \hat{\mu}_{tj} )^2  \right\}^{1/2}\\
        &\geq \left\{ \frac{1}{n} \sum_{i:\mathrm{z}_i^{\ast\ast}=l_1}  \min_{t=1,\dots,k} \mathrm{r}_{i,j_0} \cdot ( \mu_{l_1,j_0}^{\ast\ast} - \hat{\mu}_{t,j_0} )^2 \right\}^{1/2} \\
        &\geq \left\{ \frac{1}{n} \sum_{\substack{i:\mathrm{z}_i^{\ast\ast}=l_1\\ \mathrm{r}_{i,j_0}=1 }}  \min_{t=1,\dots,k} \mathrm{r}_{i,j_0} \cdot ( \mu_{l_1,j_0}^{\ast\ast} - \hat{\mu}_{t,j_0} )^2 \right\}^{1/2}
        = \left\{ \frac{1}{n} \sum_{\substack{i:\mathrm{z}_i^{\ast\ast}=l_1\\ \mathrm{r}_{i,j_0}=1 }}  \min_{t=1,\dots,k} ( \mu_{l_1,j_0}^{\ast\ast} - \hat{\mu}_{t,j_0} )^2 \right\}^{1/2}\\
        &= \left\{ \frac{1}{n} \sum_{\substack{i:\mathrm{z}_i^{\ast\ast}=l_1\\ \mathrm{r}_{i,j_0}=1 }}   ( \mu_{l_1,j_0}^{\ast\ast} - \hat{\mu}_{t_0,j_0} )^2 \right\}^{1/2} \\
        &\geq \left\{ \frac{1}{n} \sum_{\substack{i:\mathrm{z}_i^{\ast\ast}=l_1\\ \mathrm{r}_{i,j_0}=1 }}   \frac{1}{4} (\mu_{l_1,j_0}^{\ast\ast} - \mu_{l_2,j_0}^{\ast\ast} )^2 \right\}^{1/2} 
        =\sqrt{\frac{\#\{i\;|\; \mathrm{z}_i^{\ast\ast}=l_1, \mathrm{r}_{i,j_0}=1 \} }{n}} \cdot \frac{ |\mu_{l_1,j_0}^{\ast\ast} - \mu_{l_2,j_0}^{\ast\ast} | }{2}\\
        &\geq \frac{1}{2}\sqrt{\frac{ \mathrm{n}_{\min}^{\textnormal{feature}} }{n} } \cdot \min_{l\neq l'}\min_{j=1,\dots,k} |\mu_{l,j}^{\ast\ast} - \mu_{l',j}^{\ast\ast} |
        = \sqrt{\frac{ \mathrm{n}_{\min}^{\textnormal{feature}} }{n} } \cdot \frac{\rho^{\ast\ast}}{2},
    \end{align*}
    where $\mathrm{n}_{\min}^{\textnormal{feature}} = \min\limits_{l=1,\dots,k}\min\limits_{j=1,\dots,p} \sum_{i=1}^{n} \mathds{1}\left( \mathrm{z}_i^{\ast\ast}=l, \mathrm{r}_{ij}=1 \right)  $. 
    
    Combining the two cases (a) and (b), we obtain a lower bound of $\widetilde{D}(\widehat{\bm{\mathrm{M}}}_n,\bm{M}^{\ast\ast})$, which completes proving claim (1). 

    To prove the claim (2), we write $\hat{\mathrm{z}}_i$ for the estimated assigned cluster label of $\bm{\mathrm{x}}_i$ based on $\widehat{\bm{\mathrm{M}}}_n$, that is, 
    \begin{align*}
        \hat{\mathrm{z}}_i = \mathop{\arg\min}_{t=1,\dots,k} \| (\bm{\mathrm{x}}_i - \hat{\bm{\mu}}_t)\circ \bm{\mathrm{r}}_i \|_2,
    \end{align*}
    where if the minimizer is multiple, then take the smaller one. 
    We note that
    \begin{align*}
        \widetilde{D}(\widehat{\bm{\mathrm{M}}}_n,\bm{M}^{\ast\ast})
        &= \left\{ \frac{1}{n}\sum_{i=1}^{n} \min_{t=1,\dots,k} \| (\bm{\mu}_{\mathrm{z}_i^{\ast\ast}}^{\ast\ast} - \hat{\bm{\mu}}_t)\circ \bm{\mathrm{r}}_i \|_2^2 \right\}^{1/2}
        \leq \left\{ \frac{1}{n}\sum_{i=1}^{n} \| (\bm{\mu}_{\mathrm{z}_i^{\ast\ast}}^{\ast\ast} - \hat{\bm{\mu}}_{\hat{\mathrm{z}}_i} )\circ \bm{\mathrm{r}}_i \|_2^2 \right\}^{1/2}\\
        &\leq \underbrace{ \left\{ \frac{1}{n}\sum_{i=1}^{n} \| (\bm{\mathrm{x}}_i -  \bm{\mu}_{\mathrm{z}_i^{\ast\ast}}^{\ast\ast} )\circ \bm{\mathrm{r}}_i\|_2^2 \right\}^{1/2} }_{\text{(I)}} 
        + \underbrace{ \left\{ \frac{1}{n}\sum_{i=1}^{n} \| (\bm{\mathrm{x}}_i -  \hat{\bm{\mu}}_{\hat{\mathrm{z}}_i} )\circ \bm{\mathrm{r}}_i \|_2^2 \right\}^{1/2} }_{\text{(II)}}. \\
\end{align*}
Because
\begin{align*}
    \text{(I)} 
    &\leq \left\{ \frac{1}{n}\sum_{i=1}^{n} \| \bm{\mathrm{x}}_i -  \bm{\mu}_{\mathrm{z}_i^{\ast\ast}}^{\ast\ast} \|_2^2 \right\}^{1/2}  
    = \left\{ \frac{1}{n}\sum_{i=1}^{n} \min_{l=1,\dots,k}\| \bm{\mathrm{x}}_i -  \bm{\mu}_{l}^{\ast\ast} \|_2^2 \right\}^{1/2}
    \leq \mathrm{b}_n 
\end{align*}
and 
\begin{align*}
    \text{(II)}
    &= \left\{ \frac{1}{n}\sum_{i=1}^{n} \min_{t=1,\dots,k} \| (\bm{\mathrm{x}}_i -  \hat{\bm{\mu}}_{t} )\circ \bm{\mathrm{r}}_i \|_2^2 \right\}^{1/2}
    = \left\{ \widehat{L}_n(\widehat{\bm{\mathrm{M}}}_n) \right\}^{1/2} \\
    &\leq \left\{ \widehat{L}_n(\bm{M}^{\ast\ast}) \right\}^{1/2}
    = \left\{ \frac{1}{n}\sum_{i=1}^{n} \min_{l=1,\dots,k} \| (\bm{\mathrm{x}}_i -  \bm{\mu}_{l}^{\ast\ast} )\circ \bm{\mathrm{r}}_i \|_2^2 \right\}^{1/2} \\
    &\leq \left\{ \frac{1}{n}\sum_{i=1}^{n} \min_{l=1,\dots,k} \| \bm{\mathrm{x}}_i -  \bm{\mu}_{l}^{\ast\ast} \|_2^2 \right\}^{1/2}
    \leq \mathrm{b}_n,
\end{align*}
we obtain an upper bound of $\widetilde{D}(\widehat{\bm{\mathrm{M}}}_n,\bm{M}^{\ast\ast})$, that is, 
\begin{align*}
    \widetilde{D}(\widehat{\bm{\mathrm{M}}}_n,\bm{M}^{\ast\ast})
    \leq 2\mathrm{b}_n,
\end{align*}
which completes the proof of claim (2). 

Finally by combining the two claims (1) and (2), we complete the proof. 
\end{proof}

\subsubsection{Proof of Lemma~\ref{lemma_converge_to_truth_perfect_label}}
\begin{proof}
    For a fixed $n$ large enough such that $\rho^{\ast\ast}>  4\mathrm{b}_n \sqrt{n/\mathrm{n}_{\min}^{\textnormal{feature}}}$, according to Lemma~\ref{lemma_converge_to_truth_condition_centers_bound}, we have $D(\widehat{\bm{\mathrm{M}}}_n,\bm{M}^{\ast\ast}) \leq 2\mathrm{b}_n \sqrt{n/\mathrm{n}_{\min}^{\textnormal{comp}}} $. 
    Then there exists a permutation $\tilde{\pi}$ such that for any $l=1,\dots,k$, 
    \begin{align*}
        \| \bm{\mu}_{l}^{\ast\ast} - \hat{\bm{\mu}}_{\tilde{\pi}(l)} \|_2  
        \leq  2\mathrm{b}_n \sqrt{\frac{n}{\mathrm{n}_{\min}^{\textnormal{comp}}}}.
    \end{align*}
    Thus, it suffices to prove that if $\rho^{\ast\ast} > 4\mathrm{b}_n\sqrt{n/\mathrm{n}_{\min}^{\textnormal{comp}}} + 2\mathrm{b}_n$, then $\hat{\mathrm{z}}_i=\tilde{\pi}(\mathrm{z}_i^{\ast\ast})$ holds for all $i=1,\dots,n$ with unique $\hat{\mathrm{z}}_i$ and $\mathrm{z}_i^{\ast\ast}$. 
    We complete the proof by contradiction. 

    If there exists some $i\in \{1,\dots,n\}$ with unique $\hat{\mathrm{z}}_i$ and $\mathrm{z}_i^{\ast\ast}$ satisfying $\hat{\mathrm{z}}_i\neq \tilde{\pi}(\mathrm{z}_i^{\ast\ast})$, then it implies that 
    $\tilde{\pi}^{-1}(\hat{\mathrm{z}}_i)\neq \mathrm{z}_i^{\ast\ast}$, where $\tilde{\pi}^{-1}$ is the inverse of $\pi$. 
    Moreover, by the definition of $\hat{\mathrm{z}}_i$, we have 
    \begin{align*}
        \underbrace{\| ( \bm{\mathrm{x}}_i - \hat{\bm{\mu}}_{\hat{\mathrm{z}}_i} )\circ \bm{\mathrm{r}}_i \|_2}_{\text{(Left)}}
        = \min_{t=1,\dots,k} \| ( \bm{\mathrm{x}}_i - \hat{\bm{\mu}}_{t} )\circ \bm{\mathrm{r}}_i \|_2
        \leq 
        \underbrace{\| ( \bm{\mathrm{x}}_i - \hat{\bm{\mu}}_{\tilde{\pi}(\mathrm{z}_i^{\ast\ast})} )\circ \bm{\mathrm{r}}_i \|_2}_{\text{(Right)}}.
    \end{align*}
    For the right hand, it is upper bounded by  
    \begin{align*}
        \text{(Right)}&=\| ( \bm{\mathrm{x}}_i - \hat{\bm{\mu}}_{\tilde{\pi}(\mathrm{z}_i^{\ast\ast})} )\circ \bm{\mathrm{r}}_i \|_2
        \leq \| ( \bm{\mathrm{x}}_i - \bm{\mu}_{\mathrm{z}_i^{\ast\ast}}^{\ast\ast} )\circ \bm{\mathrm{r}}_i \|_2 
        + \| ( \bm{\mu}_{\mathrm{z}_i^{\ast\ast}}^{\ast\ast} - \hat{\bm{\mu}}_{\tilde{\pi}(\mathrm{z}_i^{\ast\ast})} )\circ \bm{\mathrm{r}}_i \|_2 \\
        &\leq \| \bm{\mathrm{x}}_i - \bm{\mu}_{\mathrm{z}_i^{\ast\ast}}^{\ast\ast}  \|_2 + \| \bm{\mu}_{\mathrm{z}_i^{\ast\ast}}^{\ast\ast} - \hat{\bm{\mu}}_{\tilde{\pi}(\mathrm{z}_i^{\ast\ast})} \|_2 \\
        &\leq \mathrm{b}_n + 2\mathrm{b}_n\sqrt{\frac{n}{\mathrm{n}_{\min}^{\textnormal{comp}}}}.
    \end{align*}
    For the left hand, it is lower bounded by 
    \begin{align*}
        \text{(Left)}&=\| ( \bm{\mathrm{x}}_i - \hat{\bm{\mu}}_{\hat{\mathrm{z}}_i} )\circ \bm{\mathrm{r}}_i \|_2
        \geq \| ( \hat{\bm{\mu}}_{\hat{\mathrm{z}}_i} - \bm{\mu}_{\mathrm{z}_i^{\ast\ast}}^{\ast\ast} )\circ \bm{\mathrm{r}}_i \|_2 
        - \| ( \bm{\mathrm{x}}_i - \bm{\mu}_{\mathrm{z}_i^{\ast\ast}}^{\ast\ast} )\circ \bm{\mathrm{r}}_i \|_2 \\
        &\geq \| (\bm{\mu}_{\mathrm{z}_i^{\ast\ast}}^{\ast\ast} - \bm{\mu}_{\tilde{\pi}^{-1}(\hat{\mathrm{z}}_i)}^{\ast\ast} )\circ \bm{\mathrm{r}}_i \|_2 
        -  \| ( \hat{\bm{\mu}}_{\hat{\mathrm{z}}_i} - \bm{\mu}_{\tilde{\pi}^{-1}(\hat{\mathrm{z}}_i)}^{\ast\ast} )\circ \bm{\mathrm{r}}_i \|_2 
        - \| ( \bm{\mathrm{x}}_i - \bm{\mu}_{\mathrm{z}_i^{\ast\ast}}^{\ast\ast} )\circ \bm{\mathrm{r}}_i \|_2 \\
        &\geq \min_{j:\mathrm{r}_{ij}=1} \left| \mu_{\mathrm{z}_{i}^{\ast\ast},j}^{\ast\ast} - \mu_{\tilde{\pi}^{-1}(\hat{\mathrm{z}}_i),j}^{\ast\ast} \right| 
        - \| \hat{\bm{\mu}}_{\hat{\mathrm{z}}_i} - \bm{\mu}_{\tilde{\pi}^{-1}(\hat{\mathrm{z}}_i)}^{\ast\ast} \|_2 
        - \| \bm{\mathrm{x}}_i - \bm{\mu}_{\mathrm{z}_i^{\ast\ast}}^{\ast\ast} \|_2\\
        &\geq \rho^{\ast\ast} - 2\mathrm{b}_n \sqrt{\frac{n}{\mathrm{n}_{\min}^{\textnormal{comp}}}} - \mathrm{b}_n.
    \end{align*}
    It follows that 
    \begin{align*}
        \rho^{\ast\ast} - 2\mathrm{b}_n \sqrt{\frac{n}{\mathrm{n}_{\min}^{\textnormal{comp}}}} - \mathrm{b}_n
        \leq \mathrm{b}_n + 2\mathrm{b}_n \sqrt{\frac{n}{\mathrm{n}_{\min}^{\textnormal{comp}}}},
    \end{align*}
    which leads to a contradiction. 
    Consequently, we complete the proof. 
\end{proof}

\subsubsection{Proof of Theorem~\ref{theorem_converge_to_truth}}
\begin{proof}
    On one hand, according to the definitions of $\widehat{\bm{\mathrm{M}}}_n$ and $\hat{\mathrm{z}}_i$, we can write for any $t=1,\dots,k$, $j=1,\dots,p$,
    \begin{align*}
        \hat{\mu}_{tj}= \frac{\sum_{i=1}^{n} \mathds{1}( \hat{\mathrm{z}}_i=t ) \mathrm{r}_{ij} \mathrm{x}_{ij} }{ \sum_{i=1}^{n} \mathds{1}( \hat{\mathrm{z}}_i=t ) \mathrm{r}_{ij} }. 
    \end{align*}
    Under MCAR mechanism (Assumption~\ref{assumption_mcar}), since $\mathrm{r}_{ij}$'s are independent with $\mathrm{x}_{ij}$'s, then we have 
    \begin{align*}
        \lim_{n\rightarrow\infty} \text{Pr}\left( \left| \hat{\mu}_{tj} - \frac{\sum_{i=1}^{n} \mathds{1}( \hat{\mathrm{z}}_i=t ) \mathrm{x}_{ij} }{ \sum_{i=1}^{n} \mathds{1}( \hat{\mathrm{z}}_i=t ) } \right| \leq \frac{\epsilon}{2} \right)  = 1,\; \forall \epsilon>0. 
    \end{align*}
    On the other hand, according to the definition of $\mathrm{z}_i^{\ast\ast}$ and the Law of Large Number, we have for any $l=1,\dots,k$,
    \begin{align*}
        \lim_{n\rightarrow\infty} \text{Pr}\left( 
        \left| 
        \frac{\sum_{i=1}^{n} \mathds{1}( \mathrm{z}_i^{\ast\ast}=l )  \mathrm{x}_{ij} }{ \sum_{i=1}^{n} \mathds{1}( \mathrm{z}_i^{\ast\ast}=l ) } 
        - \mu_{lj}^{\ast\ast} \right| \leq \frac{\epsilon}{2}
        \right)=1, \; \forall \epsilon>0.
    \end{align*}
    Moreover, for any fixed $n$ large enough, if $\rho^{\ast\ast}/\beta^{\ast\ast}> 2\max\left\{ 2\sqrt{n/\mathrm{n}_{\min}^{\textnormal{feature}}},\;  2\sqrt{n/\mathrm{n}_{\min}^{\textnormal{comp}}} + 1 \right\}$ holds, it implies that $\rho^{\ast\ast}> \max\left\{ 4\mathrm{b}_n \sqrt{n/\mathrm{n}_{\min}^{\textnormal{feature}}},\;  4\mathrm{b}_n\sqrt{n/\mathrm{n}_{\min}^{\textnormal{comp}}} + 2\mathrm{b}_n \right\}$ holds under Assumption~\ref{assumption_max_within_cluster_distance}. 
    Then according to Lemma~\ref{lemma_converge_to_truth_perfect_label}, there must exist a permutation $\pi$ such that $\hat{\mathrm{z}}_i=\pi(\mathrm{z}_i^{\ast\ast})$ for any $i=1,\dots,n$ with unique $\hat{\mathrm{z}}_i$ and $\mathrm{z}_i^{\ast\ast}$. 
    Since under Assumption~\ref{assumption_surface_zeromeasure}, the event $\{\hat{\mathrm{z}}_i \text{ or } \mathrm{z}_i^{\ast\ast} \text{ is not unique} \}$ has zero $\widetilde{\mathbb{P}}$-measure, then $\mathds{1}\big( \hat{\mathrm{z}}_i=t \big) =  \mathds{1}\big( \mathrm{z}_i^{\ast\ast}=\pi^{-1}(t) \big)$ holds for all $i=1,\dots,n$ almost surely, that is, 
    \begin{align*}
        \textnormal{Pr}\bigg(  \mathds{1}\big( \hat{\mathrm{z}}_i=t \big) =  \mathds{1}\big( \mathrm{z}_i^{\ast\ast}=\pi^{-1}(t) \big),\; \forall i=1,\dots,n \bigg) =1,\; \forall t=1,\dots,k.
    \end{align*}
    It follows that for any $t=1,\dots,k$,
    \begin{align*}
        \textnormal{Pr}\left( 
        \frac{\sum_{i=1}^{n} \mathds{1}( \hat{\mathrm{z}}_i=t )  \mathrm{x}_{ij} }{ \sum_{i=1}^{n} \mathds{1}( \hat{\mathrm{z}}_i=t ) } 
        =\frac{\sum_{i=1}^{n} \mathds{1}( \mathrm{z}_i^{\ast\ast}=\pi^{-1}(t) )  \mathrm{x}_{ij} }{ \sum_{i=1}^{n} \mathds{1}( \mathrm{z}_i^{\ast\ast}=\pi^{-1}(t) ) }
        \right) = 1. 
    \end{align*}
    Finally, combining the above two limitations lead to for any $t=1,\dots,k$,
    \begin{align*}
        \lim_{n\rightarrow\infty} \text{Pr}\left( 
       \left| \hat{\mu}_{tj} 
        - \mu_{\pi^{-1}(t),j}^{\ast\ast} \right| \leq \epsilon
        \right)=1, \; \forall \epsilon>0,
    \end{align*}
    which complete the proof. 
\end{proof}

\end{document}